\newtheorem{thm}{Theorem}
\newtheorem{lem}{Lemma}
\newtheorem{rem}{Remark}
\newtheorem{prop}{Proposition}
\newtheorem*{definition}{Form of approximation by SignReLU nets}
\newcommand{\thmref}[1]{Theorem~\ref{#1}}
\newcommand{\lemref}[1]{Lemma~\ref{#1}}
\newcommand{\propref}[1]{Proposition~\ref{#1}}
\def\RR{\mathbb{R}}
\def\EE{\mathbb{E}}
\def\E{\mathcal{E}}
\def\s{\sigma}
\def\NN{\mathbb{N}}
\def\RR{\mathbb{R}}
\def\ZZ{\mathbb{Z}}
\def\f{\frac}
\def\b{\bm}
\def\p{\partial}
\def\t{\theta}
\def\bk{\b{k}}
\def\bt{\b{\theta}}
\def\L{\mathcal{L}}
\def\W{\mathcal{W}}
\def\N{\mathcal{N}}
\def\A{\mathcal{A}}
\DeclareMathOperator*{\esssup}{ess\,sup}
\title{signrelu neural network and its approximation ability}
\author{
	{\bf Jianfei Li}\thanks{Department of Mathematics, City University of Hong Kong (\texttt{jianfeili2-c@my.cityu.edu.hk})}
	\and
	{\bf Han Feng}\thanks{Department of Mathematics, City University of Hong Kong (\texttt{hanfeng@cityu.edu.hk})}
	\and
	{\bf Ding-Xuan Zhou}\thanks{School of Mathematics and Statistics, The University of Sydney (\texttt{dingxuan.zhou@sydney.edu.au})}
}
\date{}
\begin{document}
\maketitle

\begin{abstract}
	Deep neural networks (DNNs) have garnered significant attention in various fields of science and technology in recent years. 
Activation functions define how neurons in DNNs process incoming signals for them. They are essential for learning non-linear transformations and for performing diverse computations among successive neuron layers.
	In the last few years, researchers have investigated the approximation ability of DNNs to explain their power and success. In this paper, we explore the approximation ability of DNNs using a different activation function, called SignReLU. Our theoretical results demonstrate that SignReLU networks outperform rational and ReLU networks in terms of approximation performance. Numerical experiments are conducted comparing SignReLU with the existing activations such as ReLU, Leaky ReLU, and ELU, which illustrate the competitive practical performance of SignReLU.
\end{abstract}






keywords: Deep neural networks, Activation function, Approximation power, SignReLU activation




\section{Introduction}


{\bf Deep learning} has become a critical method in developing AI for handling complicated real-world tasks that appeared in human societies.
Wide applications of deep learning including those in image processing \cite{liu2021review, jiao2019survey} and speech recognition \cite{nassif2019speech, santhanavijayan2021semantic} have
received great successes in recent years. Theoretical explanations for the success of deep learning have been recently studied from the point of view of approximation theory.

A fully connected deep neural network (DNN) of input $\b x=(x_1,x_2,\ldots, x_d)^\top \in \RR^d$ with depth $\L-1$ is defined as
\begin{align}\label{defnn}
    \Phi = \mathcal{A}_{\L} \circ \sigma \circ \mathcal{A}_{\L-1} \circ \sigma \circ \cdots \circ  \sigma \circ \mathcal{A}_2 \circ \sigma \circ \mathcal{A}_1 ,
\end{align}
where  $\mathcal{A}_i(\b x) := \b A_i \b x + \b b_i$ are affine transforms with weight matrices $\b A_i \in \RR^{d_{i} \times d_{i-1} }$ and bias vectors $\b b_i \in \RR^{d_{i}}$ and $\s:\RR\to \RR$ is an activation function acting on each element of input vectors. We call $\s \circ \mathcal{A}_i$ in $\Phi$ the $i$-th layer (hidden layer) with width $d_i$ and $\mathcal{A}_{\L}$ the output layer. We say a neural network has width $\W$ if the maximum width $\max_{1\leq i \leq L} \{d_i\}$ is no more than $\W$. We call the number of nonzero elements  of all $\b A_i$ and $\b b_i$ in the neural network the number of weights (size) of the neural network $\Phi$, denoted by $\N$. It is apparent that activations are a key ingredient of the nonlinearity of deep neural networks. 


 So far, many activation functions have been developed for various tasks. Neural networks in the early days utilized sigmoid-like activations, which however suffer from gradient vanishing problems \cite{bengio1994learning}, especially as neural networks become deep. To overcome this phenomenon, a new activation was found and popularized \cite{NairHinton2010, glorot2011deep}, called the rectified linear unit (ReLU), and from then on, artificial intelligence began to flourish with the development of deep learning techniques.
Despite the remarkable success of ReLU, it has a significant limitation that in each neuron ReLU squishes negative inputs to zeros which may cause information loss at the feed-forward stage, and some dead neurons appear during training .

To resolve such issues, some modifications of ReLU were proposed and gained widespread popularity, including LeakyReLU \cite{maas2013rectifier}, Parametric Linear Unit  (PReLU) \cite{he2015delving}, Softplus function \cite{dugas2000incorporating}, Exponential Linear Unit (ELU) \cite{clevert2015fast} and Scaled Exponential Linear Unit (SELU) \cite{klambauer2017self}. A lot of modified activations have the same expression as $\text{ReLU}(x) = \max\{x,0\}$ on $(0,\infty)$. For example, ELU and LeakyReLU are defined as
\begin{equation}\label{elu}
\text{ELU}(x;\alpha):=\left\{
\begin{array}{rcl}
&x,   &{\text{if } x \in [0,\infty),}\\
&\alpha(e^x-1),   &{\text{if } x \in (-\infty,0),}
\end{array} \right.
\quad
\text{LeakyReLU}(x;\alpha):=\left\{
\begin{array}{rcl}
&x,   &{\text{if } x \in [0,\infty),}\\
&\alpha x,   &{\text{if } x \in (-\infty,0).}
\end{array} \right.
\end{equation}
These modified activation functions show promising improvements on several tasks compared with ReLU \cite{nwankpa2018activation}.
Except for these monotonic activation functions, nonmonotonic activations, for example, Swish \cite{ramachandran2017searching}, Mish \cite{misra2019mish} and Logish \cite{zhu2021logish} were proposed and shown great performances in various tasks.

PDELU \cite{cheng2020parametric} is one of the most recently proposed activation functions, defined as,
\begin{equation}\label{SignReLU}
\text{PDELU}(x;\alpha,t)=\left\{
\begin{array}{rcl}
&x,   &{\text{if } x \in (0,\infty),}\\
&\alpha \left \{ \big[1+ (1-t)x \big]^{\f{1}{1-t}}-1 \right \},   &{\text{if } x \in (-\infty,0],}
\end{array} \right.
\end{equation}
with $\alpha$ and $ t$ controlling the slope and the degree of deformation, respectively. It is verified to have many desired properties, for example, speeding up the training process and possessing geometric flexibility. Its effectiveness is observed on many datasets and well-known neural network architectures (including NIN, ResNet, DenseNet) \cite{cheng2020parametric, nanni2021comparison, dubey2022activation}. SignReLU function \cite{lin2018research} is inspired by the softsign function, defined as:
\begin{equation}\label{sr}
\text{SignReLU}(x;\alpha)=\left\{
\begin{array}{rcl}
&x,   &{\text{if } x \in (0,\infty),}\\
&\alpha \f{x}{1+|x|},   &{\text{if } x \in (-\infty,0].}
\end{array} \right.
\end{equation}
It is easy to verify that SignReLU \eqref{sr} corresponds to the special case of PDELU \eqref{SignReLU} when $t = 2$. In some experiments, SignReLU improves the convergence rate and alleviates the gradient vanishing problem in image classification tasks \cite{lin2018research}.


	
    There exist lots of different explanations for choosing a proper activation function.
    In learning theory, the learning ability of a neural network is closely related to approximation error \cite{lu2021deep, shen2021deep}. In deep learning, learning tasks aim to find a proper model $\Phi(\b x; \b w)$ parameterized by $\b w \in \RR^\N$ which approximates target function $f(\b x)$ well. The performance of a learned model $\Phi(\b x; \hat{\b w})$ ($\hat{\b w}$ is learned through a learning algorithm) over a sampled dataset $Z = \left \{(\b x_i, \b f(\b x_i)) \right \}_{i=1}^{N}$ can be measured by a loss function $L(\b x,y)$. The generalization error and optimization error of model $\Phi$ with parameter $\b w$ is characterized by $\E( \b w):=\EE_{\b x}\left[ L\big(\Phi(\b x; \b w), f(\b x) \big)\right]$ and $\E_Z( \b w):= \f{1}{N}\sum_{i=1}^{N} L\big(\Phi(\b x_i; \b w), f(\b x_i) \big)$, respectively. Then, the performance of $\Phi(\b x; \hat{\b w})$ in learning theory can be upper-bounded by
	\begin{align}\label{gener_error}
	\E(\hat{\b w})
	\leq \E(\b w^*) + [ \E_Z (\hat{\b w}) - \E_Z ( \b w_N ) ] + \Big \{[\E(\hat{\b w}) - \E_Z(\hat{\b w})] + [\E_Z(\b w^*) - \E(\b w^*)]\Big \},
	\end{align}
	where $\b w^* := \mathop{\arg\min}_{\b w \in \RR^W} \E (\b w)$ is the parameter with the best generalization error and $\b w_N = \mathop{\arg\min}_{\b w \in \RR^W} \E_{ Z}( \b w)$ is the parameter with the smallest optimization error.
	The first, second, and third term in \eqref{gener_error} are called approximation error, optimization error, and generalization error, respectively. Obviously, controling approximation error $\E(\b w^*) $ is of great importance to control $\E(\hat{\b w}) $. See more details in \cite{lu2021deep, shen2021deep}.
	
    In practical experiments, to find a model suitable for learning tasks, one also needs to balance performance and efficiency. A lot of work for classification, object detection, and video coding attempts to compress the model while keeping the performance in order to improve inference time and lower memory usage \cite{qin2020u2,liu2020deep,howard2017mobilenets,sandler2018mobilenetv2}. The inference time and memory usage of a deep neural network depend heavily on the expression of the activation functions and the total number of parameters of deep neural networks.
    This kind of problem can be stated as characterizing the approximation error with the total number of parameters of deep neural networks. Therefore, it is worth focusing more on the approximation properties of valuable activation functions.

\subsection{Related work}
Theoretical studies on the approximation ability of deep neural networks with various activation functions have been developed in a large literature {\cite{zhou2018deep, abdeljawad2022approximations, guhring2020error, chui2019deep}}. Although neural networks are of great success in practical applications, existing theoretical results mainly focused on sigmoid type and ReLU activations.
When the activation function $\s$ is a $C^\infty$ sigmoid type function, which means $\lim_{x\to \infty}\s(x)=1$ and $\lim_{x\to -\infty}\s(x)=0$, the approximation rates were given by Barron \cite{Barron} for functions $f\in L_2({\RR}^d)$ whose Fourier transforms $\hat{f}$
satisfy a decay condition $\int_{\RR^d} |w| |\hat{f}(w)| d w<\infty$.
Another remarkable result (e.g. Mhaskar \cite{mhaskar1993approximation}) based on localized Taylor expansions asserts rates of approximation for functions from Sobolev spaces.
These results were developed by the localized Taylor expansion approach under the assumption that $\sigma$ satisfies $\sigma^{(k)}(\mu) \neq 0$ for some $\mu \in \RR$ and every $k \in \ZZ_{+}$. This condition is not satisfied by ReLU-type activations. 
Until recent years, approximation properties were established in \cite{Klusowski2018, MaoZhou} for shallow ReLU nets and in \cite{Yarosky, Grohs, Petersen} for deep ReLU nets for target functions from Sobolev spaces
and in \cite{shen2019deep} for general continuous functions.

Besides, related analysis has been also investigated for some other kinds of activation functions.
In \cite{shen2021deep}, the authors proposed Elementary Universal Activation Function (EUAF). They proved that all functions represented by a EUAF fully connected neural network (FNN) with a fixed structure are dense in the space of continuous functions, which was proved impossible with ReLU FNNs \cite{shen2019deep}. Unfortunately, EUAF is partly a nonsmooth periodic function which makes it not applicable in practice.
Another investigation about rational activations has been developed in \cite{boulle2020rational}. It was shown that rational neural networks learn smooth functions more efficiently than ReLU neural networks.
Meanwhile, some numerical experiments illustrated their potential power for solving PDEs and GAN. Due to the essential impact of activations on neural networks' performance, investigating the properties of activation functions is still an essential topic in deep learning research today.


In this paper, we study the approximation ability of SignReLU \eqref{sr}, and for simplicity, we fix $\alpha=1$. Precisely, we investigate the activation $\rho: \RR \rightarrow \RR$, given by
\begin{equation}\label{rho}
\rho(x):=\text{SignReLU}(x;1)=\left\{
\begin{array}{rcl}
&x,   &{\text{if } x \in [0,\infty),}\\
&\f{x}{1-x},   &{\text{if } x \in (-\infty,0).}
\end{array} \right.
\end{equation}

It is easy to see that $\rho$ is monotonically increasing on $\RR$ and $\rho(x) \rightarrow -1$ as $x \rightarrow -\infty$. Thus, it can be categorized as a ReLU-type activation function. Particularly, it enjoys a similar shape to ELU. The design on the negative part is the same as that of EUAF \cite{shen2021deep}. It is important since it allows SignReLU to represent the division gate as well as the product gate, which cannot be produced by ReLU neural networks with finite parameters.

\begin{figure}[http]
	\centering
	\subfigure[Activation functions]{\includegraphics[width=7cm]{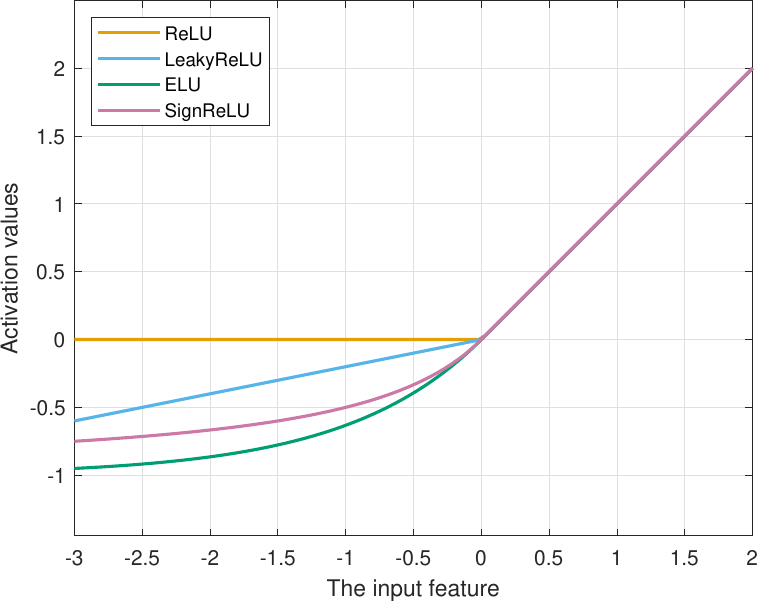}}
	\subfigure[Corresponding Gradients]{\includegraphics[width=7cm]{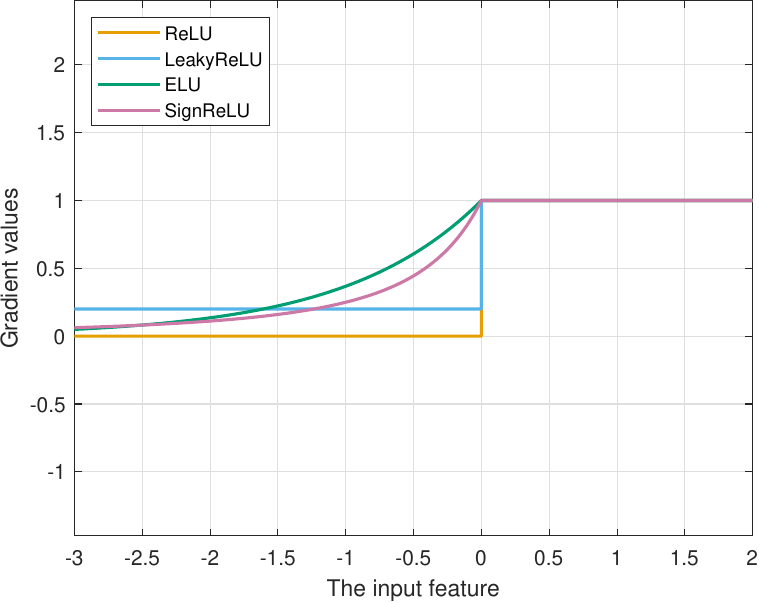}}
	\caption{Visualization of activation functions (ReLU, LeakyReLU [$\alpha =  0.2$], ELU [$\alpha = 1$] and SignReLU [$\alpha = 1$]) and their gradients.}
	\label{acts}
\end{figure}

Our main results in the following form quantify the structure (i.e., depth, layer, size) of SignReLU neural networks that guarantee certain approximation accuracy for a given target function. In this paper, all proofs are given in Appendix.

\begin{definition} Let $d \in \NN$. Let $\Omega \subset \RR^d$ and $f \in F$ be a function from function class $F$ defined on $\Omega$. For any $\varepsilon > 0$, there exists a function $\Phi$ implemented by a SignReLU neural network with (depth) $\L_{\Phi} = \L_{\Phi}(\varepsilon,f)$, (width) $\W_{\Phi} = \W_{\Phi}(\varepsilon,f)$ and (number of weights) $\N_{\Phi} = \N_{\Phi}(\varepsilon,f)$ such that
	$$|f(\b x)-\Phi(\b x)|\leq \varepsilon, \quad \forall\ \b x\in\Omega.$$
\end{definition}

When there exists a SignReLU neural network that equals $f$ on $\Omega$, then $\L_{\Phi} = \L_{\Phi}(f)$, $\W_{\Phi} = \W_{\Phi}(f)$ and $\N_{\Phi} = \N(f)$ only depends on the properties of $f$.

{\bf Contributions.} First, we characterize some basic properties that SignReLU neural networks possess, for example, realizing product gate and division gate, realizing rational functions, and approximating ReLU and exponential functions effectively. Then we show that given tolerance $\varepsilon>0$, SignReLU nets can approximate  Sobolev functions $W_p^r$ with non-zero parameters increasing at a rate of $\varepsilon^{-d/r}$. Moreover, the optimal approximation error for piecewise smooth functions is obtained. We also investigate when SignReLU neural networks overcome the curse of dimensionality, which is a crucial topic in machine learning. Our results show that when $\varepsilon$-approximating Korobov functions or BV functions, the dominant term in the size of neural networks is $\varepsilon^{-1/r}$, which is independent of the input dimension $d$. Finally, some numerical experiments are conducted on classification and image denoising tasks, illustrating competitive performances compared with the existing activations--ReLU, Leaky ReLU, and ELU. For the sake of convenience, we shall denote ReLU by $\sigma(x) = \max\{x,0\}$ in the rest of this paper.

\subsection{Notations}
Let us first clarify the basic notations to be used throughout the paper. Let $\RR$ denote all the real numbers, $\NN$ denote natural numbers, $\NN_{+}$ denote nonzero positive integers and $\ZZ$ denote the set of integers. Usually, we use $d$ or $d_i$ for some $i \in \NN$ to denote the dimension of a vector. We use boldface lowercase letters to denote a $d$-dimensional vector, for example, $\b x \in \RR^d$ with each element written as $x_i$, which means that $\b x = (x_1,x_2,\dots,x_d)^\top$. For some $\b x \in \RR^d$ and $\b n \in \ZZ^d$, we define $\b x^{\b n}$ as the operation $\prod_{i=1}^d x_i^{n_i}$. We denote the measure of a set $\Omega$ in $\RR^d$ as $\left|\Omega \right|$.

\section{Theoretical results for classifier functions}

Piecewise smooth functions are closely related to classification problems \cite{Petersen}. In image classification tasks, one needs to approximate a label for any given image $\b x$. If labels are from an integer set $\{1,2,\dots, K \}$, then the problem is to find the best piecewise constant function $f(\b x) = \sum_{i=1}^K i \chi_{\Omega_i}(\b x)$, which output $i$ if $\b x \in \Omega_i$ and zero otherwise. Function $\chi_{\Omega}$ is defined as $\chi_\Omega(\b x)=1$ if $\b x\in \Omega$, and zero otherwise. However, it is not easy to produce integer labels with neural networks. The most commonly used setting is to learn a distribution. If two images are close to each other, then it is reasonable to expect the probabilities of the labels to be close to each other. It means that one can assume that distributions are some Sobolev functions. Then the classification problems reduce to learn piecewise smooth functions $\sum_{i=1}^{L}f_i(\b x)\chi_{\Omega_i}(\b x)$, where $f_i$ are Sobolev functions and $\Omega_i \subset \RR^d$ are disjoint. Besides, smoothness also helps neural networks yield robust results since it is unlikely to be sensitive with respect to noisy images. 



\subsection{Basic properties of SignReLU neural networks}
This subsection is devoted to some useful and basic properties of SignReLU neural networks, which will be applied frequently in our approximation theory.
The first lemma shows that SignReLU neural networks are able to produce product/division gates.

\begin{lem}\label{times}
		Let $0<a<M$ be constants.
		\begin{enumerate}
			\item[\rm(i)] There exists a function $\Phi$ realized by a SignReLU neural network with $\L_{\Phi} = 4$, $\W_{\Phi} \leq 9$ and $\N_{\Phi} \leq 63$ such that
			\[\Phi(x,y) = xy,\quad \forall  x,y \in [-M,M].\]
			\item[\rm(ii)] There exists a function $\Phi$ realized by a SignReLU neural network with $\L_{\Phi} = 6$, $\W_{\Phi} \leq 9$ and $\N_{\Phi} \leq 71$ such that
			\[\Phi(x,y) = \f y x, \quad \forall  x \in [a,M], \ y \in [-M,M].\]
		\end{enumerate}
\end{lem}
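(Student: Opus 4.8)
The plan is to build both gates from a single observation: the negative branch of $\rho$ is a reciprocal in disguise. For any $\alpha>0$ and any $t\ge\alpha$ we have $1-t/\alpha\le 0$, so $\rho$ uses its Möbius branch and
\[
\tfrac1\alpha\bigl(1+\rho(1-t/\alpha)\bigr)=\tfrac1\alpha\Bigl(1+\tfrac{1-t/\alpha}{t/\alpha}\Bigr)=\tfrac1\alpha\cdot\tfrac\alpha t=\tfrac1t .
\]
Thus $t\mapsto 1/t$ on $[\alpha,\infty)$ is computed by one DLU neuron sandwiched between affine maps; replacing $t$ by $-t$ gives $1/t=-\tfrac1\alpha\bigl(1+\rho(1+t/\alpha)\bigr)$ on $(-\infty,-\alpha]$. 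I will call either of these a \emph{reciprocal gate}.

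The one non-obvious step is to square with reciprocal gates, circumventing the circularity ``a product needs a product''. For $u\in[-R,R]$ set $a:=2R$; then $u+a\in[R,3R]$ is positive and $u-a\in[-3R,-R]$ is negative, so one hidden layer of two reciprocal gates yields $\tfrac1{u+a}$ and $\tfrac1{u-a}$. Their difference
\[
q:=\tfrac1{u+a}-\tfrac1{u-a}=\tfrac{2a}{a^{2}-u^{2}}
\]
is positive and satisfies $q\ge 1/R$ throughout the range, so a second hidden layer consisting of one reciprocal gate produces $1/q=(a^{2}-u^{2})/(2a)$, and the output affine map returns $a^{2}-2a\cdot(1/q)=u^{2}$. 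This is a depth-$2$ DLU net for $u\mapsto u^{2}$ on $[-R,R]$, and, importantly, its output has no leftover term linear in $u$.

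For part (i) I would invoke the polarization identity $xy=\tfrac14\bigl((x+y)^{2}-(x-y)^{2}\bigr)$, run the squaring net on $u=x+y$ and on $u=x-y$ in parallel (so $R=2M$), sharing the two hidden layers, and take one quarter of the difference in the output layer; since neither square produces a linear term, the output depends only on the second-layer neurons, so the total depth is $2$, and a direct count of the four first-layer and two second-layer neurons together with the affine maps confirms that the construction fits within width $9$ and $45$ weights. For part (ii) I would write $\tfrac yx=y\cdot\tfrac1x$: a first hidden layer computes $w:=1/x$ by a reciprocal gate (legitimate because $x\in[a,M]$ is positive and bounded below by $a$) and, in a second neuron, passes $y$ through unchanged via $\rho(y+C)-C$ with $C>M$; feeding the pair $(y,w)$ into the depth-$2$ product net of part (i), taken over a box large enough to contain $[-M,M]\times[1/M,1/a]$, outputs $yw=y/x$. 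This yields depth $3$, and the same bookkeeping shows it fits within width $9$ and $51$ weights.

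The crux — and the only genuinely new ingredient — is the squaring step: one must notice that DLU's Möbius branch is an exact reciprocal and that $\tfrac1{u+a}-\tfrac1{u-a}$ collapses to $\tfrac{2a}{a^{2}-u^{2}}$, so that one further reciprocal recovers $u^{2}$. Everything after that is routine; the only care needed is to keep every argument fed to a reciprocal gate strictly on one side of $0$ with an explicit margin (this dictates the choice of the shift $a$ and the constant $C$) and to track the affine maps precisely enough to meet the stated width and weight budgets.
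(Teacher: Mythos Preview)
Your proposal is correct and takes a genuinely different route to the squaring gate than the paper does. Both approaches share the reciprocal observation $\tfrac1\alpha\bigl(1+\rho(1-t/\alpha)\bigr)=1/t$ for $t\ge\alpha$, and both reduce part~(ii) to ``reciprocal then multiply''. The divergence is in how $u\mapsto u^{2}$ is built. The paper works on $[-1,1]$ and exploits the explicit identity
\[
12\,\rho\!\bigl(1-12\rho(-x-1)+12\rho(-x-2)\bigr)=x^{2}+5x-6,
\]
then cancels the affine part $5x-6$ with a parallel width-$1$ pathway, obtaining $x^{2}$ with depth~$2$, width~$3$; polarization via three squares $((x+y)^{2}-x^{2}-y^{2})/2$ gives width~$9$. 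Your construction instead uses a partial-fractions idea: two first-layer reciprocals yield $q=\tfrac1{u+a}-\tfrac1{u-a}=\tfrac{2a}{a^{2}-u^{2}}$, and one more reciprocal in the second layer recovers $u^{2}=a^{2}-2a/q$ exactly, with no linear residue to cancel. Combined with the two-square polarization $((x+y)^{2}-(x-y)^{2})/4$, this actually lands well under the stated budgets (width~$4$ rather than~$9$, and roughly half the weights), so the lemma's bounds are comfortably met. The paper's identity is more ad~hoc but fully explicit; your approach is more conceptual and slightly more economical, at the cost of having to track the sign and lower bound of $q$ (which you do).
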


Based on the above results, we are able to construct SignReLU networks to implement polynomials and rational functions.

\begin{lem}\label{ra}
		Let $M>0$ and $m,n \in \NN_+$. Let $p, q$ be polynomials with degrees at most $n$ and $m$, respectively. If  $q$ has no roots on $[-M,M]$, then there exists a function $\Phi$ realized by a SignReLU neural network with $\L_{\Phi} = O(\max\{n,m\})$, $\W_{\Phi} = O(1)$ and $\N_{\Phi} = O(\max\{n,m\})$ such that
		$$\Phi(x) = \f{p(x)}{q(x)},\quad \forall x \in [-M,M].$$
\end{lem}

\lemref{ra} shows the superiority of SignReLU for realizing polynomials and rational functions, compared with ReLU. Results in \cite{liang2016deep} show that at least $O(\ln(\varepsilon^{-1})) $ parameters are needed when using a ReLU neural network to $\varepsilon$-approximate $x^2$ on $[-1,1]$ .

The following lemma shows how ReLU can be approximated by SignReLU neural networks.

\begin{lem}\label{relu}
		Let $m,n \in \NN_+$.
		\begin{enumerate}
			\item[\rm(i)] For any $ \varepsilon>0$, there exists a function $\Phi$ realized by a SignReLU neural network  with $\L_{\Phi} = 1$ and $\W_{\Phi} = 1$ such that
			\[|\sigma(x)-\Phi(x)|\leq \varepsilon  \quad \forall  x \in \RR .\]
			
			\item[\rm(ii)] There exists a function $\Phi$ realized by a SignReLU neural network with $\L_{\Phi} = m$ and $\W_{\Phi} = 1$ such that
			\begin{align*}
			|\sigma(x)-\Phi(x)| &\leq  m^{-1}, \quad \forall  x \in \RR .
			\end{align*}
			
			\item[\rm(iii)] There exists a function $\Phi$ realized by a SignReLU neural network with $\L_{\Phi} = m$ and $\W_{\Phi} = 1$ such that
			\begin{align*}
			|\sigma(x)-\Phi(x)| &\leq  (mn)^{-1}, \quad \forall  x \in \RR.
			\end{align*}
		\end{enumerate}
\end{lem}

Lemma 1 in \cite{boulle2020rational} says that to approximate ReLU with tolerance $\varepsilon$, the size of rational neural networks needed is of order at least $  C\ln \ln \left(\varepsilon^{-1}\right)$ for some $C>0$. In comparison, SignReLU can approximate ReLU uniformly on $\RR$  instead of $[-1,1]$ with fixed size (\lemref{relu} (i)).


The exponential function is widely utilized in approximation and regression problems with Gaussian reproducing kernel Hilbert space \cite{smale-zhou-2007, RKHS-Gauss}. In the following proposition, we particularly give the power of SignReLU networks for approximating exponential functions.

\begin{prop}\label{exp}
    Let $M>0$ and $d \in \NN_+$.
     	\begin{enumerate}
			
			\item[\rm(i)] For any $ \varepsilon>0$, there exists a function $\Phi$ realized by a SignReLU neural network  with $\L_{\Phi} = O\left(\ln(\varepsilon^{-1})\right)$,
	$\W_{\Phi} = O(1)$ and $\N_{\Phi} = O\left(\ln(\varepsilon^{-1})\right)$ such that
	\begin{align*}
	\Big|e^{-||\b x||_1}- \Phi(\b x) \Big| &\leq  \varepsilon, \quad \forall \b
	x \in \RR^d,
	\end{align*}
	where $\|\b x\|_1=\sum_{j=1}^d |x_j|$.

			\item[\rm(iii)] For any $ \varepsilon>0$, there exists a function $\Phi$ realized by a SignReLU neural network  with $\L_{\Phi} = O\left(\ln(\varepsilon^{-1})\right)$,
	$\W_{\Phi} = O(1)$ and $\N_{\Phi} = O\left(\ln(\varepsilon^{-1})\right)$ such that
	\begin{align*}
	\Big|e^{-||\b x||_2^2}- \Phi(\b x) \Big| &\leq  \varepsilon, \quad \forall \b
	x \in [-M, M]^d,
	\end{align*}
	where $\|\b x\|_2=\sqrt{\sum_{j=1}^d |x_j|^2}$.

		\end{enumerate}
\end{prop}


\begin{rem}\label{rem1}
Applying the same argument, for arbitrary real constants $a, b$ and non-zero $c$, a Gaussian function $f(x)=a e^{-(x-b)^2/c}$ can be approximated by an $L$-layer SignReLU network with accuracy $e^{-L}$ which is exponentially decreasing as well.
\end{rem}


Let us recall the approximation results of ReLU neural networks and rational neural networks and compare them with approximation properties of SignReLU neural networks. Telgarsky discussed approximation relationships between ReLU neural networks and rational functions \cite{telgarsky2017neural}. To approximate a rational function with accuracy $\varepsilon$ by a ReLU neural network, the size needed is of order $O\left((\ln(\varepsilon^{-1}))^{3}\right)$. \lemref{times} and \lemref{ra} show that using a SignReLU neural network instead, the size for approximating a given rational function is independent of $\varepsilon$, which only depends on the degree of the rational function. 

Rational neural networks, activated by rational functions, could be more powerful than ReLU neural networks due to the following results obtained in \cite{boulle2020rational},
\begin{align}\label{bou}
\begin{aligned}
\text{Rational}\left [ \ln\ln (\varepsilon^{-1}) \right] &\leq \text{ReLU} \leq  \text{Rational}\left [ \ln\ln (\varepsilon^{-1}) \right], \\
\text{ReLU}\left [ \ln (\varepsilon^{-1}) \right] &\leq \text{Rational} \leq  \text{ReLU}\left [ (\ln (\varepsilon^{-1}))^3 \right],
\end{aligned}
\end{align}
where we use the expression \emph{$\text{Rational} \leq \text{ReLU}\left[ \N(\varepsilon) \right]  $} to represent when approximating a rational neural network by a ReLU neural network within the tolerance of $\varepsilon$, the needed size of ReLU neural networks is at most $C\N(\varepsilon)$ for some constant $C$. Conversely, the expression \emph{ $\text{ReLU}\left[ \N(\varepsilon) \right] \leq \text{Rational}$} represents that any ReLU neural network with the size less than $C\N(\varepsilon)$ for some constant $C$ cannot approximate the given rational neural network within tolerance $\varepsilon$.

Based on \lemref{ra}, \lemref{relu}, we are able to obtain the following improved approximation results by using SignReLU neural networks
\begin{align}\label{our}
    \begin{aligned}
    \text{SignReLU}\left [ 1 \right] &\leq \text{ReLU} \leq  \text{SignReLU}\left [ 1 \right], \\
    \text{SignReLU}\left [ 1 \right] &\leq \text{Rational} \leq  \text{SignReLU}\left [ 1 \right].
\end{aligned}
\end{align}



The above comparison suggests that SignReLU could be more powerful than ReLU and rational activation functions.
A key difference between SignReLU and other mentioned activations is its strong ability to approximate ReLU (\lemref{relu}) and product/division gates. Other activations do not possess these properties simultaneously. 

\thmref{relunets} theorectically verifies the claimed properties in \eqref{our}.

\begin{thm}\label{relunets}
            Let $d\in \NN_+$. Denote $\Phi_R$ / $\Phi_{\sigma}$ a rational neural network / ReLU neural network with depth $\L_{\Phi_R}$ / $\L_{\Phi_{\sigma}}$, width $\W_{\Phi_R}$ / $\W_{\Phi_{\sigma}}$ and number of weights $\N_{\Phi_R}$ /$\N_{\Phi_{\sigma}}$, respectively.
		\begin{enumerate}
			\item[\rm(i)] There exists a function $\Phi$ realized by a SignReLU neural network with $\L_{\Phi} = O(\L_{\Phi_R}) $, $\W_{\Phi}=O(\W_{\Phi_R}) $ and $\N_{\Phi}=O(\N_{\Phi_R}) $ such that
			\[\Phi(\b x) = \Phi_R(\b x),\quad \forall \b x \in [-1,1]^d.\]
			\item[\rm(ii)] For any $\varepsilon>0$, there exists a function $\Phi$ realized by a SignReLU neural network with $\L_{\Phi} = \L_{\Phi_{\sigma}} $, $\W_{\Phi}=\W_{\Phi_{\sigma}} $ and $\N_{\Phi}=\N_{\Phi_{\sigma}} $ such that
			\[ \left| \Phi(\b x) - \Phi_\sigma(\b x) \right| \leq \varepsilon,\quad \forall \b x \in \RR^d.\]
		\end{enumerate}
\end{thm}

\subsection{Approximation of weighted Sobolev smooth functions}\label{Fnd}

For $\bk \in \NN^d$ and $\b x \in [-1,1]^d$, let
\[D^{\bk}f= \frac{\partial^{|\bk|_1} f}{\partial x_1^{k_1}\cdots \p x_d^{k_d}} \ \text{and} \ w(\b x)=2^d\prod_{j=1}^d (1-x_j^2)^{-1/2}.\]
We consider the weighted Sobolev space $W^r_{p}([-1,1]^d, w)$ with $r>0$ and $1\leq p<\infty$ defined by locally integrable
functions on $[-1,1]^d$ with  norm
\[\|f\|_{W^r_{p}}:= \sum_{|\b k|_1\leq r}\left[\int_{[-1,1]^d}|D^k f(\b x)|^p w(\b x)d\b x\right]^{1/p} <\infty, \]
and $W^{r}_{\infty}\left([0,1]^{d}\right)$ by functions $f\in C\left([0,1]^d\right)$ with norm
\[\|f\|_{W^r_\infty}:= \max _{\b{k}:|\b{k}|_1 \leq r} \esssup_{\b x \in[0,1]^{d}}\left|D^{\b{k}} f(\b{x})\right| < \infty,\]
where $|\b{k}|_1=k_{1}+\ldots+k_{d}$.

The error between target functions and neural networks is measured by
\begin{align*}
    \|f\|_{p,w,\Omega} &:= \left[ \int_{\Omega} |f(\b x)|^p w(\b x) d \b x \right]^{1/p} ,\\
    \|f\|_{\infty,w,\Omega} &:= \esssup_{\b x \in \Omega} \left|  |f(\b x)| \right|.
\end{align*}
For simplicity, we denote $\|\cdot\|_{p,w,\Omega}:= \|\cdot \|_{p,w}$ or $\|\cdot\|_{p,w,\Omega}:= \|\cdot \|_{p}$, if $w(\b x) \equiv 1$ or $\Omega$ is clear from the context. Notice that $\|\cdot\|_p$ is the classical $L_p$ norm.


The motivation for introducing weighted spaces is from a technical perspective, which allows us to apply approximation analysis by trigonometric polynomials, instead of localized Taylor expansions for developing approximation results. During changing variables, the weight function $w(\b x)$ will appear. Compared to techniques used in \cite{boulle2020rational, Yarosky}, our results extend $p=\infty$ to $p \in [1,\infty)$ and can be directly applied to all other activation functions that are able to realize polynomial/rational functions. Besides, the constant of the obtained size $O(\varepsilon^{-d/r})$ in our result is $CC_rd$, for some constants $C, C_r>0$ where the constant $C_r$ depends only on $r$. This is better than those that appeared in \cite{boulle2020rational, Yarosky}.

In the following result, we shall show the approximation ability of SignReLU neural networks to functions in the weighted Sobolev spaces. It shows that with SignReLU activation an improved rate can be achieved.
\begin{thm}\label{sobolev}
	Let $d, r\in \NN_+$ and $1\leq p < \infty$.

        \begin{enumerate}
            \item [\rm(i)] Let $f$ from the unit ball of $ W^r_p([-1,1]^d,w)$.

            For any $\varepsilon>0$, there exists a function $\Phi$ realized by a SignReLU neural network with $\L_{\Phi}=O(\varepsilon^{-\f{1}{r}})$, $\W_{\Phi} = O(\varepsilon^{-\f{d}{r}})$ and $\N_{\Phi} = O( \varepsilon^{-\f{d}{r}})$, such that
	$\|f-\Phi\|_{p,w}\leq \varepsilon$.
	
	Moreover, the constant factor of $\N_{\Phi}$ only depends on $d,r$ and can be at most $CC_rd$, where the constant $C>0$ and the constant $C_r$ only depends on $r$.

            \item[\rm{(ii)}] Let $f$ from the unit ball of $ W^r_\infty([0,1]^d)$.

            For any $\varepsilon>0$, there exists a function $\Phi$ realized by a SignReLU neural network with $\L_{\Phi}=O(1)$ and $\N_{\Phi} = O( \varepsilon^{-\f{d}{r}})$, such that
	$\|f-\Phi\|_\infty \leq \varepsilon$.
        \end{enumerate}
	
\end{thm}

The improvement on the constant factor $C_rd$ is significant in the $L_{p,w}$ case. In fact, to achieve the same accuracy $\varepsilon\in(0,1)$ under $L_\infty$ norm, Theorem 1 of \cite{Yarosky} asserts that
	$f\in W^r_\infty([0,1]^d)$ can be approximated by a ReLU deep
	net with at most $c(\ln(\varepsilon^{-1})+1)$ layers and at most $c (\varepsilon^{-\f{d}{r}}\ln(\varepsilon^{-1})+1)$ computation
	units with a constant $c := c_{d, r}$. However, the constant $c$ here
	increases much faster as $d$ becomes large. More specifically, as pointed out in \cite{Zhou,suh}, the main approach in \cite{Yarosky} is to
	approximate $f$ by a localized Taylor polynomial, which leads to the constant $c$ at least $2^d$ when $d$ is large. 

    The following theorem shows the rate obtained by using SignReLU for approximating Sobolev functions is optimal. We denote $B_1(F)$ the unit ball of any given function class $F$ centered at $0 \in F$.
	\begin{thm}[\cite{devore1989optimal}]\label{de}
		Let $d,r\in \NN_+$. Let $\N>0$ be an integer and $\Psi:\RR^\N \rightarrow C([0,1]^d)$ be an arbitrary mapping. Assume that there is a continuous map $w : B_1(W^r_\infty)\rightarrow \RR^\N$ such that $\| f- \Psi(w(f)) \|_\infty \leq \varepsilon$ for all $f \in B_1(W^r_\infty([0,1]^d))$. Then $\N \geq c_r \varepsilon^{-\f{d}{r}}$ with $c_r$ be a constant only depends on $r$.
	\end{thm}
	
	When we fix $r$ and $d$, \thmref{sobolev} and \thmref{de} show that the obtained bounds $\N_{\Phi}$ of SignReLU neural networks that achieve tolerance $\varepsilon$ can not be improved under the hypothesis of continuous weight selection.



\subsection{Approximation of piecewise smooth functions}\label{psf}

\begin{figure}[tbp]
	\centering
	\subfigure[$f_1(x,y) = e^{x+y-1}$]{\includegraphics[width=5.5cm]{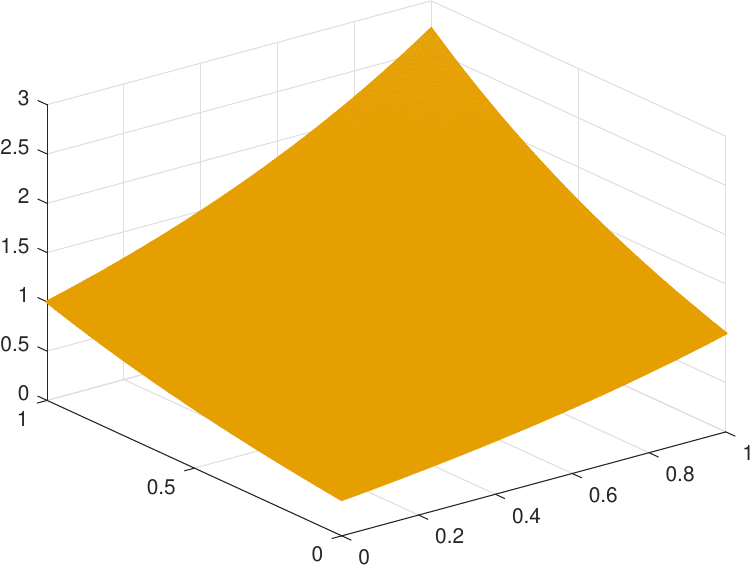}}\quad
	\subfigure[$f_2(x,y) = 2(x-0.5)^2 + 2(y-0.5)^2-3$]{\includegraphics[width=5.5cm]{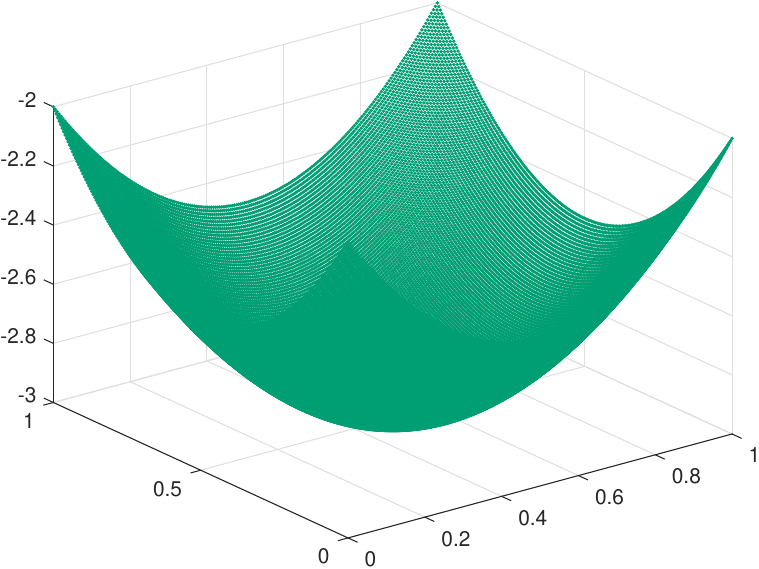}}
	\\
	\subfigure[$\chi_\Omega$]{\includegraphics[width=5.5cm]{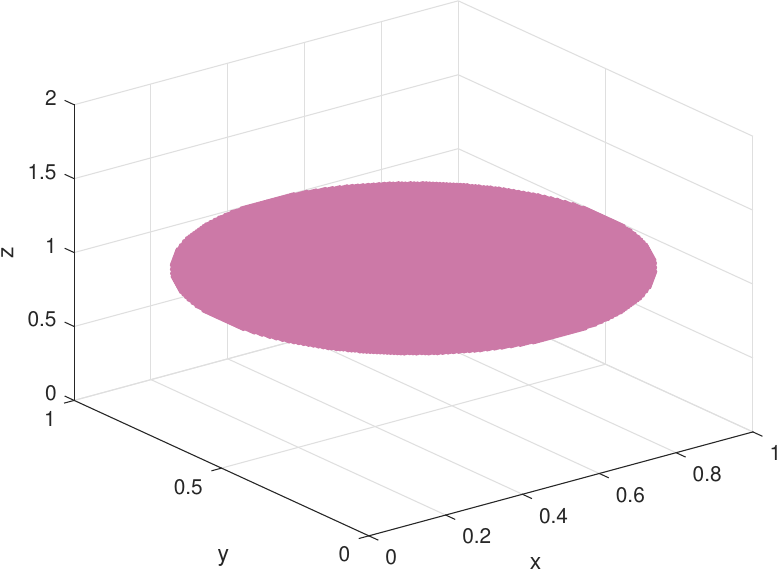}}\quad
	\subfigure[$f_1 + f_2\chi_\Omega$]{\includegraphics[width=5.5cm]{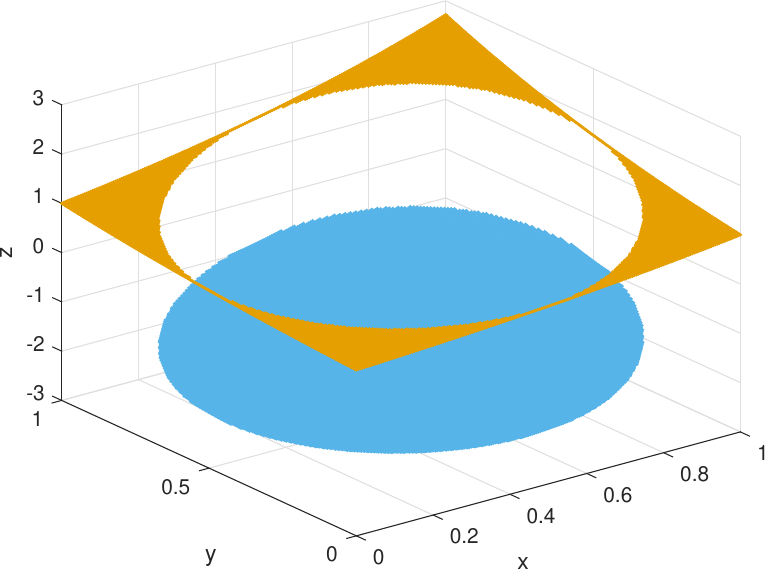}}
	\caption{Visualization of $f_1$, $f_2$, $\chi_\Omega$ and $f = f_1 + f_2 \chi_{\Omega}$ with the set $\Omega$ defined as $\Omega = \left\{ (x,y)\in[0,1]^d: (x-0.5)^2+(y-0.5)^2 < 0.25\right\}$. The orange/green/purple/blue region represents the value of the function $f_1$/$f_2$/$1$/$f_1+f_2$ on the corresponding region.  }
	\label{pieces}
\end{figure}

In this subsection,
we consider piecewise functions.  Given a function space $F$ on $[-1,1]^d$ and a collection $A$ of subsets of $[-1,1]^d$, the collection of piecewise functions is defined as
\[S(F, A) = \{f_1 + f_2 \chi_\Omega: f_1, f_2 \in F, \Omega \in A, \|f_2\|_{L_\infty\left([-1,1]^d\right)}\leq 1   \}.\]

We consider the collection $A$ as a collection of level sets by
\[ \Omega = \left \{\b x \in [-1,1]^d: h(\b x)< g(\b x) \right  \}  ,\]
where $h,g$ are SignReLU neural networks that can $\varepsilon$-approximate some functions in $F$. 
Results in \thmref{sobolev} show that using networks to define $A$ will not influence its generality. If the function space $F$ is defined on $[0,1]^d$, then in $S(F,A)$, the collection $A$ will be modified accordingly.

For example, we choose $F = W^r_\infty\left([0,1]^2\right)$, $g(x,y) = -(x-0.5)^2-(y-0.5)^2$, and $h(x,y)\equiv -0.25$. By \lemref{ra}, functions $h$ and $g$ can be realized by some SignReLU neural networks. Hence, we have $\Omega = \left\{ (x,y)\in[0,1]^d: (x-0.5)^2+(y-0.5)^2 < 0.25\right\}$, which is a region bounded by a circle. Let $f_1(x,y) = e^{x+y-1}$ and $f_2(x,y) = 2(x-0.5)^2 + 2(y-0.5)^2-3$ and define $f := f_1 + f_2 \chi_{\Omega}$. Then obviously, $f/3  \in S(F,A)$. See Figure~\ref{pieces} for illustration. In fact, any choice of $f_1, f_2 \in F$ makes $f \in S(F,A)$.

Since the collection $A$ depends on the function class $F$, we use the notation $S(F):=S(F, A)$ for short, and when functions in $F$ have some smooth properties, we call functions in $S(F)$ piecewise smooth functions induced by $F$. 

\begin{thm}\label{piecewise smooth}
	Let $d,r \in \NN_+$ and $p\geq 1$.
\begin{enumerate}
    \item [\rm{(i)}] Let $f \in S\left(B_1\left(W^r_p([-1,1]^d,w)\right) \right)$.

    For any $\varepsilon>0$, there exist a set $\Omega_\varepsilon \subset [-1,1]^d$ and a function $\Phi$ realized by a SignReLU neural network with $\L_{\Phi} = O(\varepsilon^{-\f{1}{r}})$, $\W_{\Phi} = O(\varepsilon^{-\f{d}{r}})$ and $\N_{\Phi} = O(\varepsilon^{-\f{d}{r}})$ such that
		\begin{equation*}
		\|f-\Phi\|_{p,w,[-1,1]^d\backslash \Omega_\varepsilon }\leq \varepsilon,
		\end{equation*}
        and $|\Omega_\varepsilon |\rightarrow 0$, as $\varepsilon \rightarrow 0 $.

        \item [\rm{(ii)}] Let $f \in S\left(B_1\left(W^r_\infty([0,1]^d)\right) \right)$.

    For any $\varepsilon>0$, there exist a set $\Omega_\varepsilon \subset [0,1]^d$ and a function $\Phi$ realized by a SignReLU neural network with $\L_{\Phi} = O(1)$ and $\N_{\Phi} = O(\varepsilon^{-\f{d}{r}})$ such that
		\begin{equation*}
		\|f-\Phi\|_{\infty,[0,1]^d\backslash \Omega_\varepsilon }\leq \varepsilon,
		\end{equation*}
        and $|\Omega_\varepsilon |\rightarrow 0$, as $\varepsilon \rightarrow 0 $.

\end{enumerate}

\end{thm}

\thmref{piecewise smooth} is easy to be extended from a binary classification setting $(f_1 + f_2)\chi_{\Omega} + f_1\chi_{\Omega^c}$ to multi-classification setting $\sum_{i=1}^{K}f_i\chi_{\Omega_i}$, with width and the total number of weights growing at a rate of several times, which only depends on $K$.
 Since we have $B_1\big(W^r_\infty ([0,1]^d)\big) \subset S\left(B_1\big(W^r_\infty ([0,1]^d)\big)\right)$, \thmref{piecewise smooth} cannot be improve under the hypothesis of \thmref{de}.

\subsection{Approximation with milder dependence on dimensionality}\label{BV}

For approximating Sobolev functions with regularity $r$ and a pre-assigned accuracy $\varepsilon$, the required size of SignReLU nets is $ O(\varepsilon^{-\f{d}{r}})$, which increases exponentially with respect to input dimension $d$. In real-world image applications, the dimension $d$ of an image is usually larger than $100\times100\times 3$. To achieve tolerance $\varepsilon = 0.1$, the size of neural networks is almost $10^{30000/r} $, which is only applicable when $r$ is very large. In this subsection, we attempt to break the curse of dimensionality in approximating multivariate functions by employing a ``tensor-friendly'' structure. 

 Here breaking the curse of dimensionality means the approximation rate or complexity rate of a model can be merely impacted by the dimension of inputs. To achieve it, we will introduce a space of functions with finite mixed derivative norms, which is sometimes referred to as a Korobov space.
Related investigations with ReLU activation have been achieved in \cite{montanelli2019new, ZhouMao}.

The ability of using Korobov space to overcome the curse of dimensionality is followed from mixed derivatives and hyperbolic approximations \cite{dung2018hyperbolic}. When using polynomials for approximating Sobolev functions, there will be $\# \{ \b n \in \NN^d: \|\b n\|_\infty  \leq N \} \approx N^d$ multi-dimensional monomials required. Denote $\|\b n\|_{\pi}=\prod_{j=1}^d \max\{1, n_j\} $. Then only $\# \{ \b n \in \NN^d: \|\b n\|_\pi  \leq N \} \approx N(\ln N)^{d-1}$ monomials are involved for approximating Korobov functions, which increase almost of order $N$, instead of $N^d$.

The weighted Korobov space $K^r_{p}([-1,1]^d, w)$ with $r\in \NN$ and $1\leq p<\infty$ is defined by locally integrable
functions on $ [-1,1]^d$ with  norm
\[\|f\|_{K^r_{p}}:= \sum_{\|\b k\|_\infty\leq r}\left[\int_{[-1,1]^d}|D^k f(\b x)|^p w(\b x)d\b x\right]^{1/p} <\infty. \]
The approximation theory related to Korobov space was also considered in \cite{montanelli2019new, ZhouMao,dung2018hyperbolic}.

The following theorem shows that the dominant term of complexity rate is free of the input dimension.

\begin{thm}\label{koborov}
	Let $d, r\in \NN_+$, $ p \geq 1$ and $\beta(r,d)=(2dr+d+r)/r$.
	\begin{enumerate}
		\item Let $f$ from the unit ball of $K^r_p([-1,1]^d,w)$.

            For any $\varepsilon>0$, there exists a function $\Phi$ realized by a SignReLU neural network with $\L_{\Phi} = O\left((\log_2 \varepsilon^{-1})\right)$ and $\N_{\Phi} =O\left(\varepsilon^{-\f{1}{r}}(\log_2 \varepsilon^{-1})^{\beta(r,d)}\right) $ such that
		$\|f-\Phi\|_{p,w} \leq \varepsilon$.
		
		\item Let $f \in S\left(B_1(K^r_p([-1,1]^d)) \right)$.

            For any $\varepsilon>0$, there exist a set $\Omega_\varepsilon \subset [-1,1]^d$ and a function $\Phi$ realized by a SignReLU neural network with $\L_{\Phi} = O\left(\log_2 (\varepsilon^{-1})\right)$ and $\N_{\Phi} = O\left(\varepsilon^{-\f{1}{r}}(\log_2 \varepsilon^{-1})^{\beta(r,d)}\right)$ such that $\|f-\Phi\|_{p,w,[-1,1]^d\backslash \Omega_\varepsilon }\leq \varepsilon$, and $|\Omega_\varepsilon |\rightarrow 0$, as $\varepsilon \rightarrow 0 $.
	\end{enumerate}
\end{thm}

Sparse grids are employed to break the curse of dimensionality of ReLU neural networks for approximating functions from Koborov space \cite{montanelli2019new} for $r=2$. \thmref{koborov} extends the results to $r\neq 2$ and piecewise Korobov functions.

The above result can be further improved by restricting functions in the form of rank-one tensor. For $r = 1,2,\dots$, let $V^r [0,1]$ be the space of functions that satisfy $g$, $g'$, $\dots$, $g^{(r-1)}$ are absolutely continuous on $[0,1]$ and $g^{(r)}$ is of bounded variation not exceeding $1$. 
We denote a set of functions on $[0,1]^d$ as
\begin{align*}
	\mathcal{V}^r_d =& \Big\{f(\b x) = \prod_{i=1}^{d} f_i(x_i): f_i \in V^{r_i} [0,1], \\
	& \quad \|f_i\|_\infty \leq 1,
	 r_i \leq r \Big\}.
\end{align*}

\begin{thm}\label{BVfunctions}
	Let $d,r\in \NN_+$. For any $\varepsilon>0$ and $f \in \mathcal{V}^r_d$, there exists a function $\Phi$ realized by a SignReLU neural network with $\L_{\Phi} = O( \varepsilon^{-\f{1}{r+1}} )$, $\W_{\Phi} = O(1)$ and $\N_{\Phi} = O( \varepsilon^{-\f{1}{r+1}} )$ such that $\|f-\Phi\|_{\infty} < \varepsilon$.
\end{thm}

The improvement comes from the classical rational approximation theory.

\subsection{Convergence rate of approximating continuous function}\label{continuous}
SignReLU neural networks are able to approximate general continuous functions. The approximation error will be estimated in terms of the modulus of continuity. For $t>0$, we define the modulus of continuity as
\[ w_f^i(t):= \sup\Big\{|f(\b x) - f(\b y)|: | x_i -  y_i| \leq t, \forall j\neq i ,x_j = y_j, \b x, \b y \in  [0,1]^d \Big \}, \]
which can be bounded by the classical modulus $w_f(t):= \sup\{|f(\b x) -f( \b y)|: \b x, \b y \in  [0,1]^d , \|\b x - \b y\|_2 \leq t \}$.

\begin{thm}\label{continuous1}
	Let $d,N\in \NN_+$. For any continuous function $f$ on $[0,1]^d$, there exists a function $\Phi$ realized by a SignReLU neural network with $\L_{\Phi} = O(N)$, $\W_{\Phi} = O(N^d)$ and $\N_{\Phi} = O(N^d)$ such that $||\Phi - f||_{L_{\infty}\left([0,1]^d\right)}\leq \f{5}{4} \sum_{i=1}^{^d} w_f^i(\f{1}{N})$.
\end{thm}

\begin{figure*}[ht]
	\centering
	\subfigure[]{\includegraphics[width=5cm]{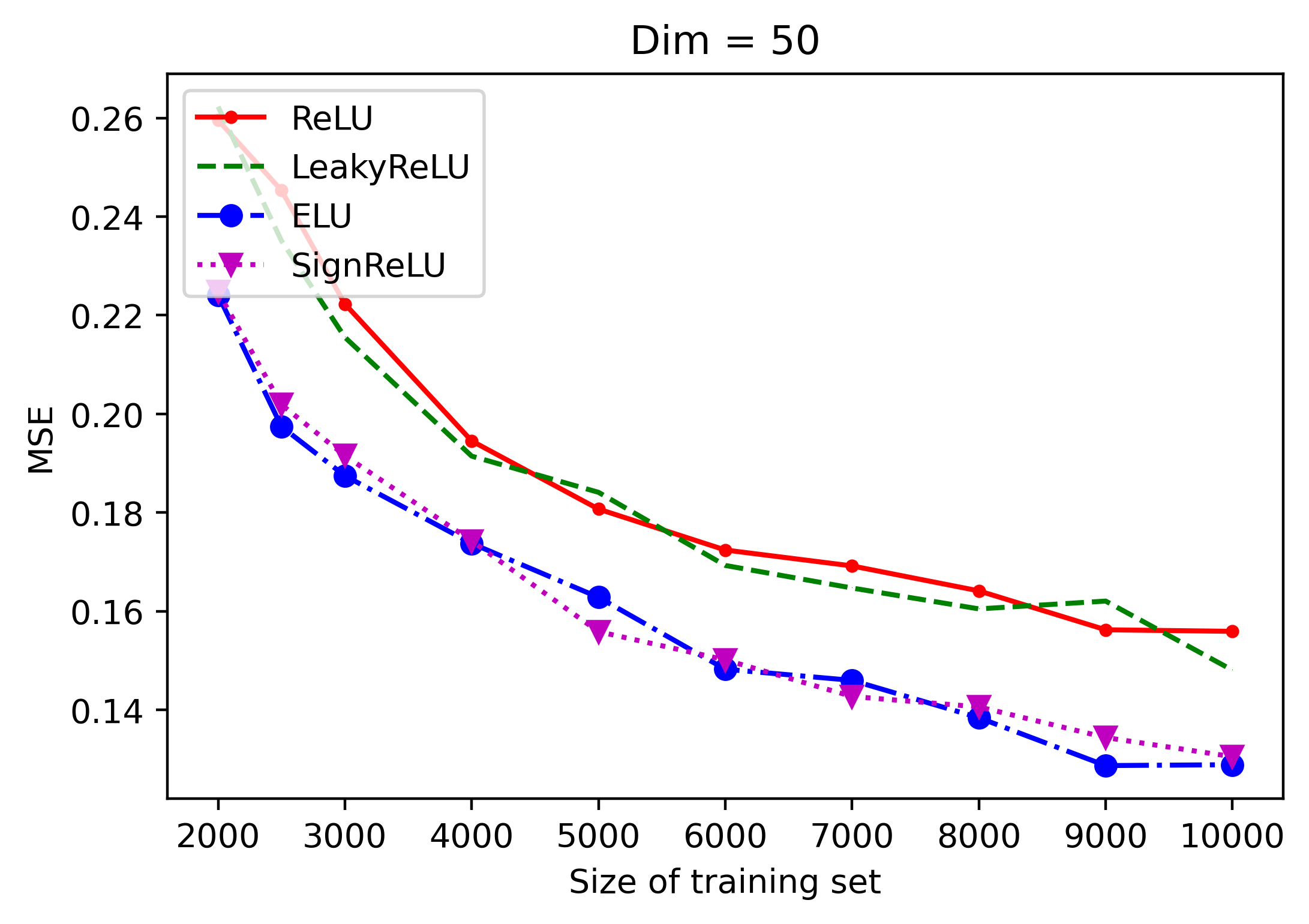}}
	\subfigure[]{\includegraphics[width=5cm]{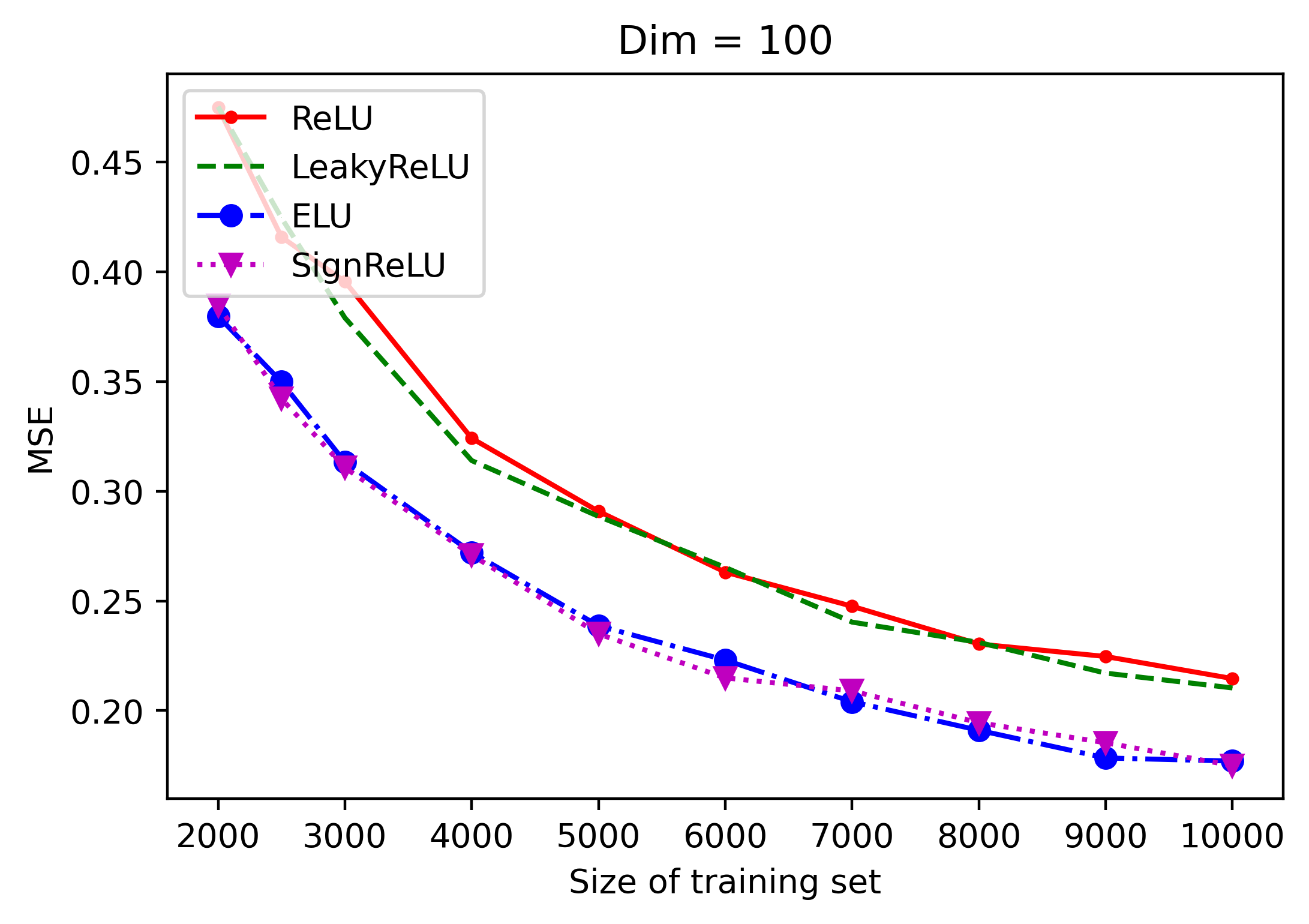}}
	\subfigure[]{\includegraphics[width=5cm]{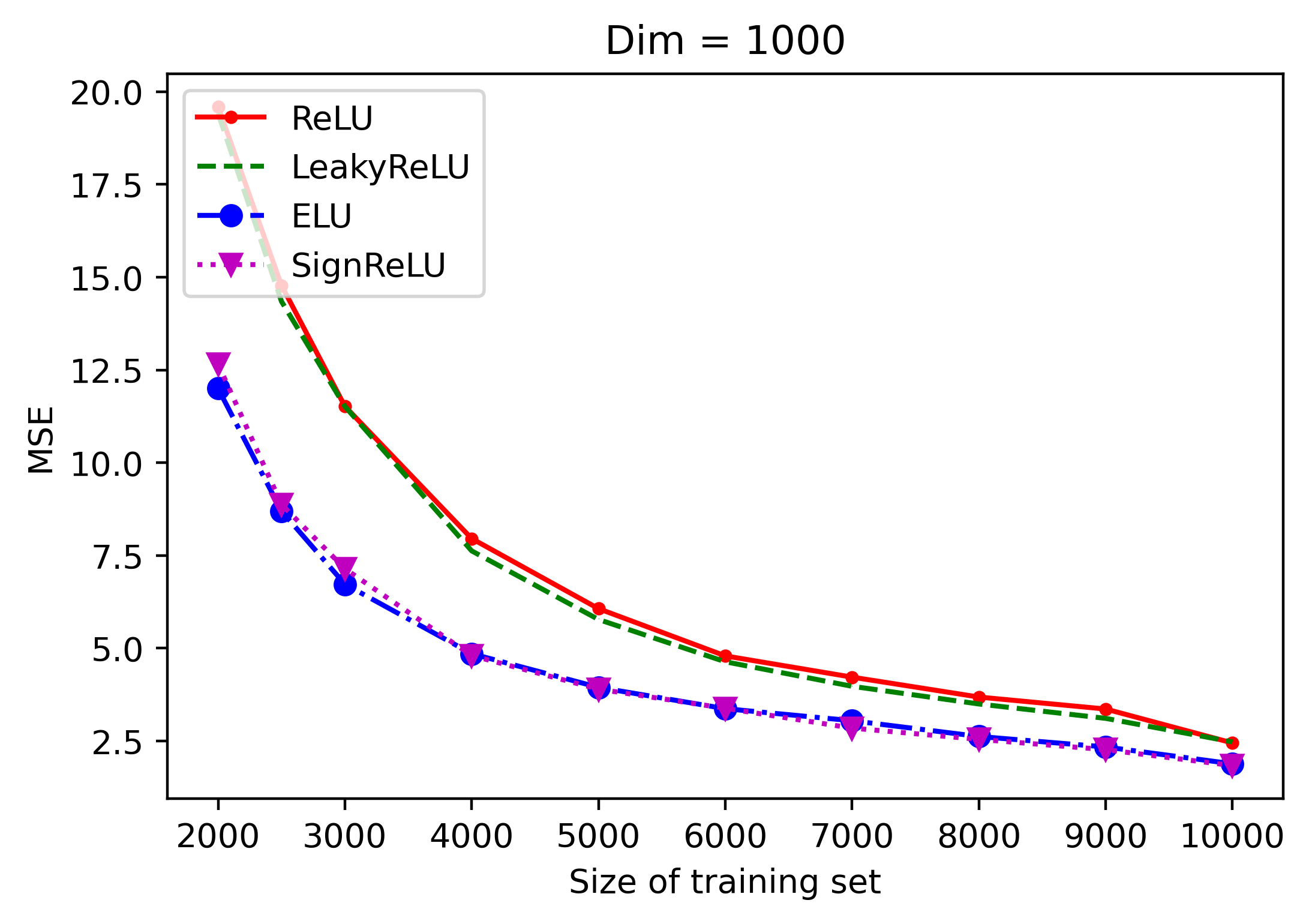}}
	\\
	\subfigure[]{\includegraphics[width=5cm]{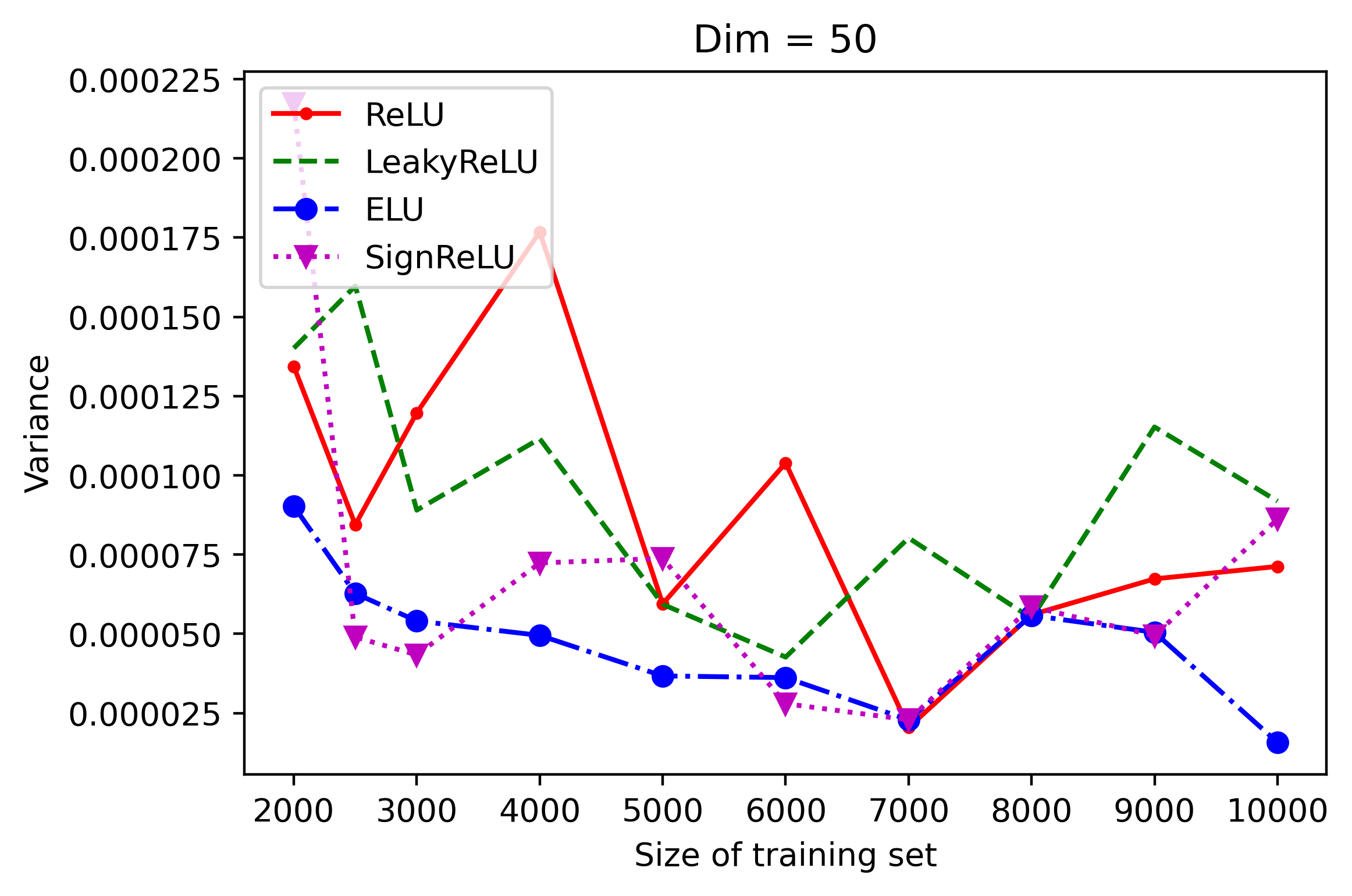}}
	\subfigure[]{\includegraphics[width=5cm]{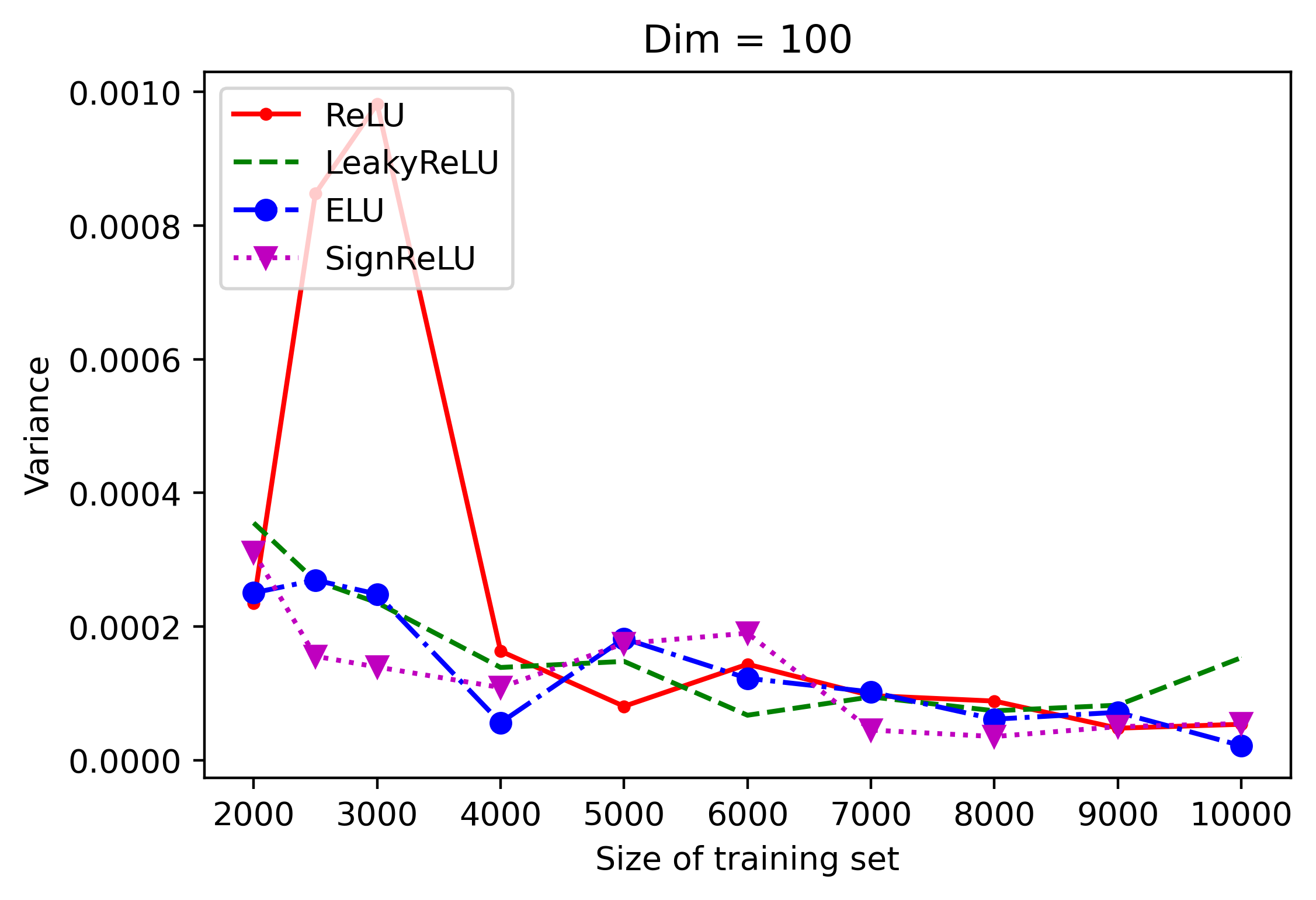}}
	\subfigure[]{\includegraphics[width=5cm]{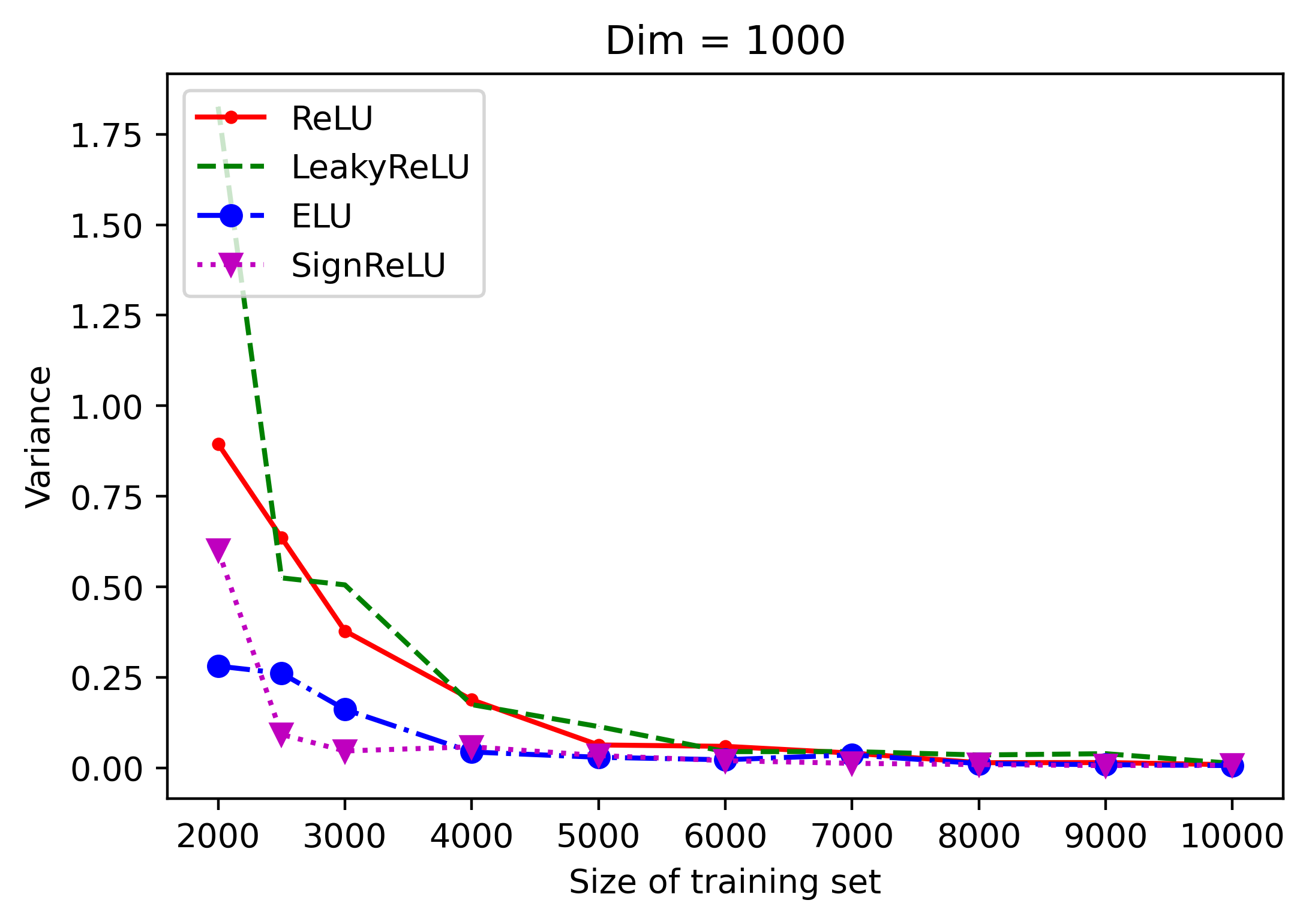}}
	\caption{Noise regression results. Figures (a-c), (d-f) show MSEs and variances on the test sets of a three-layer fully connected neural network with different activation functions. X-axis represents the size of the training set and Y-axis gives mean MSE and variance over $10$ independent trails.}
	\label{reg_test}
\end{figure*}

\begin{figure*}[ht]
	\centering
	\subfigure[]{\includegraphics[width=5cm]{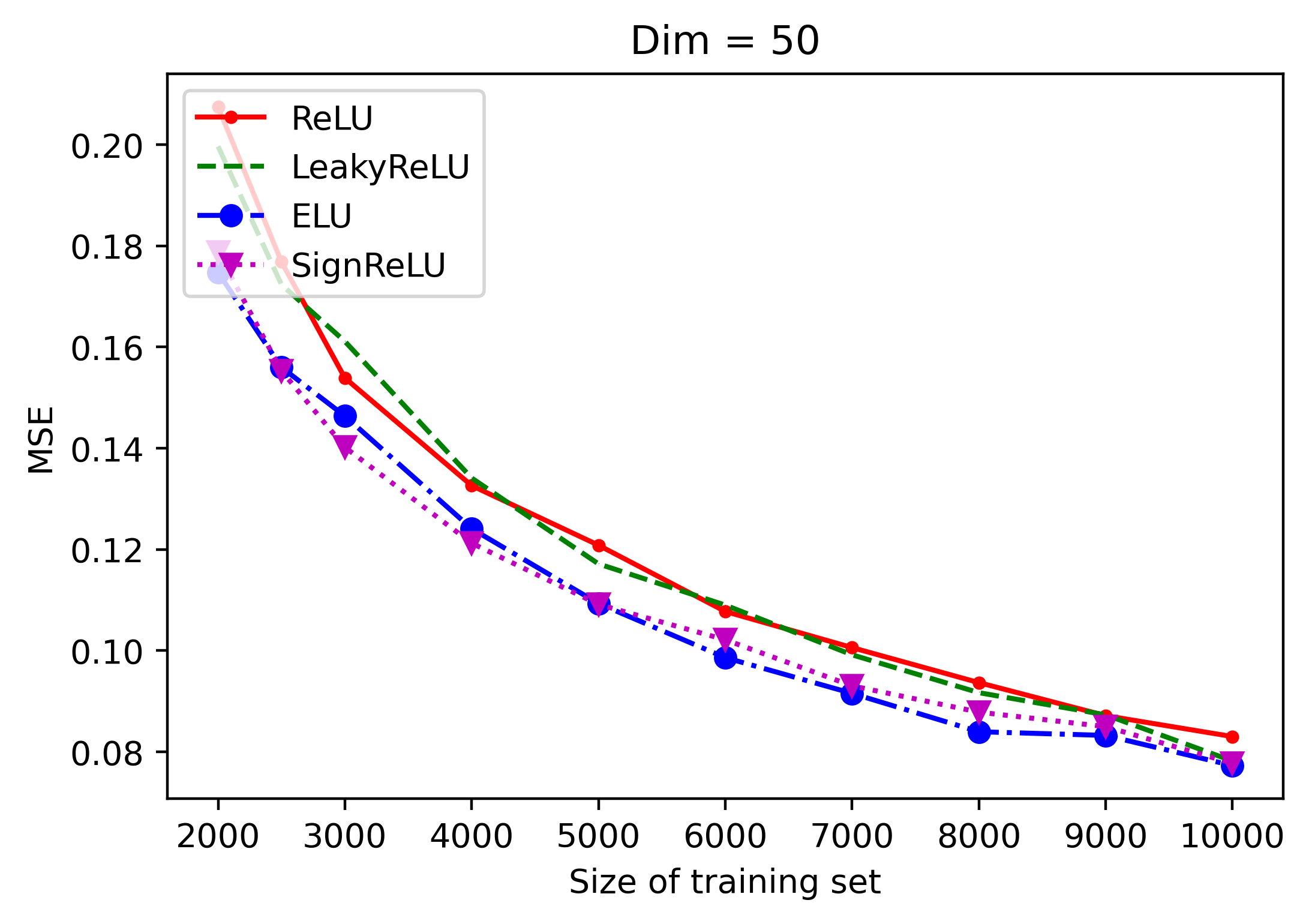}}
	\subfigure[]{\includegraphics[width=5cm]{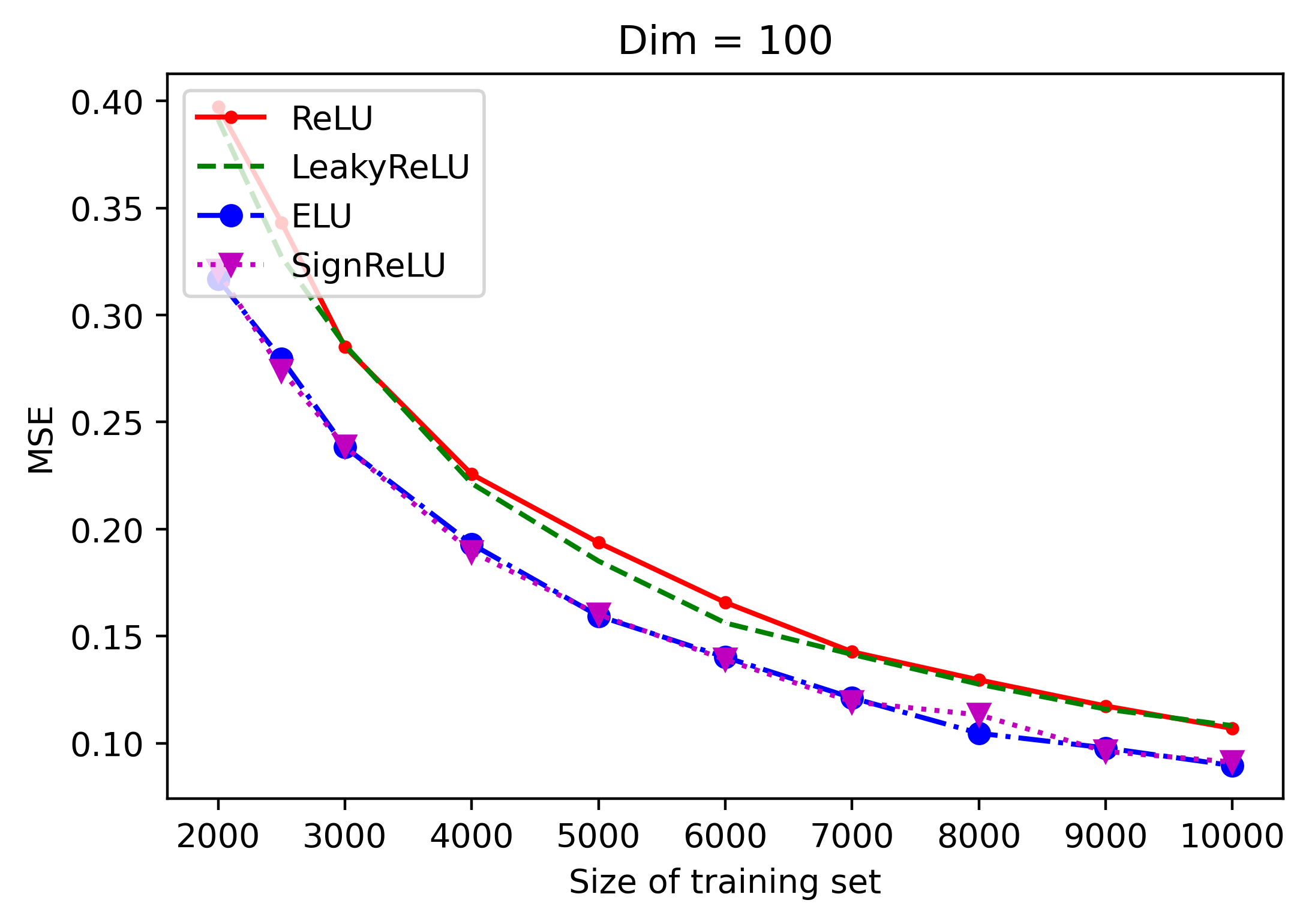}}
	\subfigure[]{\includegraphics[width=5cm]{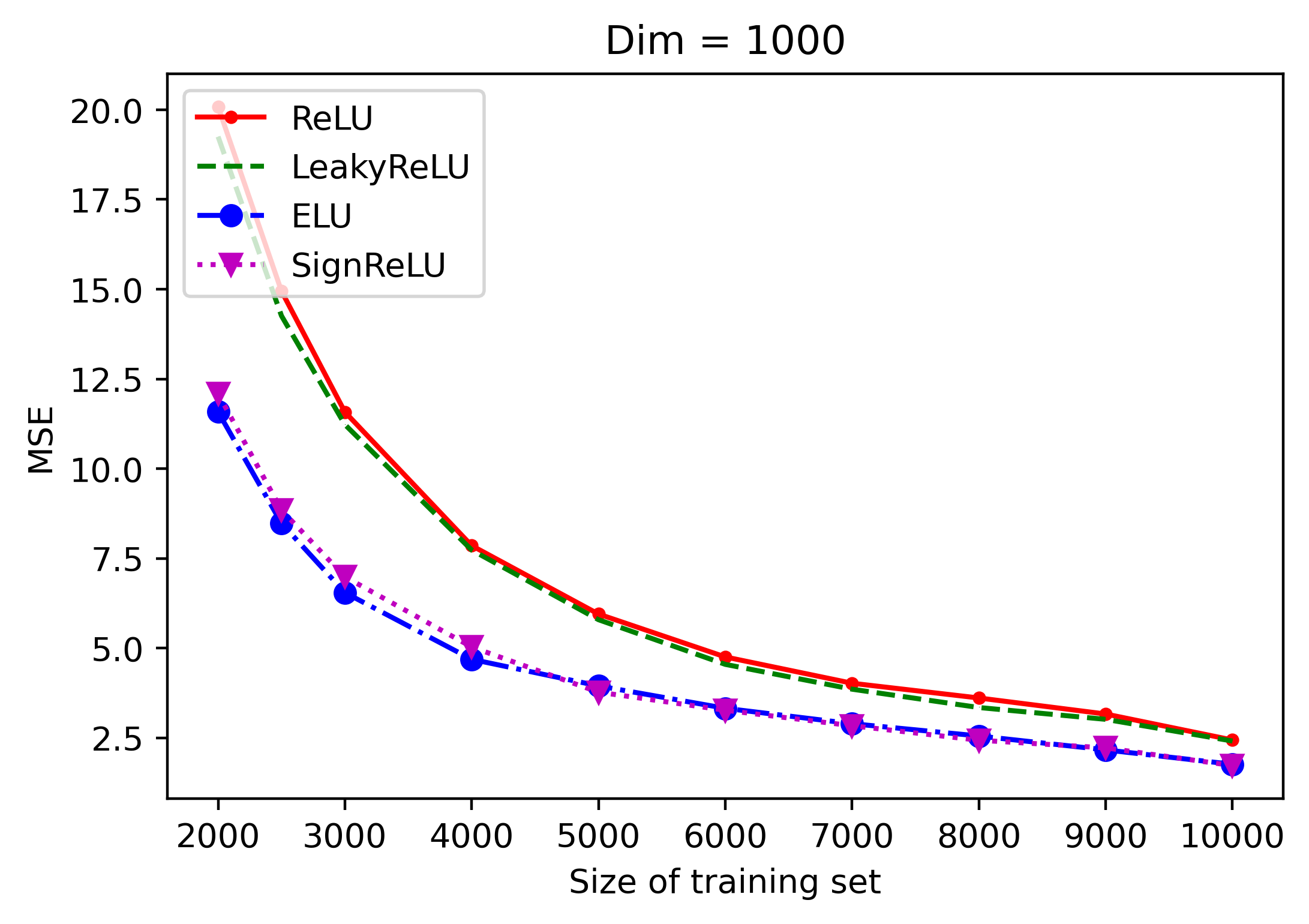}}
	\\
	\subfigure[]{\includegraphics[width=5cm]{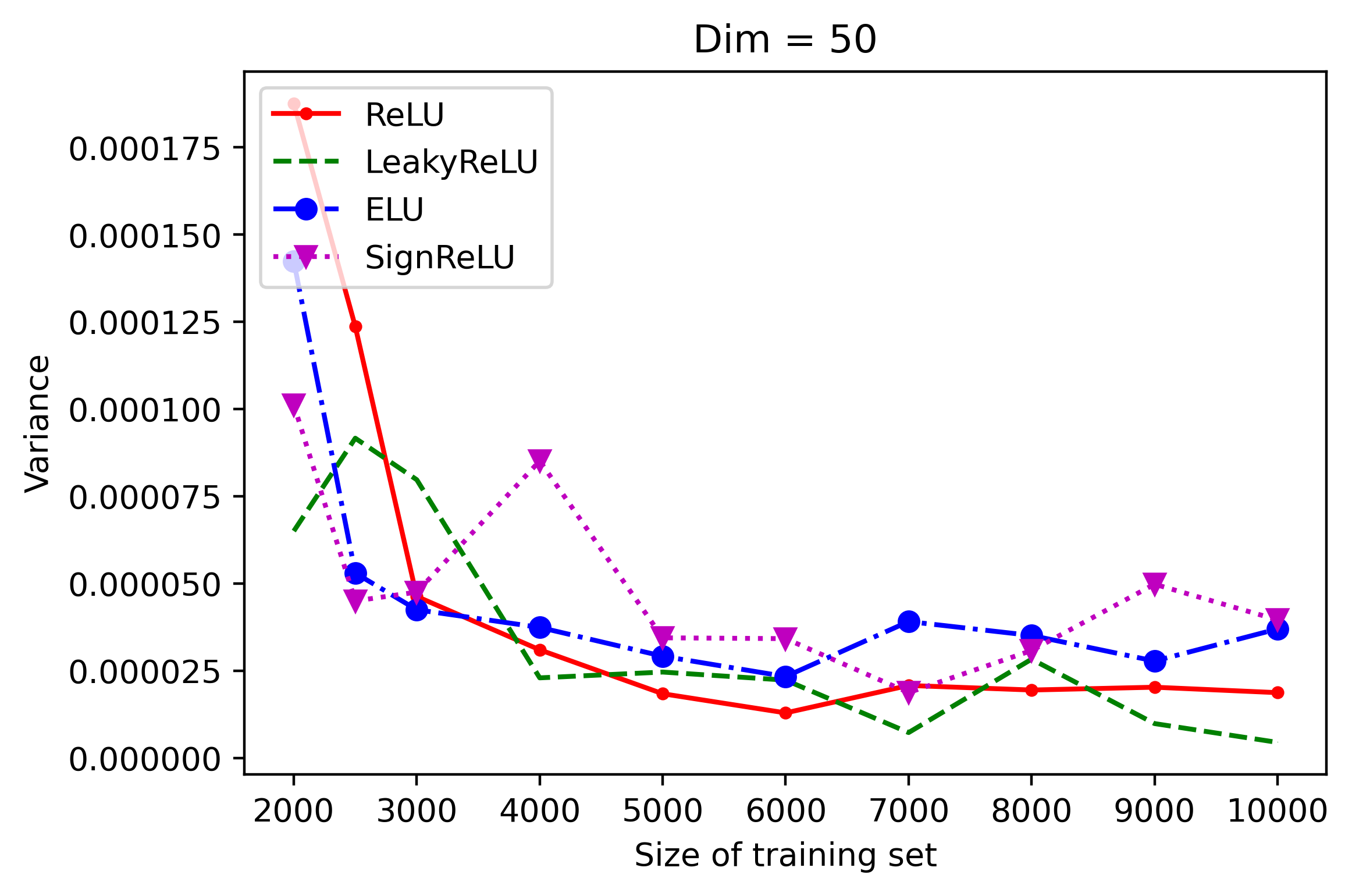}}
	\subfigure[]{\includegraphics[width=5cm]{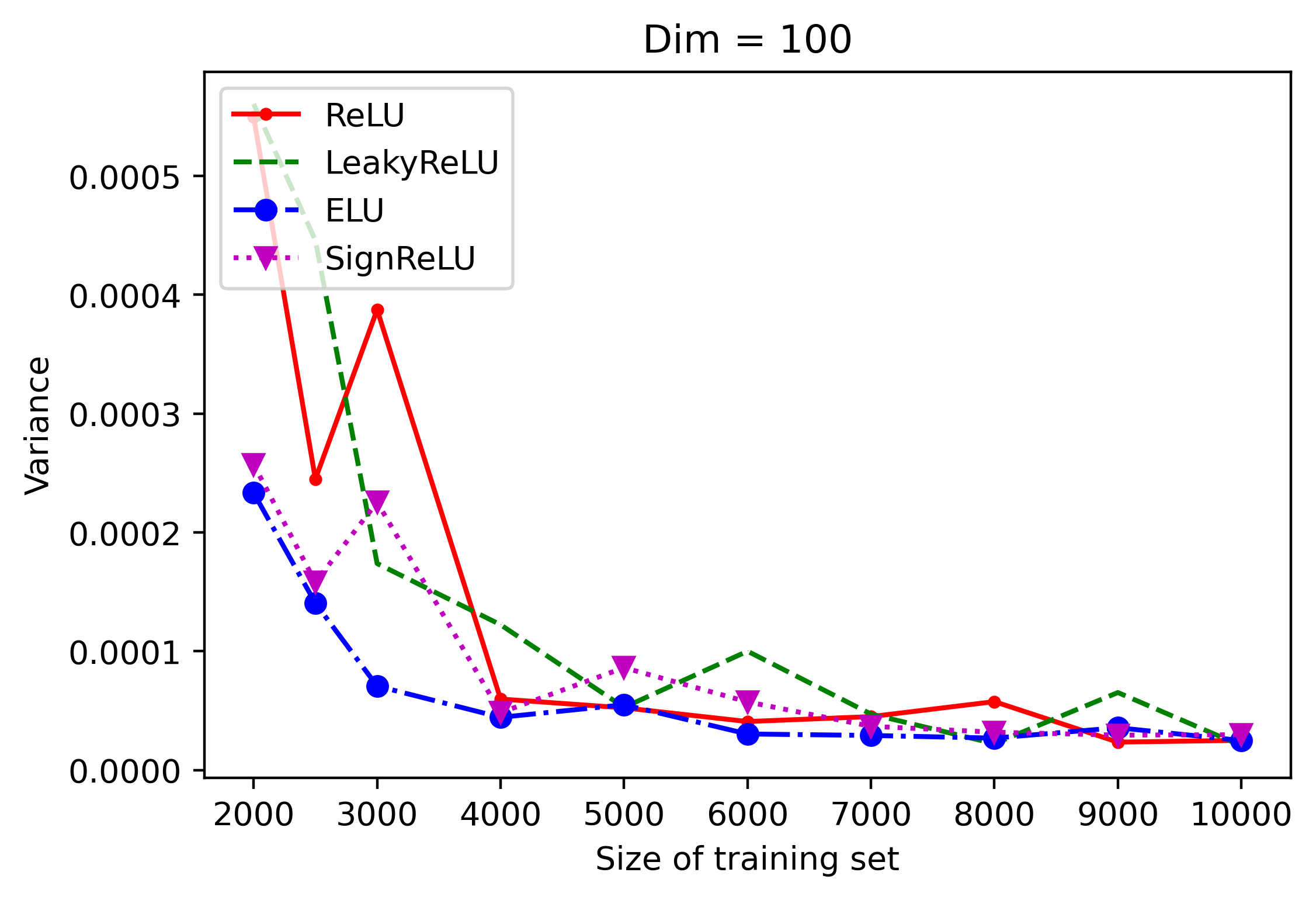}}
	\subfigure[]{\includegraphics[width=5cm]{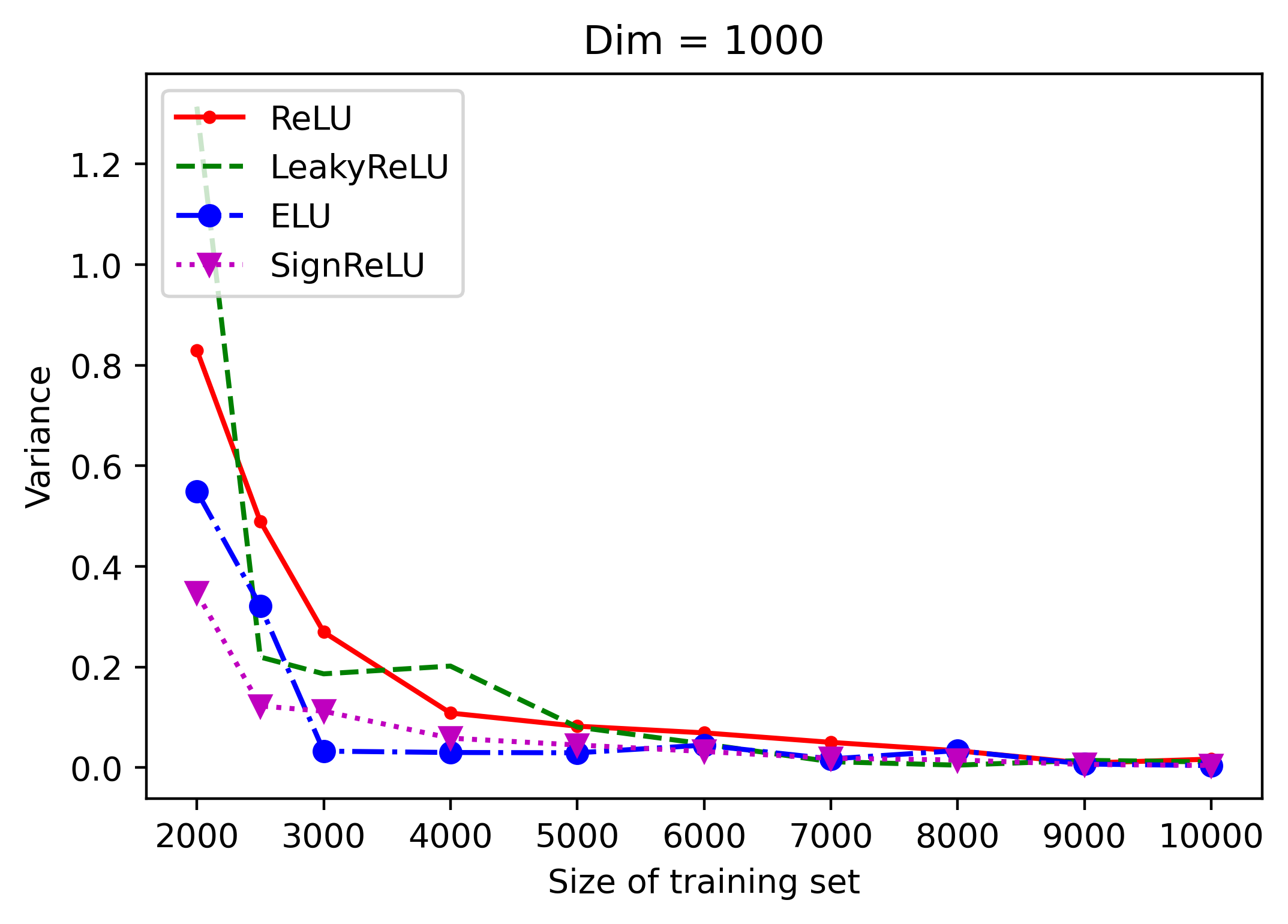}}
	\caption{Noiseless regression results. Figures (a-c), (d-f) show MSEs and variances on the test sets of a three-layer fully connected neural network with different activation functions. X-axis represents the size of the training set and Y-axis gives mean MSE and variance over $10$ independent trails.}
	\label{reg_test_noiseless}
\end{figure*}

\section{Experiments}
In this section, we conduct some numerical experiments to test the ability of the SignReLU activation function for various learning tasks (regression, classification, image denoising). Codes are available at  \url{https://github.com/JFLi11/Experiments-on-SignReLUnets.git}. In each experiment, we utilize a neural network with the same architecture activated by different nonlinear activation functions, ReLU, LeakyReLU ($\alpha=0.01$), ELU ($\alpha=1$), and SignReLU. The datasets we take include MNIST \cite{MNIST} images with 60000 images for training and 10000 for testing, CIFAR10 \cite{cifar10} images with 50000 for training and 10000 for testing, and Caltech101 \cite{caltech101} images with 7677 for training and 1000 for testing.

\subsection{Regression}
\begin{figure}[H]
	\centering
	\subfigure[Random noised points]{\includegraphics[width=3.5cm]{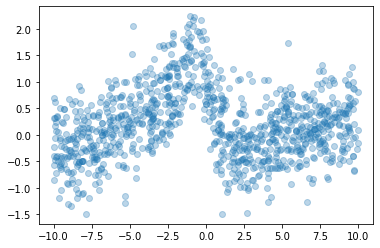}}
	\quad \subfigure[Random ground truth points]{\includegraphics[width=3.5cm]{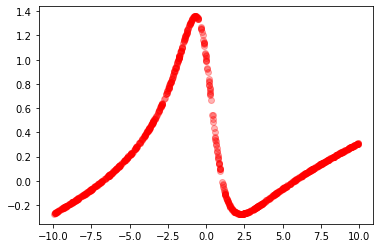}}
	\caption{Regression model with 1000 random points on $[-10,10]$.}
	\label{reg}
\end{figure}

We consider the following regression model on $  [-10,10]^d$
\[y = \f{2 - 2z + 0.05z^3}{2+z^2} + \xi, \]
where $z = \sum_{i=1}^{d}x_i$ and $\xi$ is the Gaussian noise with mean $0$ and variance $0.25$. See an illustration of the proposed regression model in Figure~\ref{reg}. We draw data with $\b x$ uniformly sampled with dimension $d$ varying in $\{50,100,1000\}$ and the size of the training set varying in \[\{2000,2500,3000,4000, \dots, 10000\}\] for each $d$. We randomly generate $2000$ samples for the test set in the same way as the training set without noise for all experiments in this subsection. For the network structure, we choose three hidden layers with widths all equal to $100$.
During the training, we use the ADAM algorithm and a minibatch of size $100$. The learning rate is set to be $0.0001$ in $50$ epochs.

Figure~\ref{reg_test} depicts the results of $10$ independent trials in terms of MSE. We observe that ELU and SignReLU outperform ReLU and LeakyReLU and that ELU and SignReLU have similar performances for this regression model. When more data are used for training, all activations can help the neural network learn well. If no noise is added in the training sets, the performance of neural networks, see Figure~\ref{reg_test_noiseless}, can be improved and other conclusion is similar to noise cases.

\subsection{Classification}
\begin{table}[H]
	\centering
	
	\scriptsize
	\setlength\tabcolsep{2.5pt}
	\linespread{1.9}
	
	\caption{Test accuracy.}
	\label{acc_te}
	\begin{tabular}{|c|c|c|c|c|}
		\hline
		\textbf{Test   acc(\%)} & \textbf{ReLU} & \textbf{SignReLU}   & \textbf{ELU} & \textbf{LeakyReLU} \\ \hline
		MNIST                   & 97.82         & \textbf{97.92} & 97.12         & 97.43              \\ \hline
		CIFAR10                 & 75.28         & \textbf{76.57} & 76.07        & 74.62              \\ \hline
	\end{tabular}
\end{table}

In the classification task \cite{feng,zhuxn}, we evaluate activation functions on MNIST and CIFAR10. For MNIST, we use a fully connected neural network which is the same as that used in the previous subsection on regression. During the training, we use the ADAM algorithm and a minibatch of size $128$. The learning rate is set to be $0.001$ in $20$ epochs.
For CIFAR10, we use a "small" Resnet18 with output channels for each convolutional layer to be 16, and max-pooling is applied after each residual block. Since the image is small, we also remove the first $7\times7$ kernel of Resnet18. The classification layer we employed is a fully connected layer with the input dimension $64$. This neural network has no more than $40$ thousand parameters.
During the training, we use the ADAM algorithm and a minibatch of size $128$. The learning rate decays exponentially from the beginning value $0.001$ with the multiplicative factor of $0.9$ every four epochs in $30$ epochs. Weight decay is set to be $0.001$.
Test accuracy in Table~\ref{acc_te} proves the superiority of the classification algorithm induced by SignReLU neural networks.

\begin{figure}[t]
	\centering
	\includegraphics[width=4.5in]{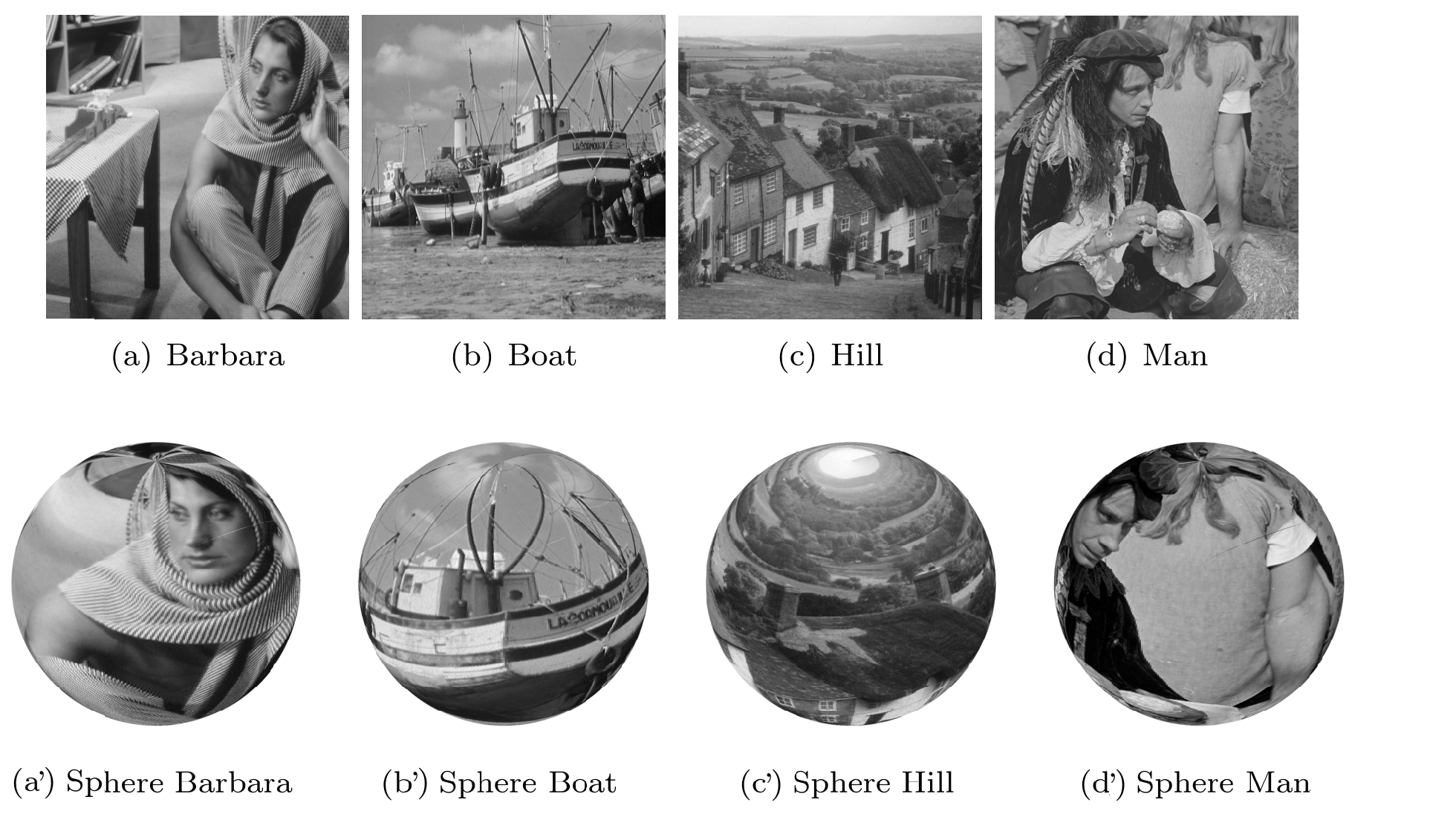}\\
	\caption{2D gray images and their corresponding spherical samples \cite{li2022convolutional}.}\label{sph2Dimg}
\end{figure}
\subsection{Spherical image denosing}

\begin{table}[b]
	\centering
	\scriptsize
	\setlength\tabcolsep{1.5pt}
	\linespread{1.2}
	
	\setlength{\tabcolsep}{2pt} 
	\renewcommand{\arraystretch}{2} 
	\caption{Average PSNRs on four images by trained NN over 5 independent trials .}
	\label{sphimg}
	\begin{tabular}{|c|lll|lll|lll|lll|}
		\hline
		\textbf{Image} & \multicolumn{3}{c|}{\textbf{Barbara}}                                                                           & \multicolumn{3}{c|}{\textbf{Boat}}                                                                              & \multicolumn{3}{c|}{\textbf{Hill}}                                                                              & \multicolumn{3}{c|}{\textbf{Man}}                                                                               \\ \hline
		rate           & \multicolumn{1}{c|}{\textbf{0.2}}    & \multicolumn{1}{c|}{\textbf{0.3}}    & \multicolumn{1}{c|}{\textbf{0.5}} & \multicolumn{1}{c|}{\textbf{0.2}}    & \multicolumn{1}{c|}{\textbf{0.3}}    & \multicolumn{1}{c|}{\textbf{0.5}} & \multicolumn{1}{c|}{\textbf{0.2}}    & \multicolumn{1}{c|}{\textbf{0.3}}    & \multicolumn{1}{c|}{\textbf{0.5}} & \multicolumn{1}{c|}{\textbf{0.2}}    & \multicolumn{1}{c|}{\textbf{0.3}}    & \multicolumn{1}{c|}{\textbf{0.5}} \\ \hline
		ReLU           & \multicolumn{1}{l|}{\textbf{23.823}} & \multicolumn{1}{l|}{22.533}          & 21.309                            & \multicolumn{1}{l|}{\textbf{26.104}} & \multicolumn{1}{l|}{24.515}          & 22.777                            & \multicolumn{1}{l|}{\textbf{26.034}} & \multicolumn{1}{l|}{24.703}          & 23.116                            & \multicolumn{1}{l|}{\textbf{26.367}} & \multicolumn{1}{l|}{24.901}          & 23.243                            \\ \hline
		LeakyReLU      & \multicolumn{1}{l|}{23.760}          & \multicolumn{1}{l|}{22.470}          & \textbf{21.324}                   & \multicolumn{1}{l|}{25.870}          & \multicolumn{1}{l|}{24.569}          & \textbf{22.942}                   & \multicolumn{1}{l|}{26.013}          & \multicolumn{1}{l|}{24.709}          & \textbf{23.216}                   & \multicolumn{1}{l|}{26.262}          & \multicolumn{1}{l|}{24.953}          & \textbf{23.403}                   \\ \hline
		ELU            & \multicolumn{1}{l|}{23.626}          & \multicolumn{1}{l|}{22.504}          & 21.319                            & \multicolumn{1}{l|}{25.750}          & \multicolumn{1}{l|}{24.582}          & 22.746                            & \multicolumn{1}{l|}{25.762}          & \multicolumn{1}{l|}{24.725}          & 23.163                            & \multicolumn{1}{l|}{26.037}          & \multicolumn{1}{l|}{25.000}          & 23.333                            \\ \hline
		SignReLU            & \multicolumn{1}{l|}{23.805}          & \multicolumn{1}{l|}{\textbf{22.589}} & 21.152                            & \multicolumn{1}{l|}{25.967}          & \multicolumn{1}{l|}{\textbf{24.657}} & 22.554                            & \multicolumn{1}{l|}{26.001}          & \multicolumn{1}{l|}{\textbf{24.730}} & 23.058                            & \multicolumn{1}{l|}{26.343}          & \multicolumn{1}{l|}{\textbf{25.010}} & 23.134                            \\ \hline
	\end{tabular}
\end{table}

One more experiment we conduct follows that in \cite{li2022convolutional} for spherical image denoising with convolutional neural network activated by ReLU. Spherical images, whose domain are 2-sphere, similar to graph-structured data, are not typical images on Euclidean space but arise in various situations, such as astrophysics \cite{starck2006wavelets} and medical imaging \cite{yu2007cortical}. One of the most famous spherical image tasks is to process CMB data (Cosmic Microwave Background radiation field) to obtain useful information \cite{abrial2008cmb}.

Here we employ the neural network with the same architecture activated by various functions including LeakyReLU, ELU, and SignReLU. During the training, we use the ADAM algorithm and a mini-batch size of $20$. Learning rate decay exponentially from the beginning value $0.005$ with a multiplicative factor $0.9$ in $20$ epochs. For any image $\b f$, we add Gaussian noise with varying standard deviation $\sigma = \text{rate} \times f_{max}$ where $f_{max}$ is the maximal absolute value of $\b f$. See Figure~\ref{reg_noise} for some noisy images. The peak signal-to-noise ratio (PSNR $=10\log_{10}(f^2_{max}/\text{MSE})$) is employed to evaluate the performance of each denoising model, where $\text{MSE}$ represents the mean square error between noised signal and ground truth.

The generalization performance is evaluated by recovered PSNR over four typical images (sampled to the 2D sphere, see Figure~\ref{sph2Dimg} for illustration and \cite{li2022convolutional} for more details), while the neural network is trained on Caltech101, see Table~\ref{sphimg} for results. We find that the SignReLU activation function can give the best-denoised image when $\text{rate}=0.3$ while ReLU and LeakyReLU activations perform best when given noise rate $0.2$ and $0.5$ respectively. With noise rate increases, all activations have a decreasing performance. Overall, under the above settings, all these four activations give comparable denoised images. Hence, the SignReLU activation function can be valuable for image-denoising tasks. Figure~\ref{reg_noise} shows some denoising results of neural networks. As the noise rate increases, neural networks cannot recover image textures well. For example, facial parts are not able to be seen at noise rate $0.5$. Developing neural networks that improve PSNR and structure information simultaneously would be an interesting topic for future work. 


\begin{figure}[htpb!]
	\centering
	\subfigure[Noise rate $ = 0.2$]{\includegraphics[width=3.7cm]{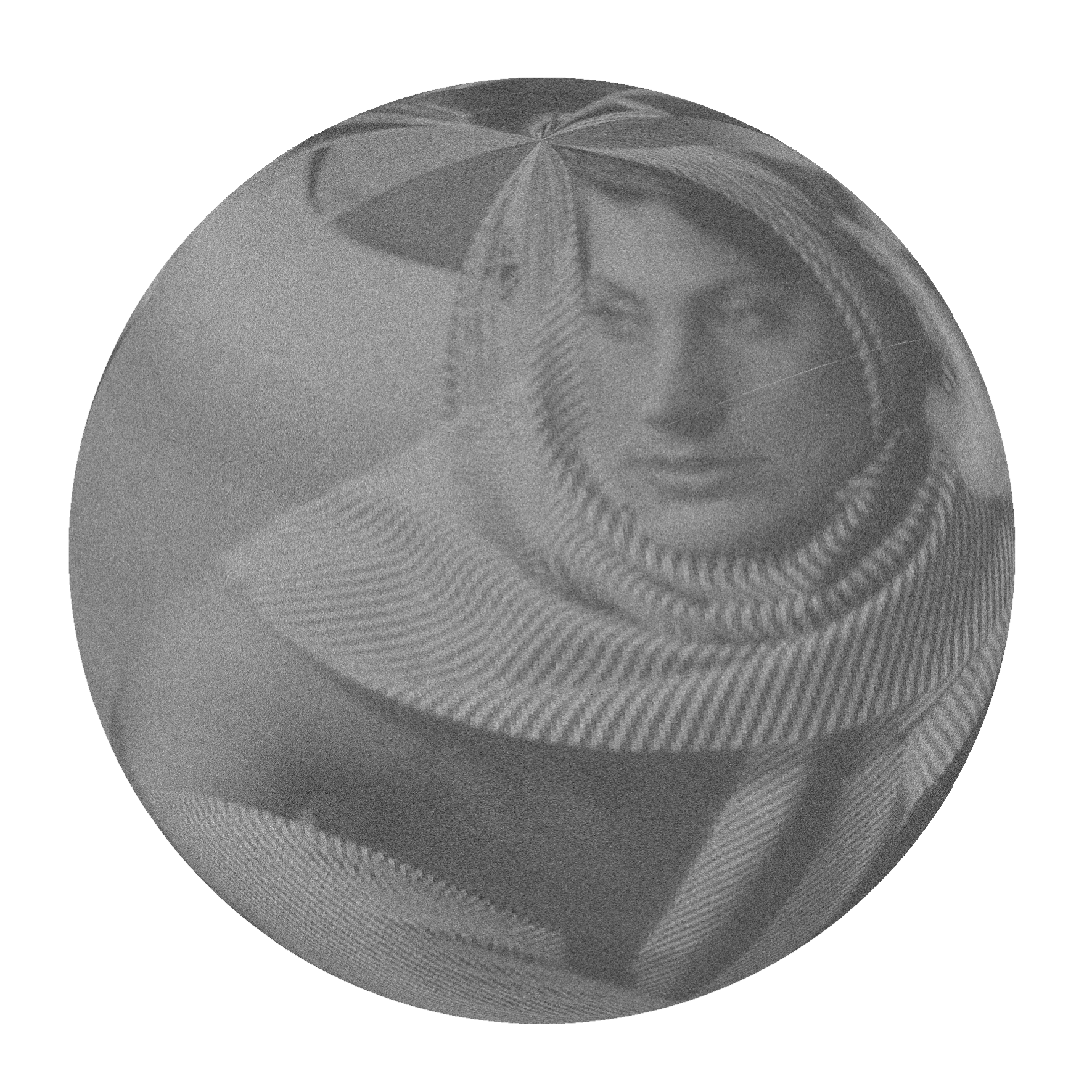}}
	\quad 
	\subfigure[Noise rate $ = 0.3$]{\includegraphics[width=3.7cm]{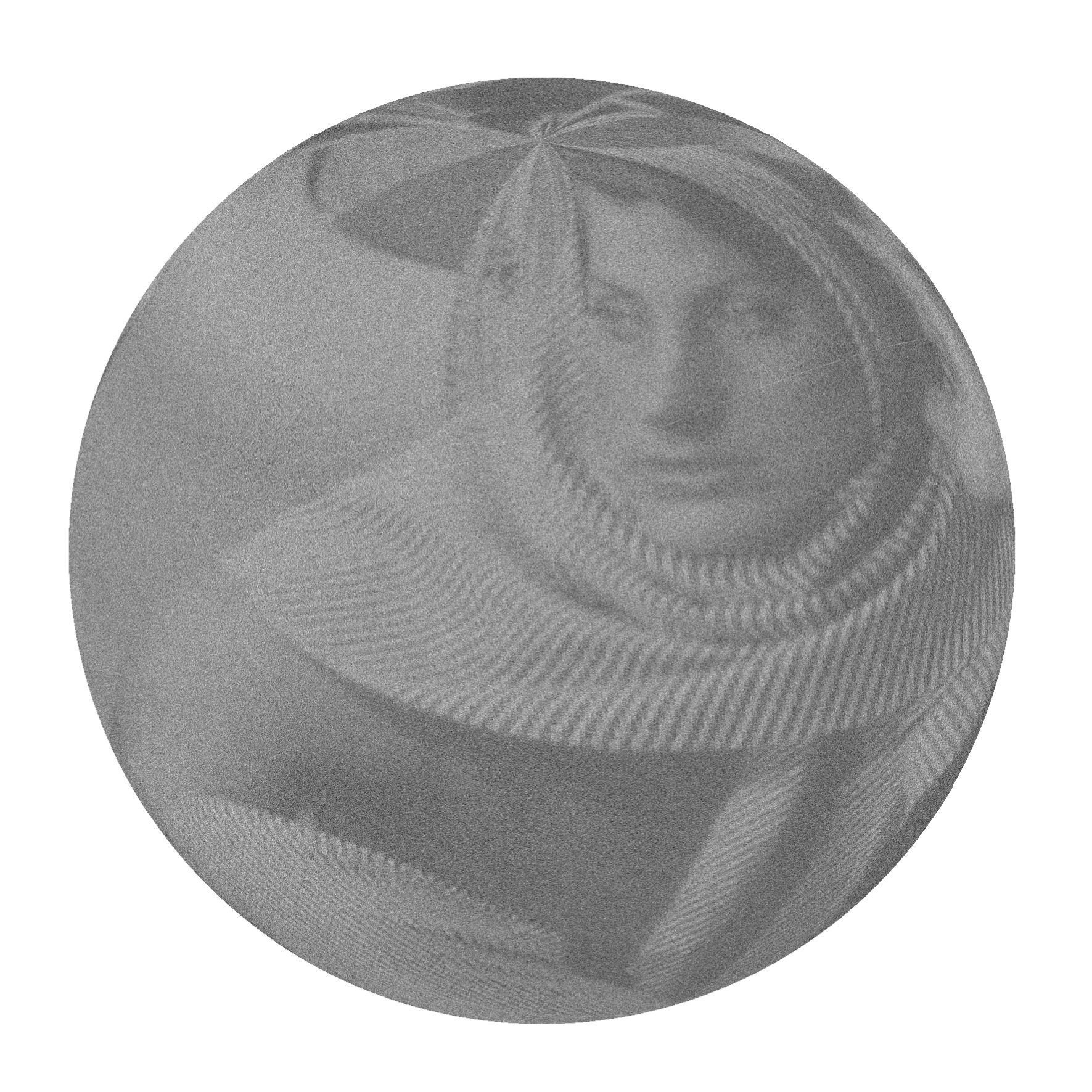}}
	\subfigure[Noise rate $ = 0.5$]{\includegraphics[width=3.7cm]{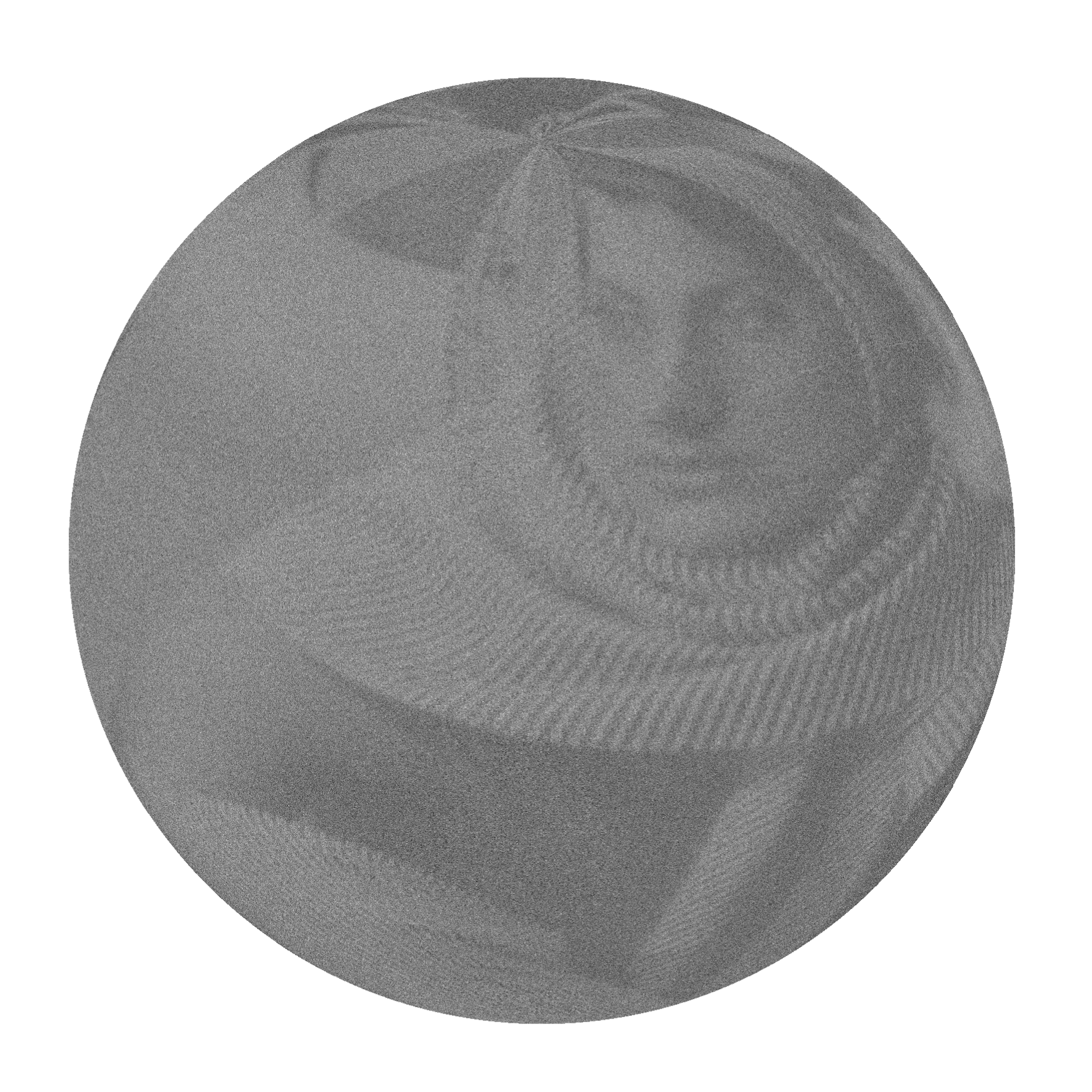}}
	
	\subfigure[Denoised image $  0.2$]{\includegraphics[width=3.7cm]{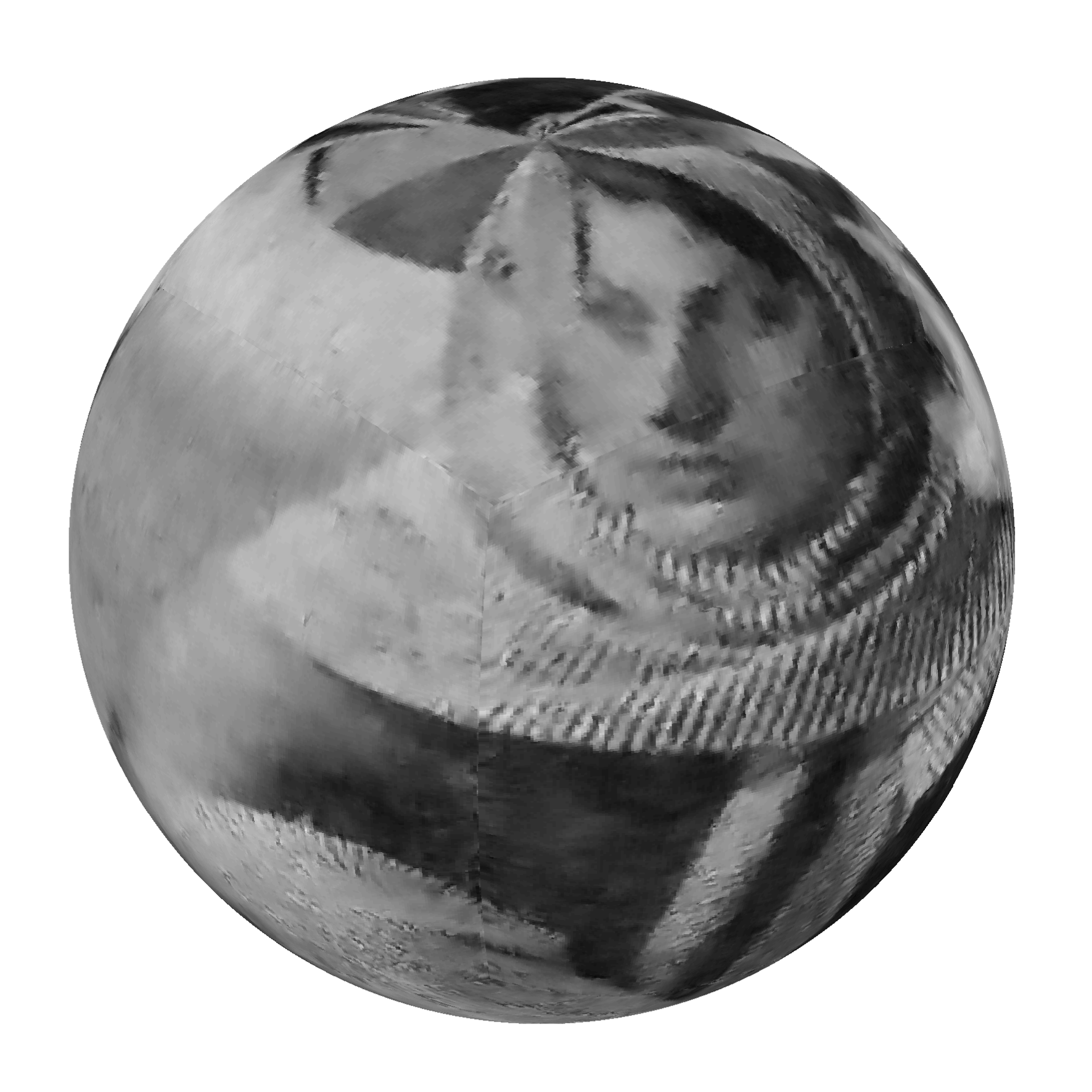}}
	\quad 
	\subfigure[Denoised image $  0.3$]{\includegraphics[width=3.7cm]{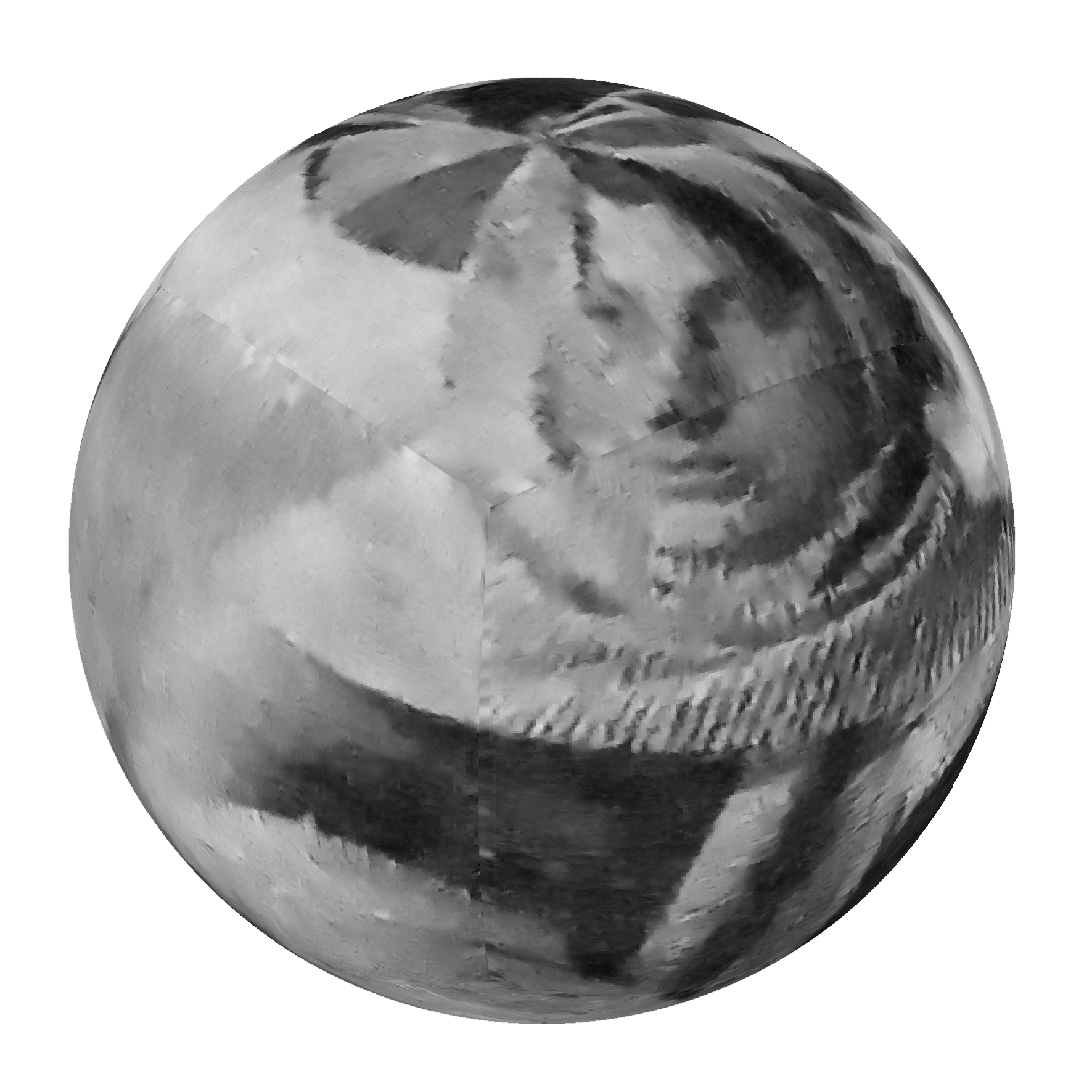}}
	\subfigure[Denoised image $ 0.5$]{\includegraphics[width=3.7cm]{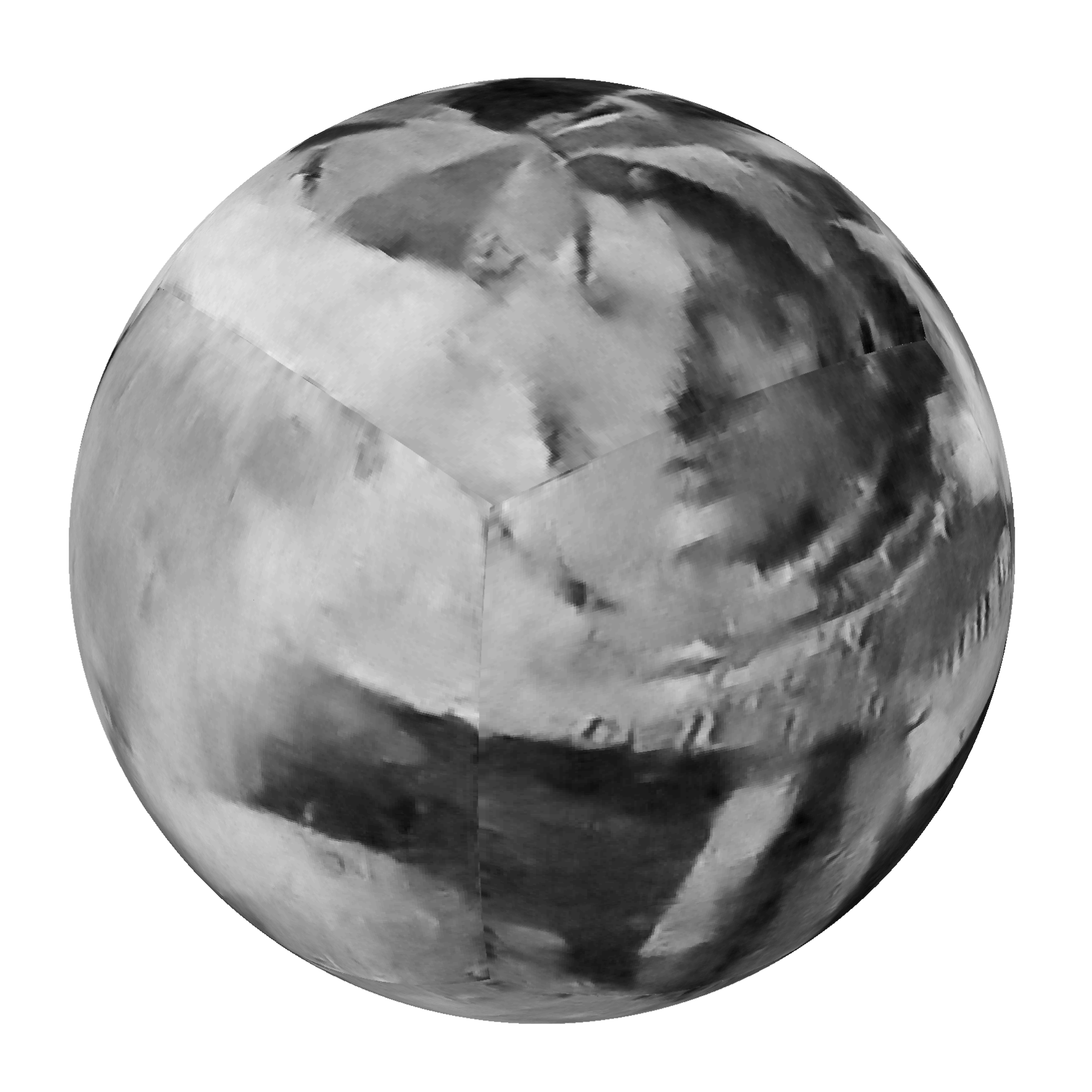}}
	\caption{Spherical Barbara with Gaussian noise and the corresonding denoised results.}
	\label{reg_noise}
\end{figure}

\section{Conclusion and further remarks}

In this work, we investigate the approximation ability of neural networks activated by SignReLU. We remark that (a) SignReLU is able to produce rational functions and approximate ReLU efficiently, (b) an improved approximation power is achieved by using SignReLU neural networks, compared with those activated by ReLU and rational activation functions and we extend the measure of error from $L_\infty$ to $L_{p,w}$, $p\in[1,\infty)$, and (c) approximation results on Korobov space and tendered BV functions solves the curse of dimensionality. We would like to mention that most of our techniques are also suitable for activation functions that are able to realize polynomial/rational functions. Several experiments are conducted and show the ability of SignReLU in deep learning tasks.
	
There are many research directions for further work. Based on \propref{exp}, we are able to discuss the relationship between neural networks and reproducing kernel Hilbert spaces. Since ReLU neural networks can efficiently produce piecewise linear functions and SignReLU neural networks are able to realize rational functions, it could be more powerful to combine these two activations to enhance the learning ability in applications.

\section*{Acknowledgement}
The second and last authors are supported partially by the Laboratory for AI-Powered Financial Technologies, the Research Grants Council of Hong Kong [Projects \# C1013-21GF, \#11306220 and \#11308121], the Germany/Hong Kong Joint Research Scheme [Project No. G-CityU101/20], the CityU Strategic Interdisciplinary Research Grant [Project No. 7020010], National Science Foundation of China [Project No. 12061160462],
and Hong Kong Institute for Data Science.

\newpage
\begin{appendix}
	\section{Basics properties of SignReLU nets}
	In the following, for simplicity, we only use $\L$, $\W$, and $\N$ to denote the depth, width, and number of weights of a given neural network $\Phi$, if it is clear from the context.
	
        Before proving the main results, we introduce several basic operations used frequently in our proofs.
        \begin{lem}[Composition]\label{compos}
            Let $d,d_1,d_2\in \NN_+$ and $M > 0$. Assume that there are two SignReLU neural networks $\Phi_1:[-M,M]^d \rightarrow [-M,M]^{d_1}$ with depth $\L_1$, width $\W_1$ and number of weighs $\N_1$ and $\Phi_2 : \RR^{d_1} \rightarrow \RR^{d_2} $ with depth $\L_2$, width $\W_2$ and number of weighs $\N_2$. Then there exists a SignReLU neural network $\Phi$ with depth $\L = \L_1 +\L_2+1$, width $\W \leq \max\{\W_1,\W_2\}$ and number of weights $\N \leq \N_1 + \N_2$ such that $\Phi(\b x) = \Phi_2 \circ \Phi_1(\b x)$ for all $x\in [-M, M]^d$.
        \end{lem}
        \begin{proof}
        We denote $\b 1_m \in \RR^m$ as the vector with all elements to be $1$.
            By definition \eqref{defnn}, we assume that
            \begin{align*}
                \Phi_1 &= \A^1_{\L_1+1} \circ \rho \circ \A^1_{\L_{1}} \circ  \cdots  \circ \rho \circ \A^1_1, \\
                \Phi_2 &= \A^2_{\L_2+1} \circ \rho \circ \A^2_{\L_{2}} \circ   \cdots \circ \rho \circ \A^2_1,
            \end{align*}
            where $\A^i_j = \b A^i_j \b y + \b b^i_j$ are affine transforms for some matrices $\b A^i_j \in \RR^{d_j^i \times d_{j-1}^i }$ and $\b b^i_j \in \RR^{d^i_j}$, $i=1,2$, $j \leq \max \{\L_1, \L_2 \}$. Notice that $d^2_0 = d^1_{\L_1+1} = d_1$.

            Define the following neural network $\Phi$
            \begin{align}\label{compos2}
            \begin{aligned}
                \Phi
                &= \A^2_{\L_2+1} \circ \rho \circ \A^2_{\L_{2}} \circ   \cdots \circ \rho \circ \tilde{\A}^2_1 \circ \rho \circ \tilde{\A}^1_{\L_{1}+1} \circ  \cdots  \circ \rho \circ \A^1_1 \\
            \end{aligned}
            \end{align}
            where $\tilde{\A}^1_{\L_1+1} (\b y) := \b A^1_{\L_1+1}\b y + \b b^1_{\L_1+1}+M \b 1_{d_1}$ and $\tilde{\A}^2_{1} (\b y) := \b A^2_1 \b y+ \b b^2_1-M \b A^2_1 \b 1_{d_1}$. Obviously, when $\A^1_{\L_1+1} (\b y) \in [-M,M]^{d_1}$, then $\rho \left(\tilde{\A}^1_{\L_1+1} (\b y)\right) = \tilde{\A}^1_{\L_1+1} (\b y) $. Hence, for any $\b y$ that satisfies $\A^1_{\L_1+1} (\b y) \in [-M,M]^{d_1}$, we have
            \begin{align}\label{compos3}
            \begin{aligned}
                &\tilde{\A}^2_1 \circ\rho\circ \tilde{\A}^1_{\L_1+1} (\b y) \\
                &= \tilde{\A}^2_1 \circ \tilde{\A}^1_{\L_1+1} (\b y) \\
                &= \A^2_1 \circ \A^1_{\L_1+1} (\b y).
            \end{aligned}
            \end{align}
            Combining \eqref{compos2}, \eqref{compos3} and the assumption that $\Phi_1:[-M,M]^d \rightarrow [-M,M]^{d_1}$, we conclude that $\Phi = \Phi_2 \circ \Phi_1$ and it is a SignReLU neural network with $\L= \L_1 + \L_2 +1$, $\W \leq \max \{\W_1, \W_2 \}$ and $\N \leq \N_1 + \N_2$.
        \end{proof}

        \begin{lem}[Summation]\label{parallel}
            Let $d_1, d_2,m\in \NN_+$. Denote $\Phi = \sum_{i=1}^m \alpha_i \Phi_i$, where $\Phi_i:\RR^{d_1} \rightarrow \RR^{d_2}$ are SignReLU neural networks with depth $\L$, width $\W_i$ and number of weights $\N_i$. Then $\Phi:\RR^{d_1} \rightarrow \RR^{d_2}$ can be represented as a SignReLU neural network with depth $\L$, width $ \sum_{i=1}^m\W_i $, and number of weights $\sum_{i=1}^m\N_i$.
        \end{lem}
        \begin{proof}
            By definition \eqref{defnn}, we denote
            \begin{align*}
                \Phi_i &= \A^i_{\L+1} \circ \rho \circ \A^i_{\L} \circ  \cdots  \circ \rho \circ \A^i_1, \quad i = 1,\dots,m,
            \end{align*}
            where $\A^i_j(\b y) = \b A^i_j \b y + \b b^i_j$ are affine transforms with matrices $\b A^i_j \in \RR^{d_j^i \times d_{j-1}^i }$ and $\b b^i_j \in \RR^{d^i_j}$. Define
            \begin{align}
                \Phi := \A_{\L+1} \circ \rho \circ \A_{\L} \circ  \cdots  \circ \rho \circ \A_1 ,
            \end{align}
            where $\A_j (\b y) = \b A_j \b y + \b b_j $ with
            \begin{align}
            \begin{aligned}
                   \b A_1 =
            \begin{pmatrix}
                 \b A^1_1  \\
                 \vdots \\
                 \b A^m_1
            \end{pmatrix},\quad
                \b b_1 =
            \begin{pmatrix}
                 \b b^1_1  \\
                 \vdots \\
                 \b b^m_1
            \end{pmatrix}
            ,
            \end{aligned}
            \end{align}
        	for $j=1$,
            \begin{align}
            \begin{aligned}
                   \b A_j =
            \begin{pmatrix}
                 \b A^1_j & \cdots & 0 \\
                 \vdots   & \ddots & \vdots \\
                 0        & \cdots & \b A^m_j
            \end{pmatrix},\quad
                \b b_j =
            \begin{pmatrix}
                 \b b^1_j  \\
                 \vdots \\
                 \b b^m_j
            \end{pmatrix}
            ,
            \end{aligned}
            \end{align}
        	for $2\leq j \leq \L$, and
            \begin{align}
            \begin{aligned}
                   \b A_{\L+1} =
            \begin{pmatrix}
                 \alpha_1 \b A_{\L+1}^1  &
                 \cdots &
                 \alpha_m \b A_{\L+1}^m
            \end{pmatrix},\quad
                \b b_{\L+1} = \sum_{i=1}^{m}\alpha_i\b b^i_{\L+1}
            ,
            \end{aligned}
            \end{align}
        	for $j = \L + 1$.

            It is easy to see that $\Phi(\b x) = \sum_{i=1}^m \alpha_i \Phi_i(\b x)$ for any $\b x \in \RR^{d_1}$ and it is a SignReLU neural network of depth $\L$, width $ \sum_{i=1}^m\W_i $, and number of weights $\sum_{i=1}^m\N_i $.
        \end{proof}

        \begin{lem}[Concatenation]\label{concat}
            Let $m,d,d_i\in \NN_+$, $i=1,\dots,m$. Denote $\Phi(\b x) = \left(\Phi_1(\b x)^\top, \Phi_2(\b x)^\top  , \dots,\Phi_m(\b x)^\top\right)^\top$, where $\b x \in \RR^d$ and $\Phi_i:\RR^{d} \rightarrow \RR^{d_i}$ are SignReLU neural networks with depth $\L$, width $\W_i$ and number of weights $\N_i$. Then $\Phi:\RR^{d} \rightarrow \RR^{\sum_{i=1}^{m}d_i}$ can be represented as a SignReLU neural network with depth $\L$, width $\sum_{i=1}^m\W_i$, and number of weights $\sum_{i=1}^m\N_i$.
        \end{lem}
        \begin{proof}
            By definition \eqref{defnn}, we denote
            \begin{align*}
                \Phi_i &= \A^i_{\L+1} \circ \rho \circ \A^i_{\L} \circ  \cdots  \circ \rho \circ \A^i_1, \quad i = 1,\dots,m,
            \end{align*}
            where $\A^i_j(\b y) = \b A^i_j \b y + \b b^i_j$ are affine transforms with matrices $\b A^i_j \in \RR^{d_j^i \times d_{j-1}^i }$ and $\b b^i_j \in \RR^{d^i_j}$. Define
            \begin{align}
                \Phi := \A_{\L+1} \circ \rho \circ \A_{\L} \circ  \cdots  \circ \rho \circ \A_1 ,
            \end{align}
            where $\A_j (\b y) = \b A_j \b y + \b b_j $ with
            \begin{align}
            \begin{aligned}
                   \b A_1 =
            \begin{pmatrix}
                 \b A^1_1  \\
                 \vdots \\
                 \b A^m_1
            \end{pmatrix},\quad
                \b b_1 =
            \begin{pmatrix}
                 \b b^1_1  \\
                 \vdots \\
                 \b b^m_1
            \end{pmatrix}
            ,
            \end{aligned}
            \end{align}
            and when $2\leq j \leq \L+1$,
            \begin{align}
            \begin{aligned}
                   \b A_j =
            \begin{pmatrix}
                 \b A^1_j & \cdots & 0 \\
                 \vdots   & \ddots & \vdots \\
                 0        & \cdots & \b A^m_j
            \end{pmatrix},\quad
                \b b_j =
            \begin{pmatrix}
                 \b b^1_j  \\
                 \vdots \\
                 \b b^m_j
            \end{pmatrix}
            .
            \end{aligned}
            \end{align}
            It is easy to see that $\Phi(\b x) = \left(\Phi_1(\b x)^\top, \Phi_2(\b x)^\top  , \dots,\Phi_m(\b x)^\top\right)^\top$ for any $\b x \in \RR^d$. Since compared with $\Phi_i$, no other nonzero elements are introduced in $\b A_j \in \RR^{d_j \times d_{j-1}}$ and $d_j \leq \sum_{i=1}^m d^i_j$ , the SignReLU neural network $\Phi$ is of depth $\L$, width $\sum_{i=1}^m\W_i$, and number of weights $\sum_{i=1}^m\N_i$.
        \end{proof}

    Since in the following, we will frequently use \lemref{compos}, \lemref{parallel} and \lemref{concat}, when we handle the composition/summation/concatenation between neural networks, we will simply define the resulting neural network as the composition/summation/concatenation and recompute the depth, width, and number of weights accordingly.

	We first give the proof of \lemref{times}, which follows some ideas of \cite{shen2021deep}[Lemma 17].
	
	\begin{proof}[{\bf Proof of \lemref{times}}]
            Since the product gate $x\cdot y = \f{1}{2}(x+y)^2 - x^2 - y^2$ relies on squaring function, we first construct a neural network that can realize $x^2$.

		For any $x \in [-1,1]$, we have $-x-1 \leq 0$, $-x-2 \leq 0$ and thereby
		\begin{align}\label{lem1eq1}
            \begin{aligned}
                & 1-12\rho(-x-1)+12\rho(-x-2) \\ &=1-12\f{-x-1}{x+2}+12\f{-x-2}{x+3} \\
			 &= 1 - \f{12}{(x+2)(x+3)} .
            \end{aligned}
		\end{align}
		
		Hence, combining \eqref{lem1eq1} and \eqref{rho} with the fact that $0<(x+2)(x+3)\leq 12$ for $x\in [-1,1]$, we can get
		\begin{align}\label{lem1eq2}
            \begin{aligned}
                & 12 \rho \left(1-12\rho(-x-1)+12\rho(-x-2) \right) \\
			 &= \left(1-\f{12}{(x+2)(x+3)}\right)(x+2)(x+3) \\
			 &=x^2 + 5x - 6.
            \end{aligned}
		\end{align}
		
		As $\rho(x+1)=x+1$ and $6-5x \geq 0$ for $x \in [-1,1]$, we have
            \begin{align}\label{lem1eq3}
                \begin{aligned}
                    11\rho\left( \f{11-5\rho(x+1)}{11} \right) = 11\rho \left( \f{6-5x}{11} \right) = 6-5x.
                \end{aligned}
            \end{align}

		Hence, adding \eqref{lem1eq2} and \eqref{lem1eq3}, we find the following SignReLU neural network $\Phi_1$ that realize $x^2$
		\begin{align}\label{square}
		\Phi_1(x):=12\rho\Big(1-12\rho (-x-1) +12\rho(-x-2) \Big) + 11\rho(\f{11-5\rho(x+1)}{11})
		=x^2.
		\end{align}
		The above formula corresponds to a realization of $x^2$ by a SignReLU neural network with $\L = 2$, $\W=3$, and $\N = 13$.	
		Then for $x,y \in [-M,M]$, substituting \eqref{square} into the following expression, the neural network
		\begin{align}\label{lem1phi}
            \begin{aligned}
                \Phi(x,y) &:= 2M^2  \Phi_1\left( \rho\left(\f{1}{2M}(1,1)(x,y)^\top + 1\right) - 1\right) - 2M^2\Phi_1\left( \rho\left(\f{1}{2M}(1,0)(x,y)^\top + 1\right) - 1\right) \\
                &\quad - 2M^2 \Phi_1\left( \rho\left(\f{1}{2M}(0,1)(x,y)^\top + 1\right) - 1\right) \\
                &= 2M^2 \Big( \Phi_1(\f{x+y}{2M}) - \Phi_1(\f{x}{2M})- \Phi_1(\f{y}{2M}) \Big)\\
                &=xy,
            \end{aligned}
		\end{align}
		is a realization of the product function $x\cdot y$ on $[-M, M]\times [-M,M]$ by a SignReLU neural network. Combining \lemref{compos}, \lemref{parallel} and \eqref{square}, the network $\Phi$ is of $\L = 4$, $\W \leq 9$ and $\N \leq 63$. This proves the statement in (i).
		
		To see (ii), noticing that $1-x/a<0$ for $x>a>0$,  we have that
		\begin{align*}
		\rho\big( 1-\f{x}{a} \big)
		= \f{1-\f{x}{a}}{1 - (1-\f{x}{a}) }= -1 + \f{a}{x}.
		\end{align*}
		Thus
            \begin{align}\label{1x}
                \f{1}{a} \rho \left(1-\f{1}{a}x \right)+\f{1}{a} =\f{1}{x}
            \end{align}
		is able to be realized by a neural network with $\L = 1$, width $\W = 1$ and $\N = 4$.

        Based on \eqref{1x}, we define $\Phi_2(x,y)$ as
            \begin{align}\label{phi2}
                \begin{aligned}
                    \Phi_2(x,y) :=
                \begin{pmatrix}
                 \f{1}{a}  & 0 \\
                 0 &  1
                \end{pmatrix}
                \rho
                \left(
                \begin{pmatrix}
                    -\f{1}{a} & 0 \\
                    0 & 1
                \end{pmatrix}
                \begin{pmatrix}
                    x\\
                    y
                \end{pmatrix}
                +
                \begin{pmatrix}
                    1 \\
                    M
                \end{pmatrix}
                \right)
                +
                \begin{pmatrix}
                    \f{1}{a} \\
                    -M
                \end{pmatrix}
                =
                \begin{pmatrix}
                    \f{1}{x}\\
                    y
                \end{pmatrix}
                ,\quad \forall (x,y)\in [a,M]\times[-M,M].
                \end{aligned}
            \end{align}
		Then, by \eqref{lem1phi} and \lemref{compos}, the neural network $\Phi_3:=\Phi \circ \Phi_2$ is of $\L = 6$, $\W = 9$ and $\N \leq 71$ and satisfies $\Phi_3(x,y) = \f{y}{x}$, for any $x \in [a, M]$ and $y \in [-M, M]$.
	\end{proof}
	
	In \eqref{lem1phi}, we can see all the quantities like the one in the second equality is able to be represented as a fully connected neural network with depth, width and number of weights only increase in terms of the input dimension. In the following, for simplicity, we will use this observation without explanation.
	
	\begin{proof}[{\bf Proof of \lemref{ra}}]
		
		Without loss of generality, we consider the construction on domain $[-1,1]$. We start by proving that any polynomial on $[-1,1]$ of degree at most $n$ can be achieved by a SignReLU neural network. Then rational functions are shown by combining polynomial results and product \& division gates in \lemref{times}.

		Let us first consider how to realize a neural network $\phi$ that is fed $(x, w, y,z )^\top \in [-1,1]\times[-M,M]^3$ and output $(x, axw+by, w, z+cw  )^\top \in [-1,1]\times[-M,M]^3$ for some given constants $a,b,c$. By \lemref{times}, there exists a SignReLU neural network $\psi_1(x,y)$ such that $\psi_1(x,y) = xy$ for any $x,y\in[-M,M]$. The following neural network $\psi_2$ obviously realizes an identity map thanks to the linear part of SignReLU \eqref{SignReLU}
        \begin{align}\label{ra11}
            \psi_2(y):= \rho(y+M)-M  = y, \forall y\in[-M,M].
        \end{align}

    	Notice that $\psi_2 \circ \psi_2 = \psi_2$ on $[-M,M]$. We will abuse $\psi_2$ to be any neural network that may have arbitrary depth and realizes the indentity map on $[-M, M]$, for the sake of the conditions needed in \lemref{parallel} and \lemref{concat}. If $\psi_2$ has depth $L$, then the number of weights is no more than $4L$.
    	
        Now we are able to see the construction
        \begin{align}\label{ra22}
            \begin{aligned}
                \phi(x,w,y,z) &= \left( \psi_2(x), a\psi_1(x,w)+ b \psi_2(y), \psi_2(w), \psi_2(z) + c\psi_2(w) \right)^\top \\
                &=(x, axw+by, w, z+cw)^\top
            \end{aligned}
        \end{align}
        satisfy our needs and by \lemref{times}, \lemref{parallel} and \lemref{concat}, it is easy to see that it is a SignReLU neural network with $\L $, $\W $ and $\N $ are some constants.

        Assume that the polynomial $P_n$ has the expansion $P_n(x) = \sum_{i=0}^n d_i p_i(x)$, where $p_i$ are Legendre polynomials. Recall that Legendre polynomials $p_j(x)$ satisfies a three-term recurrence relationship
		\begin{align}\label{three-term}
		p_{j+1} (x) = \f{2j+1}{j+1} x p_{j}(x)  - \f{j}{j+1} p_{j-1}(x),
		\end{align}
		where $p_0(x) \equiv 1$, $p_1(x) = x$.

        Define
        \begin{align}
            \Phi_j(x) := \underbrace{\phi \circ \phi \circ \cdots \circ\phi(x)}_{j-1}\circ \Phi_1( x),
        \end{align}
        and $\Phi_1(x):= \left(x, x,1,d_0  \right)^\top$.
        Then according to \eqref{ra22}, \eqref{three-term}, letting $a = \f{2+1}{1+1}$, $b = -\f{1}{1+1}$ and $c = d_1$, we have
        \begin{align}
            \Phi_2(x) = \phi \circ \Phi_1(x) = \left( x, a \psi_1(x,x) + b \psi_2(1), \psi_2(x), \psi_2(d_0) + d_1\psi_2(x) \right)^T =\left( x,p_2(x), p_1(x), d_0+d_1x  \right)^T.
        \end{align}
        Assume that the following equality holds
        \begin{align}
            \begin{aligned}
                \Phi_j(x) = \left( x, p_j(x),p_{j-1}(x), \sum_{i=0}^{j-1}d_{i}p_{i}(x)  \right)^\top.
            \end{aligned}
        \end{align}
        Then we have
        \begin{align}\label{ra33}
            \begin{aligned}
                \Phi_{j+1}(x) &= \phi \circ \Phi_{j}(x) \\
                &=  \left( x, a xp_j(x)+ b p_{j-1}(x),p_{j}(x), \sum_{i=0}^{j-1}d_{i}p_{i}(x) + c p_j(x)  \right)^\top.
            \end{aligned}
        \end{align}
        If we choose $a = \f{2j+1}{j+1}$, $b = -\f{j}{j+1}$ and $c = d_j$ for \eqref{ra33}, any polynomial $P_n(x) = \sum_{i=0}^n d_i p_i(x)$ can be realized by the following neural network
        \begin{align}\label{rapoly}
            \Phi(x):=
            \begin{pmatrix}
                0&0&0&1
            \end{pmatrix}
            \rho \left( \Phi_{n+1}(x) + M \right) - M = P_n(x), \quad \forall x \in [-1, 1]
        \end{align}
        where we choose $M = \sup_{x\in [-1,1],j = 0,1,\dots,n}  \{|x|, |p_j(x)|, |P_j(x)| \}$. Combining \eqref{ra22}, \eqref{ra33}, \eqref{rapoly} and \lemref{compos}, $\Phi$ is a SignReLU neural network with $\L = O(n)$, $\W = O(1)$ and $\N= O(n)$.

        Let $R(x):= p(x)/q(x)$ where $p(x)$ and $q(x)$ are polynomials with degrees to be $n$, $m$, respectively. Then there exist SignReLU neural networks $\Phi_{n}(x) = p(x)$ and $\Phi_{m}(x) = q(x)$. If $m<n$, then use a similar idea to combine \eqref{ra11} and \eqref{rapoly}, $\Phi_{m}(x)$ can be easily extended to a SignReLU neural network with the same depth as $\Phi_n(x)$. Hence, combining \lemref{times}, \lemref{concat} with the polynomial result, $R(x)$ can be realized by a SignReLU neural network $\psi_1\left(\Phi_n(x), \Phi_m(x)\right)$ with $\L = O(\max\{n,m\})$, $\W = O(1)$ and $\N = O(\max\{n,m\})$.
	\end{proof}
	
	The following lemma shows how SignReLU nets can approximate ReLU.
	
	\begin{proof}[{\bf Proof of \lemref{relu}}]
		Define $\Phi(x):= \f{\rho(nx)}{n}$, it is easy to check that for any $x \in \RR$, we have $|\sigma(x)-\Phi(x)|\leq \f{1}{n}$. Then the first statement follows by taking $n \geq \f{1}{\varepsilon}$.
		
		To prove (ii), we define $\Phi_m(x):= \rho \circ \rho \circ \cdots \circ \rho (x)$ with $m$ compositions which is equal to $x$ when $x$ is nonnegative and $\f{x}{1-mx}$ otherwise. Then $\Phi_m(x) = \sigma(x)$ for $x\geq 0$ and $|\Phi_m(x) - \sigma(x)| = \f{-x}{1-mx} \leq \f{1}{m}$ for $x<0$. Hence the statement in (ii) follows.
		
		For the last statement, we choose the network $\Phi(x): = \Phi_n \circ \Phi_n  \circ \cdots \circ \Phi_n (x) = \f{x}{1-mnx}$ with $m$ compositions. It is easy to see the conclusion.
	\end{proof}

    \begin{proof}[{\bf Proof of \propref{exp}}]
            The idea of the proof is to construct a SignReLU neural network $\phi_0$ that approximates $e^{-|x|}$, and then combine it with previous results for the product gate (\lemref{times}) and rational functions (\lemref{ra}) to obtain approximation rates for target functions.

            {\bf Step 1: Constructing SignReLU net $\phi_0$ that approximates $e^{-x}$.}
		Let $\psi(x):= \f{1}{\lambda } \left(\rho(\lambda x) + \rho(-\lambda x)
		\right)$ for $x\in \RR$ and $\lambda >1$. It is easy to see that
		\[\psi(x)=\left\{
		\begin{array}{ll}
		x-\f{x}{1+\lambda x}, & \hbox{if $x\geq 0$,} \\
		-x+\f{x}{1-\lambda x}, & \hbox{if $x<0$,}
		\end{array}
		\right.
		\]
		which implies that $0 \leq \psi(x) \leq |x|$ and $\big | \psi(x)-|x|\big | \leq
		\f{1}{\lambda }$ for any $x \in \RR$.
		Furthermore, since $1-e^{-x}\leq x$ and $e^{-\psi(x)}\leq 1$ for any $x\in\RR$, we have
		\begin{align}\label{e-1}
            \begin{aligned}
                \big|e^{-|x|}-e^{-\psi(x)} \big| =\left|e^{-\psi(x)}\right| \left|1-e^{ -\left(|x|-\psi(x) \right) }  \right|  \leq \left|1-e^{ -\left(|x|-\psi(x) \right) }  \right|
		 \leq \f{1}{\lambda }, \quad \forall x\in \RR.
            \end{aligned}
		\end{align}
		On the other hand, by a classical result on rational approximation to $e^{-x}$,
		$x\geq 0$ (see, e.g.\cite{lorentz1996constructive}), for any $n\in\NN$, there
		exists a polynomial $q(x)$ of degree at most $n $ such that
		\begin{align}\label{re27}
		\Big | \f{ 1 }{ q(x) } - e^{-x} \Big | \leq \sqrt{2} 3^{-n}, \ \forall x
		\geq 0.
		\end{align}
		Combining \eqref{e-1} and \eqref{re27}, we have
		\begin{align}\label{expphi0}
            \begin{aligned}
                \quad \Big | \f{ 1 }{ q(\psi(x)) } - e^{-|x|} \Big |
		&\leq  \Big|\f{ 1 }{ q(\psi(x)) } - e^{-\psi(x)} \Big| + \Big | e^{-\psi(x)}
		- e^{-|x|} \Big | \\
		&\leq \sqrt{2} 3^{-n} + \f{1}{\lambda }, \ \forall x\in \RR.
            \end{aligned}
		\end{align}
		Finally, taking $\lambda := 3^{n}$, by \lemref{ra}, we can construct a SignReLU
		neural network $\phi_0(x)$ with depth $\L = O(n)$, width $\W = O(1)$ and number of weights
		$\N = O(n)$ such that $\phi_0(x)= Q \circ \rho \circ \psi (x) =  \f{1}{q(\psi(x))}$ and $\left|\phi_0(x) - e^{-|x|}\right|\leq 3^{1-n}$ where $Q(x)$ is a SignReLU neural network and satisfies $Q(x) = \f{1}{q(x)}$ and $\rho \circ \psi(x) = \psi(x)$ since $\psi(x) \geq 0$, $\forall x \in \RR$.

		{\bf Step 2: Approximating $e^{-\|\b x\|_1}$.} Let $\psi_d(\b x) = \sum_{j=1}^{d} \psi(x_j)$ and
		$\Phi_d(\b x) = Q \circ \rho \circ \psi_d(\b x)$. Applying the same arguments as in \eqref{expphi0}, we have that
		\begin{align*}
		\left|e^{-\|\b x\|_1}- \Phi_d(\b x)\right|
		&\leq \sqrt 2 3^{-n}+\f d{\lambda}.
		\end{align*}
		Then by taking $\lambda=3^n d$, we can get the desired result in $d$-dimensional
		case.	

            {\bf Step 3: Approximating $e^{-\|\b x\|_2^2}$.} By \lemref{times}, there exist SignReLU neural networks $\phi_i(x) = x^2 $, $i=1,\dots,d$ such that $\|\b x \|^2_2 = \sum_{i=1}^d \phi_i(x_i)$. Define $\Phi_d(\b x):= Q \circ \rho \left(\sum_{i=1}^d \phi_i(x_i) \right)$. Then combining \eqref{re27}, we have
            \begin{align}\label{exp33}
                \left|\Phi_d(\b x) - e^{-\|\b x \|_2^2}\right| \leq  \sqrt{2}3^{-n}.
            \end{align}
            Letting $n = \ln (\varepsilon^{-1})+1$ and combining \eqref{exp33} with \lemref{parallel} and \lemref{compos}, we can get the desired result.
	\end{proof}

        \begin{proof}[{\bf Proof of \thmref{relunets}}]
            Given any fixed rational activation function $R(x)$ \cite{boulle2020rational}, it can be produced by a SignReLU network with fixed size (only depends on the degree of $R(x)$, \lemref{ra}), and thus the first statement holds.


            Let $\{ \A_\ell \}_{\ell=1}^{L}$ be a collection of linear transforms. For any $\A_\ell(\b y) = \b A_\ell \b y + \b b_\ell$ for some matrix $\b A_\ell$ and vector $\b b_\ell$, we denote $a_\ell := \max \{\|\b A_\ell\|_{\infty,\infty}, \|\b b_\ell \|_\infty  \}$, where $\|\b A \|_{\infty,\infty}:= \max_{ij} \{\b A_{ij} \}$. Without loss of generality, we assume that $a_\ell \leq 1$ for all $\ell$.

            Define a ReLU neural network $f^{(L)}_\sigma $ with $\L = L-1$, $\W = W$ and $\N = N$ as
            \begin{align}\label{relunets1}
            \begin{aligned}
                f_\sigma^{(1)}&:= \sigma  \circ \A_1 ,\\
                f_\sigma^{(\ell+1)}&:=\sigma  \circ \A_{\ell+1}   \circ f_\sigma^{(\ell)},\\
                f_\sigma^{(L)} &:= \A_{L} \circ f_\sigma^{(L-1)},
            \end{aligned}
            \end{align}
            where $\A_L: \RR^{d_{L-1}} \rightarrow \RR$,
            and a SignReLU neural network $f^{(L)}_\rho$ activated by $\rho$ as
            \begin{align}\label{relunets2}
            \begin{aligned}
                f_\rho^{(1)}&:= \rho \circ \f{1}{\delta}\A_1 ,\\
                f_\rho^{(\ell+1)}&:=\rho \circ \A_{\ell+1}   \circ f_\rho^{(\ell)},\\
                f_\rho^{(L)} &:= \delta\A_{L} \circ f_\rho^{(L-1)}.
            \end{aligned}
            \end{align}
            for some constant $\delta>0$. In the following, we denote $f^{(\ell)}_\sigma(\b x)_j$ the $j$-th element of $f^{(\ell)}_\sigma(\b x)$. Since $a_\ell \leq 1$ and $d_{L-1} \leq W$, we have
            \begin{align}\label{relunet_last}
                \begin{aligned}
                    &\left| f^{(L)}_\sigma(\b x) - f^{(L)}_\rho (\b x) \right| \\
                    &\leq \sum_{j=1}^{d_{L-1}} (\b A_L)_{1j} \left | f^{(L-1)}_\sigma(\b x)_j - \delta f^{(L-1)}_\rho(\b x)_j \right | \\
                    & \leq W\max_{j=1,\dots,d_{L-1}} \left | f^{(L-1)}_\sigma(\b x)_j - \delta f^{(L-1)}_\rho(\b x)_j \right |.
                \end{aligned}
            \end{align}
            Notice that
            \begin{align}\label{relunets3}
                \begin{aligned}
                     &\left | f^{(\ell)}_\sigma(\b x)_j - \delta f^{(\ell)}_\rho(\b x)_j \right | \\
                     &=  \left| \sigma\left( \sum_{k = 1}^{d_{\ell-1}}   (\b A_{\ell})_{jk} f^{(\ell-1)}_\sigma(\b x)_k \right ) - \delta \rho\left(\sum_{k = 1}^{d_{\ell-1}}   (\b A_{\ell})_{jk} f^{(\ell-1)}_\rho (\b x)_k \right ) \right | .
                \end{aligned}
            \end{align}
            If $L = 1$, then by \eqref{relunets3}, we can get
            \begin{align}\label{relunets4}
                \begin{aligned}
                     &\left | f^{(1)}_\sigma(\b x)_j - \delta f^{(1)}_\rho(\b x)_j \right | \\
                     &=  \left| \sigma\left( \sum_{k = 1}^{d_{0}}   (\b A_{1})_{jk}  x_k \right ) - \delta \rho\left(\sum_{k = 1}^{d_{0}}  \f{1}{\delta} \cdot (\b A_{1})_{jk}  x_k \right ) \right | \\
                     &\leq \delta.
                \end{aligned}
            \end{align}
            where in the last inequality, we used $|\sigma(x) - \delta \rho(x/\delta)| \leq \delta$, $\forall x \in \RR$.


            Assume that for $L = \ell-1$, the following inequality holds
            \begin{align}\label{relunets5}
                \begin{aligned}
                     \max_{j = 1,\dots, d_{\ell-1}}\left | f^{(\ell-1)}_\sigma(\b x)_j - \delta f^{(\ell-1)}_\rho(\b x)_j \right | \leq C_{\ell-1}\delta.
                \end{aligned}
            \end{align}
            for some constant $C_{\ell-1}>0$.
            Combining \eqref{relunets3} with \eqref{relunets5} and $a_\ell \leq 1$, we obtain
            \begin{align}\label{relunets6}
                \begin{aligned}
                     &\left | f^{(\ell)}_\sigma(\b x)_j - \delta f^{(\ell)}_\rho(\b x)_j \right | \\
                     &=  \left| \sigma\left( \sum_{k = 1}^{d_{\ell-1}}   (\b A_{\ell})_{jk} f^{(\ell-1)}_\sigma(\b x)_k \right ) - \delta \rho\left(\sum_{k = 1}^{d_{\ell-1}}   (\b A_{\ell})_{jk} f^{(\ell-1)}_\rho (\b x)_k \right ) \right | \\
                     &\leq
                     \left| \sigma\left( \sum_{k = 1}^{d_{\ell-1}}   (\b A_{\ell})_{jk} f^{(\ell-1)}_\sigma(\b x)_k \right ) - \sigma\left(\delta \sum_{k = 1}^{d_{\ell-1}}   (\b A_{\ell})_{jk} f^{(\ell-1)}_\rho (\b x)_k \right ) \right |  \\
                     &\quad +
                     \left| \sigma\left(\delta \sum_{k = 1}^{d_{\ell-1}}   (\b A_{\ell})_{jk} f^{(\ell-1)}_\rho(\b x)_k \right ) - \delta \rho\left(\sum_{k = 1}^{d_{\ell-1}}   (\b A_{\ell})_{jk} f^{(\ell-1)}_\rho (\b x)_k \right ) \right | \\
                     &\leq WC_{\ell-1} \delta + \delta := C_{\ell} \delta
                     ,
                \end{aligned}
            \end{align}
            where the last inequality follows from $|\sigma(x)-\sigma(y)|\leq |x-y|$ for any $x,y$ and $\left| \sigma(\delta x)-\delta \rho(x) \right| \leq \delta$ for any $x$.
            Hence, combining \eqref{relunets6}, \eqref{relunet_last} and choosing $\delta$ which satisfies $W C_{L-1} \delta \leq  \varepsilon$, we conclude $\left| f^{(L)}_\sigma(\b x) - f^{(L)}_\rho (\b x) \right| \leq \varepsilon$ for any $\b x\in\RR^d$. The proof is completed.

        \end{proof}

	\section{Proof of \thmref{sobolev}, \thmref{piecewise smooth}, and \thmref{koborov}}
	We first derive a lemma of orthogonal expansions, which will play a key role in sequential proofs.
	\begin{lem}\label{decomposition}
		Let $r, d\in\NN_+$, $p\geq 1$ and $\b n \in \NN^d$.
		\begin{enumerate}
			\item For any $f\in W^r_p([-1,1]^d, w)$ and $N\in \NN$, there exists $c_{\b n}\in \RR$, $\|\b n\|_\infty \leq N$, such that
			\begin{equation}\label{decomposition_1}
			\|f-\sum_{\|\b n\|_\infty \leq N} c_{\b n} {\b x}^{\b n}\|_{p,w}\leq C N^{-r}\|f\|_{W^r_p}.
			\end{equation}
			\item For any $f\in K^r_p([-1,1]^d,w)$ and $N\in \NN$, there exists $c_{\b n}\in \RR$, $\|\b n\|_{\pi} \leq N$, such that
			\begin{equation}\label{decomposition_2}\|f-\sum_{\|\b n\|_{\pi} \leq N} c_{\b n} {\b x}^{\b n}\|_{p,w}\leq C N^{-r}(\log N)^{(d-1)(r+1)} \|f\|_{K^r_p},
			\end{equation}
			where $\|\b n\|_{\pi}=\prod_{j=1}^d \max\{1, n_j\}$, $\|\cdot\|_{p,w}$ is the weighted $L_p$ norm with $w$.
		\end{enumerate}
	\end{lem}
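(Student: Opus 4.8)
The plan is to realize the monomial expansions through the tensor-product Chebyshev basis $T_{\b n}(\b x)=\prod_{j=1}^d T_{n_j}(x_j)$, which is orthogonal for the weight $w$, and to reduce the whole statement to classical trigonometric approximation via the substitution $x_j=\cos\theta_j$. The first observation I would record is that for any lower set $\Lambda\subset\ZZ_{+}^d$ (one closed under componentwise decrease) we have $\Span\{\b x^{\b n}:\b n\in\Lambda\}=\Span\{T_{\b n}:\b n\in\Lambda\}$, since $T_n$ has exact degree $n$ and $x^n$ is a combination of $T_0,\dots,T_n$; this applies both to the box $\{\|\b n\|_\infty\le N\}$ and to the hyperbolic cross $\{\|\b n\|_\pi\le N\}$, because $\max\{1,m_j\}\le\max\{1,n_j\}$ whenever $m_j\le n_j$. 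Hence it suffices to produce, in each case, a polynomial lying in the span of the corresponding Chebyshev block that meets the stated error bound; re-expanding that polynomial in monomials then yields the coefficients $c_{\b n}$.

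Next I would pass to the torus. Writing $g(\bt)=f(\cos\theta_1,\dots,\cos\theta_d)$ produces a function even and $2\pi$-periodic in each variable, and since the Jacobian $|\sin\theta_j|=(1-x_j^2)^{1/2}$ exactly cancels the factor $(1-x_j^2)^{-1/2}$ in $w$, the map $f\mapsto g$ is an isometry (up to a dimensional constant) from $L_p([-1,1]^d,w)$ onto $L_p(\mathbb T^d)$. The technical point is that this map also carries the weighted Sobolev norm $\|\cdot\|_{W^r_p}$, resp.\ the weighted Korobov norm $\|\cdot\|_{K^r_p}$, into the ordinary periodic Sobolev, resp.\ Korobov, norm, up to constants depending only on $r$ and $d$: differentiating $g$ via the Fa\`a di Bruno formula produces negative powers of $\sin\theta_j$ at the endpoints, and these are precisely absorbed by $w$, so no smoothness is lost in either direction. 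Under this substitution a polynomial in the Chebyshev block over $\{\|\b n\|_\infty\le N\}$ corresponds to an even trigonometric polynomial of coordinatewise degree $\le N$, and one over the hyperbolic cross corresponds to a trigonometric polynomial supported on the Fourier hyperbolic cross $\{\prod_j\max\{1,|k_j|\}\le N\}$, so the problem becomes one of trigonometric approximation of $g$.

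For part 1 I would invoke the multivariate Jackson theorem: for $g\in W^r_p(\mathbb T^d)$ there is a trigonometric polynomial of coordinatewise degree at most $N$ approximating $g$ in $L_p(\mathbb T^d)$ to within $C_{r,d}N^{-r}\|g\|_{W^r_p}$, valid for all $1\le p\le\infty$ (for $p\in\{1,\infty\}$ via tensorized de la Vall\'ee Poussin means), which one can take even by symmetrization; pulling it back and re-expanding gives \eqref{decomposition_1}. For part 2 I would use the corresponding estimate for dominating mixed smoothness: approximation by trigonometric polynomials supported on the hyperbolic cross $\Gamma(N)$ gives error at most $CN^{-r}(\log N)^{(d-1)(r+1)}\|g\|_{K^r_p}$, obtained by the standard dyadic (Littlewood--Paley) decomposition of the complement of $\Gamma(N)$, a bounded de la Vall\'ee Poussin projector on each dyadic block, and the combinatorial count $\#\{\b k:\|\b k\|_\pi\le M\}\asymp M(\log M)^{d-1}$ to sum the block contributions; pulling back gives \eqref{decomposition_2}.

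I expect the main obstacle to be split between two places. The norm transfer under $x_j=\cos\theta_j$ is the step requiring the most care to carry out by hand: one must verify uniformly over all multi-indices with $\|\b k\|_\infty\le r$ that the endpoint blow-up of the $(\sin\theta_j)^{-1}$ factors arising from Fa\`a di Bruno is exactly matched by the weight, so that $D^{\b k}g\in L_p(\mathbb T^d)$ is controlled by $D^{\b k}f\in L_p([-1,1]^d,w)$ and conversely. The sharp logarithmic exponent $(d-1)(r+1)$ in part 2 is the other delicate point, and there I would lean on the established hyperbolic-cross approximation literature rather than reprove the dyadic estimates from scratch; once both ingredients are in place, parts 1 and 2 follow as described.
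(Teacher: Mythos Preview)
Your proposal is correct and follows essentially the same route as the paper: both reduce to trigonometric approximation on $\mathbb{T}^d$ via the substitution $x_j=\cos\theta_j$, exploit that the Jacobian cancels the weight $w$ so that $\|\cdot\|_{p,w}$ becomes the unweighted $L_p$ norm, and then pull back a trigonometric approximant (a coordinatewise-degree-$N$ polynomial for part~1, a hyperbolic-cross polynomial for part~2, the latter quoted from the hyperbolic-cross literature). The only cosmetic difference is that for part~1 the paper carries out the Jackson-type estimate directly via a Littlewood--Paley square-function argument on the tail $\{|\b k|_\infty\ge N\}$ rather than citing Jackson/de~la~Vall\'ee~Poussin, and it is somewhat terser than you are about the Fa\`a~di~Bruno norm transfer that you rightly flag as the step needing care.
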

	\begin{proof}
		For $f\in  W^r_p\left([-1,1]^d,w\right)$, we define a $2\pi$-periodic function $G_f$  by
		\[G_f(\b \bt)=
		f\left(\cos \t_1,\cdots, \cos \t_d\right),  \ \text{for}\ \t_j\in [-\pi,\pi], \ j=1,2,\cdots, d.\]

            Noting that, in case $d=1$,
            \begin{align}\label{lem7_1}
                \begin{aligned}
                    \int_{-\pi}^{\pi} G_f(\theta) d \theta = \int_{-\pi}^{0} G_f(\theta) d \theta + \int_{0}^{\pi} G_f(\theta) d \theta.
                \end{aligned}
            \end{align}

            We apply the change of variable $x = \cos(\theta)$ to \eqref{lem7_1}. Note that the Lebesgue measure $\mu$ of zero $\mu(0) = 0$, $\sin(\theta) = \pm \sqrt{1-\cos^2(\theta)}$. Since $d\theta = -\f{dx}{\sin(\theta)}$, $\sin(\theta)>0, \theta \in (0,\pi]$ and $\sin(\theta)<0, \theta \in [-\pi,0)$, we get
            \begin{align}\label{lem7_2}
                \begin{aligned}
                    \int_{-\pi}^{0} G_f(\theta) d \theta = \int_{\cos(-\pi)}^{cos(0)} f(x) \f{-dx}{-\sqrt{1-\cos^2(\theta)}} =  \int_{-1}^{1} f(x) (1-x^2)^{-1/2} d x.
                \end{aligned}
            \end{align}
            Hence
            \begin{align}\label{lem7_3}
                \begin{aligned}
                    \int_{-\pi}^{\pi} G_f(\theta) d \theta  =2  \int_{-1}^{1} f(x) (1-x^2)^{-1/2} d x.
                \end{aligned}
            \end{align}

		A similar approach shows the following results for $d>1$
		\begin{equation}\label{claim1}
		\int_{[-1,1]^d} f(\b x)w(\b x)d\b x=\int_{[-\pi,\pi]^d} G_f(\b \t) d\b \t,
		\end{equation}
		here recall that $w(\b x)=2^d\prod_{j=1}^d (1-x_j^2)^{-1/2}$. We can see from \eqref{claim1} that if $f \in L_p\left([-1,1]^d,w\right)$, then $G_f \in L_p\left([-\pi,\pi]^d\right)$.

		Since $G_f$ is even, we obtain the following Fourier expansion of $G_f$ in the sense of $L_p$
		\begin{align*}
		G_f(\b \t)=&\sum_{m=1}^\infty\sum_{m-1\leq \|\b n\|_\infty< m}   \hat {G_f}(\b n)\prod_{j=1}^d \cos(n_j \t_j)
		\end{align*}
		where the Fourier coefficients
            \begin{align}
                \hat{G_f}(\b n):= \int_{[-\pi,\pi]^d} G_f(\b \t) e^{-i \b n \cdot \b \theta } d\b \t = \int_{[-\pi,\pi]^d} G_f(\b \t) \prod_{j=1}^d \cos(n_j \t_j) d\b \t.
            \end{align}
		For any $N\in \NN$, setting $\Lambda_{\ell,N}=\{\b n\in \NN^d: |n_\ell|\geq N\}$, $\ell=1,\ldots, d$, then
		\begin{align}\label{lit}
			\begin{aligned}
			\scriptsize
			&\int_{[-\pi, \pi]^d}\left|G_f(\b \t)-\sum_{\|\b n\|_\infty< N} \widehat G_f(\b n)\prod_{j=1}^d \cos(n_j \t_j)\right|^p d\b \t \\
			&\leq \sum_{\ell=1}^d\int_{[-\pi, \pi]^d}\left|\sum_{\b n\in \Lambda_{\ell,N}}n_\ell^{-r} \widehat{\p_\ell^{r}G_f}(\b n)\prod_{j=1}^d \cos(n_j \t_j)\right|^p d\b \t \\
			&\leq C\sum_{\ell=1}^d \int_{[-\pi, \pi]^d}\left|\left[\sum_{\b n \in \Lambda_{\ell,N}}|n_\ell^{-r}\widehat{\p_\ell^{r}G_f}(\b n)\prod_{j=1}^d \cos(n_j \t_j)|^2\right]^{1/2}\right|^p d\b  \t \\
			&\leq C_{d,p}N^{-rp}\|f\|^p_{W^r_p},
			\end{aligned}
		\end{align}
		where the first step we use the fact $\widehat{f^{(r)}}(n) = (in)^r \hat{f}(n)$, the second and last step follows from the Littlewood-Paley inequalities with the function \[g_1(\b \t)=\sum_{\b n\in \Lambda_{\ell,N}}n_\ell^{-r} \widehat{\p_\ell^{r}G_f}(\b n)\prod_{j=1}^d \cos(n_j \t_j)\] and $g_2(\b \t)=\sum_{\b n\in \Lambda_{\ell,N}} \widehat{\p_\ell^{r}G_f}(\b n)\prod_{j=1}^d \cos(n_j \t_j)$.
		Here recall the Littlewood-Paley inequalities
		\[\left\|\sum_{I \in \mathcal{D}} c_{I} \psi_{I}\right\|_{p} \sim \left\|\left(\sum_{I \in \mathcal{D}}\left[c_{I} \psi_{I}\right]^{2}\right)^{1/ 2} \right\|_{p},\]
		where $\mathcal D$ is an index set and $\{\psi_I\}_{I\in\mathcal D}$ is an orthogonal system, $A\sim B$ means $c_1 B\leq A\leq c_2 B$ for some positive constants $c_1,c_2$.
		By substitution $\t_j=\arccos (x_j)$ into \eqref{lit} and using \eqref{claim1}, we have
		\[\left\|f-\sum_{\|\b n\|_\infty< N} c_{\b n}{\b x}^{\b n}\right\|_{p,w} \leq C_d N^{-r}\|f\|_{W^r_p},\]
		by making \[\sum_{\|\b n\|_\infty< N} c_{\b n}{\b x}^{\b n}=\sum_{\|\b n\|_\infty< N} \widehat G_f(\b n)\prod_{j=1}^d \cos\left(n_j \arccos(x_j)\right) .\]
		

		For $f\in K^r_p\left([-1,1]^d,w\right)$ and $G_f$ as defined above, by \cite[Thm4.4.1, 4.4.2]{dung2018hyperbolic} and the fact that $G_f$ is even in each variable, we have that
		\[\|G_f-T_N(G_f)\|_p\leq N^{-r}(\log N)^{(d-1)(r+1)}\|f\|_{K_p^r},\]
		where $T_N(G_f)(\b \t)=\sum_{\|\b n\|_\pi\leq N} \widehat{G_f}(\b n) \prod_{j=1}^d\cos n_j\t_j $.
		Using the same argument above, we can have \eqref{decomposition_2}.
	\end{proof}
	
	To prove Theorem~\ref{sobolev} and ~\ref{koborov}, we also need the following lemma.
	\begin{lem}\label{mono}
		Let $n, d \in \NN_+$ and $\{\beta_i, i=1,\dots,d\}$ be a set of nonnegtive integers with $\sum_i \beta_i \leq n$. Then
		\begin{enumerate}
			\item[\rm(i)] there exists a function $\Phi$ realized by a SignReLU neural network with $\L = 5 \lceil \log_2 d \rceil -1$, $\W \leq 10d$ and $\N \leq 130 d$ such that $\Phi(\b x) = \prod_{i\leq d}x_i$ for any $\b x \in [-1,1]^d$;
			
			\item[\rm(ii)] there exists a function $\Phi$ realized by a SignReLU neural network with $\L \leq 5 \lceil \log_2 n \rceil + 5 \lceil \log_2 d \rceil $, $\W \leq 10nd$ and $\N \leq  400nd$ such that $\Phi(\b x) = \b x^{\b \beta}$, for any $ \b x \in [-1,1]^d$.
		\end{enumerate}
	\end{lem}

	\begin{proof}
        Given a SignReLU neural network $\Phi:\RR^d \rightarrow \RR^{d_1}$, we denote $\Phi(\b x)_j$, $j = 1,\dots,d_1$, the $j$-th element of $\Phi(\b x)$ for any $\b x \in \RR^d$.
		Without loss of generality, we assume that $d = 2^K$ for some integer $K$. We define the SignReLU neural network $\Phi(\b x):= \phi_K(\b x)$ iteratively by
        \begin{align}\label{mono1}
            \begin{aligned}
                \phi_1(\b x) &:= \left(\phi_0(x_1,x_2), \phi_0(x_3,x_4), \dots, \phi_0(x_{2^K-1}, x_{2^K})\right), \\
                \phi_\ell(\b x) &:= \left(\phi_0\big(\phi_{\ell-1}(\b x)_1,\phi_{\ell-1}(\b x)_2\big), \phi_0\big(\phi_{\ell-1}(\b x)_3,\phi_{\ell-1}(\b x)_4\big), \dots, \phi_0\big(\phi_{\ell-1}(\b x)_{2^{K-\ell+1}-1}, \phi_{\ell-1}(\b x)_{2^{K-\ell+1}}\big)\right),
            \end{aligned}
        \end{align}
        where $\phi_0$ is a SignReLU neural network that satisfies $\phi_0(y,z) = yz$ for any $y,z \in [-1,1]$. Iteratively, it is easy to verify the output of each $\phi_\ell$ in \eqref{mono1} satisfies
        \begin{align}\label{mono2}
            \begin{aligned}
                \phi_1(\b x) &= \left(x_1 x_2, x_3 x_4, \dots, x_{2^K-1} x_{2^K}\right), \\
                \phi_\ell(\b x) &= \left(\prod_{i=1}^{2^\ell}x_i,\prod_{i=2^\ell+1}^{2^{\ell+1}}x_i, \dots, \prod_{i=2^{K}-2^\ell+1}^{2^K}x_i  \right), \ell = 2,\dots,K-1, \\
                \phi_K(\b x) &= \prod_{i=1}^d x_i.
            \end{aligned}
        \end{align}
        We observe from \eqref{mono2} that $\Phi(\b x):= \phi_K(\b x)$ satisfies $\Phi(\b x) = \prod_{i\leq d} x_i$ for any $\b x \in [-1,1]^d$. Since each $\phi_\ell$ is the concatenation of $2^{K-\ell}$ product gate $\phi_0$, by \lemref{times} and \lemref{concat}, $\phi_{\ell}$ is of $\L = 4$, $\W \leq 9 \cdot 2^{K-\ell}$ and $\N \leq 63 \cdot 2^{K-\ell}$.
        Combining \eqref{mono1} with \lemref{compos}, the SignReLU neural network $\Phi$ is of $\L = 5 K-1 = 5\log_2 d-1$, $\W = \max_{\ell=1,\dots,K} 9 \cdot 2^{K-\ell} \leq 10d $  and $\N = 63\sum_{\ell=1}^K 2^{K-\ell} \leq 130 d $. If $2^{K-1}< d < 2^K$ for some integer $K$, we can set some $x_i \equiv 1$ for $i>d$.
		

        To prove the second statement, we first construct a SignReLU neural network $\Phi_\beta(x)$ that realizes $x^\beta$, $x \in [-1,1]$ for some integer $\beta \geq 1$. Denote $\psi(x) = \rho (\b 1 x+\b 1)-\b 1$ where $\b 1 \in \RR^\beta$ is the all-one vector. Since $ x \in [-1,1]$, $\psi(x) = (x,x,\dots,x)^T \in [-1,1]^\beta$, the SignReLU neural network $\phi_\beta(x):= \Phi_\beta \circ \psi (x)$ with $\Phi_\beta(\b x)= \prod_{i=1}^{\beta}x_i$ equals $x^\beta$ on $[-1,1]$ and by \lemref{compos}, $\phi_\beta$ is of $\L = 5\lceil \log_2 \beta \rceil $, $\W \leq 10 \beta$ and $\N \leq 130\beta $.

        Denote $\Phi_{\b \beta}(\b x):= \Phi \left(\Phi_{\beta_1}(x_1), \dots,\Phi_{\beta_d}(x_d) \right)$ where $\Phi(\b x) = \prod_{i=1}^d x_i$ for any $\b x \in [-1,1]^d$. Obviously, $ \Phi_{\b \beta}(\b x) = \b x^{\b \beta}$. Since $\beta_i \leq n$ and we can expend $\Phi_{\beta_i}(x)$ of $\L = 5\lceil \log_2 \beta_i \rceil $, $\W \leq 10\beta_i$ and $\N \leq 130\beta_i$, with \eqref{ra11} and \lemref{compos}, to a SignReLU neural network $\Tilde{\Phi}_{\beta_i}$ which is of $\L = 5\lceil \log_2 n \rceil $, $\W \leq 10n$ and $\N \leq 130n+20 \lceil \log_2 n \rceil \leq 200 n$ and satisfies $\Tilde{\Phi}_{\beta_i} (x_i) = \Phi_{\beta_i}(x_i)$ for any $x_i \in [-1,1]$. Hence, by \lemref{concat} and \lemref{compos}, the SignReLU neural network $\Tilde{\Phi}_{\b \beta}(\b x):= \Phi\left( \Tilde{\Phi}_{\beta_1} (x_1),\dots, \Tilde{\Phi}_{\beta_1} (x_1) \right)$ is of $\L \leq 5 \lceil \log_2 n \rceil + 5 \lceil \log_2 d \rceil $, $\W \leq 10nd$ and $\N \leq 130d + 200nd \leq 400nd$ and satisfies $ \Tilde{\Phi}_{\b \beta}(\b x) = \b x^{\b \beta} $, $\forall \b x \in [-1,1]^d$.


		
	\end{proof}

	\begin{proof}[{\bf Proof of \thmref{sobolev} and \thmref{koborov} \textbf{(i)}} ]
		The key idea is to employ \lemref{decomposition} to construct a SignReLU network that outputs a polynomial on $[-1,1]^d$ that approximates $f$.

	{\bf Estimation for approximating weighted Sobolev functions.}


        According to \lemref{ra}, there exist SignReLU neural networks $\phi_i(x_i)$ with $\L = CN$, $\W \leq C$ and $\N \leq CN $ for some constant $C$ such that $\phi_i(x_i) = x_i^N$, $\forall x_i \in [-1,1]$. Here for depth, width and number of weights, the constant $C$ may be different, we use a single $C$ for simplicity. We denote $\Phi_1(\b x):=\left( \phi_1(x_1),\dots,\phi_d(x_d) \right)$, which, using \lemref{concat}, is of $\L = CN$, $\W = Cd$ and $\N = CNd$.

        Since \eqref{ra33} shows intermediate layers of $\phi_i(x_i)$ output $x_i^{n_i}$, $n_i=1,\dots,N-1$. Based on $\Phi_1(\b x)$, We can add at most $d\sum_{i=1}^{N-1} i \leq N^2 d$ identity mappings \eqref{ra11} to intermediate layers of $\Phi_1(\b x)$ (those identity mappings keep all $x_i^{n_i}$ to the last output layer), and by \lemref{concat} and \lemref{compos} the resulting new network $\tilde \Phi_1(\b x)$ outputs all $x_i^{n_i}$, $i=1,\dots,d$, $n_i=1,\dots,N$, which is of $\L \leq CN$, $\W \leq Cd + N^2d$ and $\N \leq CNd + 4 N^2 d$.

        Let $\Phi_2(\b x)$ be the SignReLU neural network that takes all $x_i^{n_i}$, $i=1,\dots,d$, $n_i = 1,\dots,N$ as input and outputs $\b x^{\b n}$ for all $\|\b n \|_\infty \leq N$. Obviously, by \lemref{concat} and \lemref{mono}, $\Phi_2$ can be constructed by the concatenation of neural networks $\prod_{i=1}^{d}x_i^{n_i}$ which take $(x_1^{n_1}, \dots, x_d^{n_d})$ as inputs, for all $(n_1,\dots,n_d)\in \NN^d$. Since $\# \{\b n: \|\b n \|_{\infty}\leq N \} = (N+1)^d$, using \lemref{concat} and \lemref{mono}, the network $\Phi_2(\b x)$ is of $\L \leq 5\log_2 d$, $\W \leq 10d (N+1)^d$ and $\N \leq 130d (N+1)^d$.

        Denote $\Phi_3(\b x) = \sum_{\|\b n \|_\infty \leq N} c_{\b n} \Phi_2(\b x)_{\b n}$, where $\Phi_2(\b x)_{\b n} = \b x^{\b n}$. Then combining $\tilde \Phi_1(\b x)$, $\Phi_2(\b x)$ and $\Phi_3(\b x)$ and using \lemref{compos}, we can get a SignReLU neural network $\Phi(\b x) := \Phi_3\circ\Phi_2\circ\tilde \Phi_1(\b x) = \sum_{\|\b n \|_\infty \leq N} c_{\b n} \b x^{\b n}$ for any $\b x \in [-1,1]^d$ and is of $\L \leq CN+5\log_2 d+2$, $\W \leq Cd + N^2d + 10d(N+1)^d$ and $\N \leq 150d (N+1)^d+ CNd + 4 N^2 d$.

        According to \lemref{decomposition}, given any $f$ with $\|f\|_{W_p^r} \leq 1$, there exists a polynomial $P(\b x) = \sum_{\|\b n\|_\infty \leq N} c_{\b n} \b x^{\b n}$ such that $\|f-P \|_{p,w} \leq C N^{-r}$.
        Setting $N = C^{\f{1}{r}} \varepsilon^{-\f{1}{r}}$ and choosing $\Phi(\b x) = P(\b x)$ for any $\b x \in [-1,1]^d$, we conclude that $ \Phi$ is of $\L = O\left( \varepsilon^{-\f{1}{r}} \right)$, $\W = O\left( \varepsilon^{-\f{d}{r}} \right)$ and $\N = O\left( \varepsilon^{-\f{d}{r}} \right)$ and approximate $f$ with error $\| f- \Phi \|_{p,w}\leq \epsilon$.

        For the case of $W^r_\infty ([-1,1]^d)$, the proof is similar as Theorem 4 of \cite{boulle2020rational} and \cite{Yarosky} by noticing that \thmref{relunets} and \lemref{ra} hold.

        {\bf Estimation for approximating weighted Korobov functions.}

		
		Given any function $f$ in the Koborov space with unit norm, we use a similar idea to give the bound. Since there exixts a rational neural network that produce $x^n$ with $\L = O(\log_2 n)$ and $\N = O\left((\log_2 n)^2 \right)$ (Proposition 10, \cite{boulle2020rational}), combining \lemref{mono} \rm{(i)} and \lemref{ra}, there exists a SignReLU neural network that produce $\b x^{\b n}$, $\|\b n\|_\infty \leq N$, on $[-1,1]^d$ with $\L = O(\log_2 N)$ and $\N = O\left((\log_2 N)^2 \right)$. Notice that $\# \{\b n: \|\b n \|_{\pi}\leq N \} = O(N(\ln N)^{d-1})$.
        Hence, by \lemref{decomposition} and \lemref{parallel}, there exists a SignReLU neural network that produce $\sum_{\|\b n\|_{\pi} \leq N} c_{\b n} {\b x}^{\b n}$ with $\L = O(\log_2 N)$ and $\N = O\left(N(\log_2 N)^{d+1} \right)$ that approximates the given Koborov function $f$ with error $CN^{-r}\left( \log_2 N \right)^{(d-1)(r+1)}$.

         Choosing $N = \varepsilon^{-\f{1}{r}}\left(\log_2 (\varepsilon^{-1})\right)^{d(r+1)/r}$ and applying \lemref{mono}, we have a neural network $\Phi = \sum_{\|\b n\|_{\pi} \leq N} c_{\b n} {\b x}^{\b n}$, $\b x \in [-1,1]^d$ with $\L = O\left(\log_2 (\varepsilon^{-1})\right)$ and $\N = O\left(\varepsilon^{-\f{1}{r}}\left(\log_2 (\varepsilon^{-1})\right)^{(2dr+d+r)/r}\right)$ such that $\|f-\Phi\|_{p,w}\leq \varepsilon$.

	\end{proof}
	
	\begin{proof}[{\bf Proof of \thmref{piecewise smooth} and \thmref{koborov} \textbf{(ii)}.} ]

        A general piecewise smooth function has representation $f=f_1 + f_2 \chi_\Omega$ for some $\Omega$ and functions $f_1$, $f_2$. Thanks to the approximation results \thmref{sobolev} and \thmref{koborov} \rm{(i)}, the key step in this proof is to approximate $\chi_\Omega$ and $f_2 \chi_\Omega$.

        Given $f \in S\left( B_1(W^r_p([-1,1]^d,w)) \right)$. Then we have $f = f_1 + f_2 \chi_\Omega$, where $f_1, f_2 \in B_1\left(W^r_p([-1,1]^d,w)\right)$ and
        \begin{align}\label{45_Om}
            \begin{aligned}
                \Omega =\left \{\b x\in [-1,1]^d: h(\b x)< g(\b x) \right \} =\left \{\b x\in [-1,1]^d: \max \{g(\b x)-h(\b x),0 \} > 0  \right \},
            \end{aligned}
        \end{align}
        for some SignReLU neural network $h, g $.

        The expression \eqref{45_Om} of $\Omega$ indicates that we can rewrite $\chi_{\Omega}(\b x)$ as $\chi_{\Omega}(\b x) = \chi_{(0,\infty)} \circ \left( g-h \right)(\b x)$.

        Lemma 6.1 [Chapeter 7 \cite{lorentz1996constructive} shows that there exists a rational function $R_n(x)$, $n\geq 5$ that determined by two polynomials with degrees no more than $n$, such that
        \begin{equation}\label{45_rn}
            \begin{split}
                \left| \chi_{(0,\infty)}(x) - R_n(x) \right | \leq 2 e^{-\sqrt{n}},\quad &\forall  x \in [-1,-e^{-\sqrt{n}}] \cup [e^{-\sqrt{n}},1] \\
                0 \leq R_n(x)\leq 1, \quad &\forall x \in [-e^{-\sqrt{n}} ,e^{-\sqrt{n}}].
            \end{split}
        \end{equation}
        Define $\Omega_n$ as
        $$\Omega_n :=\left  \{ \b x\in [-1,1]^d: \Big| g(\b x)-h (\b x) \Big| \leq e^{-\sqrt{n}} \right \},$$
        and $\Omega_n^c = [-1,1]^d - \Omega_n$.
        Then \eqref{45_rn} implies
        \begin{align}\label{45_ap1}
            \begin{aligned}
                & \left\|  \chi_{\Omega}(\b x) - \tilde \chi_{\Omega}(\b x) \right \|_{w,p,\Omega_n^c}^p \\
                &:=\left\|  \chi_{\Omega}(\b x) - R_n  \circ \left( g-h \right)(\b x) \right \|_{w,p,\Omega_n^c}^p \\
                &\leq \int_{[-1,1]^d\backslash \Omega_n}\left| (\chi_{(0,\infty)}-R_n)  \circ \left( g-h \right)(\b x) \right|^pw(\b x) d\b x \\
                &\leq 2^pC_we^{-p\sqrt{n}}  \\
                &:= C_{p,w} e^{-p\sqrt{n}},
            \end{aligned}
        \end{align}
        where the constant $C_w $ only depends on $w$ and constant $C_{p,w}$ only depends on $p$, $w$. In the following, we denote $\tilde \chi_{\Omega}(\b x) = R_n  \circ \left( g-h \right)(\b x)$.

        Let functions $\tilde f_1(\b x)$ and $\tilde f_2(\b x)$ be SignReLU neural networks
        that satisfy $\|\tilde f_1 - f_1\|_{p,w}\leq \varepsilon$ and $\|\tilde f_2 - f_2 \|_{p,w} \leq \varepsilon$ (\thmref{sobolev}). Denote the product gate $\psi(x,y)=xy$ the SignReLU neural network in \lemref{times}.

        Then we have
        \begin{align}\label{45_ap2}
            \begin{aligned}
                &\left\|f_1 + f_2 \chi_{\Omega}  -\Big(\tilde f_1 + \psi\big( \tilde f_2 , \tilde \chi_{\Omega}\big) \Big)\right\|_{p,w,\Omega_n^c} \\
                &\leq \|f_1 -\tilde f_1\|_{p,w,\Omega_n^c} + \|f_2 \chi_{\Omega} - f_2 \tilde   \chi_{\Omega} \|_{p,w,\Omega_n^c} + \| f_2 \tilde \chi_{\Omega} -\tilde f_2 \tilde \chi_{\Omega}\|_{p,w,\Omega_n^c} \\
                &\leq \|f_1 -\tilde f_1\|_{p,w,\Omega_n^c}  + \| \chi_{\Omega} - \tilde \chi_{\Omega}(\b x)\|_{p,w,\Omega_n^c} + \|f_2 - \tilde f_2  \|_{p,w,\Omega_n^c} \\
                &\leq 2\varepsilon + C_{p,w} e^{-\sqrt{n}},
            \end{aligned}
        \end{align}
        where in the second step we use the property \eqref{45_rn} and in the third step we use the bound of $f_2$ on $[-1,1]^d$ and \eqref{45_ap1}.

        Denote $\tilde f:= \tilde f_1 + \psi\big( \tilde f_2 , \tilde \chi_{\Omega}\big) = \tilde f_1 + \psi\big( \tilde f_2 , R_n \circ (g-h) \big)$ and choose $n = \left(\ln (\varepsilon^{-1})\right)^2$. Then by \lemref{ra}, $R_n$ can be produced by a SignReLU neural network of $\L = O\left(\big(\ln (\varepsilon^{-1})\big)^2\right)$, $\W = O\left(1\right)$ and $\N = O\left(\big(\ln (\varepsilon^{-1})\big)^2\right)$. Combining \lemref{compos}, \lemref{parallel} and \lemref{concat}, the function $\tilde f$ is able to be implemented by a SignReLU neural network  with $\L = O\left( \varepsilon^{-\f{1}{r}} \right)$, $\W = O\left( \varepsilon^{-\f{d}{r}} \right)$ and $\N = O\left( \varepsilon^{-\f{d}{r}} \right)$ and satisfies $\|f-\tilde f\|_{p,w,\Omega_n^c}\leq 3\varepsilon$.

        Notice that \eqref{45_rn} guarantees the statement for functions from $W^r_\infty$ by using a similar step as \eqref{45_ap1} and \eqref{45_ap2}.

        Combining \thmref{koborov} with the above constructions, we can show the estimation of approximating piecewise Korobov functions similarly.

	\end{proof}
	

	\section{Proof of \thmref{BVfunctions}}

	\begin{proof}[{\bf Proof of \thmref{BVfunctions}}]
            We call a rational function $R(x) = p(x)/q(x)$ a type $(m,n)$ rational function if the degree of polynomials $p(x)$ and $q(x)$ is $m$ and $n$, respectively.

		Set $n\geq r$. Notice that Theorem 7.2 [Chapter 7, \cite{lorentz1996constructive}] shows that given $f \in V^r [0,1]$, there exists a rational function $R$ of type $(n,n)$, such that
        \begin{align}\label{bv_1}
            \begin{aligned}
                \|f-R\|_{L_\infty([0,1])} \leq \f{C_r}{n^{r+1}},
            \end{aligned}
        \end{align}
        for some constant $C_r$ depending only on $r$.

        Let $f \in \mathcal{V}^r_d$ with expression $f(\b x) = \prod_{i=1}^{d}f(x_i)$. Denote $R_i$ the rational function that approximates $f_i$ with error \eqref{bv_1}, $i=1,\dots,d$. Then we can get
		\begin{align}\label{bv_2}
            \begin{aligned}
                & \left|f(\b x) - \prod_{i=1}^{d}R_i(x_i)\right| \\
			&= \left|\sum_{i=1}^{d}\prod_{j=1}^{i-1}f_j(x_j)\left(f_i(x_i) - R_i(x_i) \right) \prod_{k=i+1}^{d}R_k(x_k)\right| \\
			&\leq \sum_{i=1}^{d}\prod_{j=1}^{i-1} \|f_j\|_{L_\infty([0,1])} \left\|f_i - R_i\right\|_{L_\infty([0,1])} \prod_{k=i+1}^{d} \left(\|f_k\|_{L_\infty([0,1])} + \|f_k - R_k\|_{L_\infty([0,1])}  \right) \\
			&\leq \f{C_{r,d}}{n^{r+1}},
            \end{aligned}
		\end{align}
		where $C_{r,d}$ is a constant and
            \begin{align}
                \begin{aligned}
                    \prod_{j=1}^{i-1}f_j(x_j) : = 1, \quad i=1, \\
                     \prod_{k=i+1}^{d}R_k(x_k) : = 1, \quad i=d.
                \end{aligned}
            \end{align}

		By \lemref{mono}, we denote the SignReLU neural network $\phi(\b x) = \prod_{i=1}^d x_i$. Define $\Phi(\b x):= \phi\left(R(x_1),\dots,R_d(x_d) \right)$ and let $n= (C_{r,d}/\varepsilon)^{1/(r+1)}$. Combining \lemref{compos}, \lemref{concat}, \lemref{ra} and \lemref{mono} with \eqref{bv_2}, we can see $\Phi$ is able to be represented by a SignReLU neural network with $\L = O\left( \varepsilon^{-\f{1}{r+1}} \right)$, $\W = O(1)$ and $\N = O( \varepsilon^{-\f{1}{r+1}} )$.
		

	\end{proof}
	
	
	\section{Proof of \thmref{continuous1}}

	\begin{proof}[{\bf Proof of \thmref{continuous1}}]

        Theorem 3.1 \cite{schultz1969multivariate} shows that for any $f \in C\left([0,1]^d\right)$, there exists a multivariate polynomial $P(\b x) = \sum_{\|\b n \|_\infty \leq Nd} c_{\b n} \b x^{\b n}$ such that $\|f - P\|_{L_\infty([0,1])} \leq \f{5}{4}\sum_{i=1}^{d} w^i_f(1/N)$. A similar proof as in the proof of \thmref{sobolev} verifies the statement.

	\end{proof}

	\end{appendix}





%
%
%

\newpage

\end{document}